\newcommand{\Comment}[1]{\hfill \(\triangleright\) #1}
\newcommand{\hlnew}[1]{\sethlcolor{lightgray}\hl{#1}}
\renewcommand{\hlnew}[1]{#1} %
\def\eqref#1{equation~\ref{#1}}
\def\1{\bm{1}}
\DeclareMathAlphabet{\mathsfit}{\encodingdefault}{\sfdefault}{m}{sl}
\SetMathAlphabet{\mathsfit}{bold}{\encodingdefault}{\sfdefault}{bx}{n}
\newcommand{\E}{\mathbb{E}}
\newcommand{\R}{\mathbb{R}}
\DeclareMathOperator*{\argmax}{arg\,max}
\DeclareMathOperator{\cC}{\mathfrak{C}}
\DeclareMathOperator{\C}{\mathfrak{C}}
\newcommand{\condset}{\textbf{Z}}
\newcommand{\dop}{\mathit{do}}
\newcommand{\defeq}{\overset{\mathrm{def}}{=}}
\newcommand{\PA}{\mathrm{Pa}}
\newcommand{\Pa}{\mathrm{Pa}}
\newcommand{\Ch}{\mathrm{Ch}}
\newcommand{\An}{\mathrm{An}}
\newcommand{\De}{\mathrm{De}}
\newcommand{\abs}[1]{\lvert #1 \rvert}
\newcommand{\LSCA}{\mathrm{LSCA}}
\newcommand{\LCA}{\mathrm{LCA}}
\newcommand{\CA}{\mathrm{CA}}
\newcommand{\SCA}{\mathrm{SCA}}
\newcommand{\Linfty}{\mathcal{L}^{\infty}}
\newcommand{\LinftyY}{\mathcal{L}^{\infty}(\PA(Y))}
\newcommand{\mGISS}{\mathrm{mGISS}}
\newcommand{\prev}{\mathsf{pr}}
\newcommand{\suchthat}{\ \text{ s.t. }\ }
\newcommand{\overeq}[1]{\overset{\text{\tiny #1}}{=}}
\newcommand{\ts}[1]{\textsuperscript{#1}}
\renewcommand{\th}{\ts{th}}
\newcommand{\partialqed}[1]{\unskip\nobreak\hfill$\square_{#1}$\par\noindent}
\newcommand{\heavyside}{\mathbf{1}_{>0}}
\newcommand{\anpo}{\preccurlyeq}
\newcommand{\detsup}{\succeq^{\mathrm{det},a}}
\newcommand{\condsup}{{\succeq}^{c}}
\newcommand{\intsup}{\succeq}
\newcommand{\uf}{\bar{f}}
\renewcommand{\defeq}{\overset{\mathrm{def}}{=}}
\definecolor{gold}{rgb}{1.0, 0.84, 0.0}
\newcommandx{\warning}[2][1=]{\todo[linecolor=red,backgroundcolor=red!25,bordercolor=red,#1]{#2}}
\newcommandx{\note}[2][1=]{\todo[linecolor=blue,backgroundcolor=blue!25,bordercolor=blue,#1]{#2}}
\newcommandx{\question}[2][1=]{\todo[linecolor=purple,backgroundcolor=purple!25,bordercolor=purple,#1]{#2}}
\newcommandx{\original}[2][1=]{\todo[linecolor=green,backgroundcolor=green!25,bordercolor=green,#1]{OG!:
    #2}}
\newcommandx{\optional}[2][1=]{\todo[linecolor=yellow,backgroundcolor=yellow!25,bordercolor=yellow,#1]{#2}}
\newtheorem{theorem}{Theorem}
\newtheorem{lemma}[theorem]{Lemma}
\newtheorem{proposition}[theorem]{Proposition}
\newtheorem{corollary}[theorem]{Corollary}
\newtheorem{definition}[theorem]{Definition}
\theoremstyle{remark}
\newtheorem{remark}[theorem]{Remark}
\newtheorem*{notation*}{Notation}
\newtheorem{example}[theorem]{Example}
\title{The Minimal Search Space for Conditional Causal Bandits}
\author[1]{\href{mailto:<francisconfqsimoes@gmail.com>?Subject=Your UAI 2026 paper}{Francisco N. F. Q. Simoes}{}}
\author[2]{Itai Feigenbaum}
\author[3]{Mehdi Dastani}
\author[3]{Thijs van Ommen}
\affil[1]{%
  Department of Computer Science\\
  Delft University of Technology\\
  The Netherlands
}
\affil[2]{%
  Lehman College and The Graduate Center, City University of New York\\
  New York
}
\affil[3]{%
  Department of Information and Computing Sciences\\
  Utrecht University\\
  The Netherlands
}
\begin{document}
\maketitle

\begin{abstract}
	Causal knowledge can be used to support decision-making problems.
	This has been recognized in the causal bandits literature, where a causal (multi-armed) bandit is characterized by a causal graphical model and a target variable.
	The arms are then interventions on the causal model, and rewards are samples of the target variable.
	Causal bandits were originally studied with a focus on hard interventions.
	We focus instead on cases where the arms are \emph{conditional interventions}, which more accurately model many real-world decision-making problems by allowing the value of the intervened variable to be chosen based on the observed values of other variables.
	This paper presents a graphical characterization of the minimal set of nodes guaranteed to contain the optimal conditional intervention, which maximizes the expected reward.
	We then propose an efficient algorithm with a time complexity of $O(|V| + |E|)$ to identify this minimal set of nodes.
	We prove that the graphical characterization and the proposed algorithm are  correct.
	Finally, we empirically demonstrate that our algorithm significantly prunes the search space and substantially accelerates convergence rates when integrated into standard multi-armed bandit algorithms.

\end{abstract}

\section{Introduction}
\label{sec:introduction}

\Citet{lattimore2016bandits} introduced a class of problems termed \emph{causal bandit} problems, where actions are interventions on a causal model, and rewards are samples of a chosen reward variable $Y$ belonging to the causal model.
They focus on hard interventions, where the intervened variables are set to specific values, without considering the values of any other variables.
We will refer to this as a hard-intervention causal bandit problem.
They propose a best-arm identification algorithm that utilizes observations of the non-intervened variables in the causal model to accelerate learning of the best arm as compared to standard multi-armed bandit (MAB) algorithms.
Causal bandits have applications across a broad range of domains, particularly in scenarios requiring the selection of an intervention on a causal system.
These include computational advertising and context recommendation \citep{bottou2013counterfactual,zhao2022mitigating}, biochemical and gene interaction networks \citep{meinshausen2016methods}, epidemiology \citep{joffe2012causal}, and drug discovery \citep{michoel2023causal}.
Most of the work in causal bandits (see \Cref{sec:related-work}) focuses on developing MAB algorithms which incorporate knowledge about the causal graph.
\Citet{lee2018structural}, in contrast, use the fact that the causal graph is known not to develop yet another MAB algorithm, but to reduce  the set of nodes (\emph{i.e.} variables) of the causal graph on which hard interventions should be examined, thereby reducing the search space for hard-intervention causal bandit problems.
This reduction of the search space significantly improves and scales the applicability of existing causal MAB algorithms.

It is recognized in the MAB literature that, for many if not most applications, actions are taken in a context, that is, with available information \citep{lattimore2020bandit,agarwal2014taming,dudik2011efficient,jagerman2020safe,langford2007epoch}.
\emph{E.g.}, content recommendation based on the user's demographic characteristics, such as age, gender, nationality and occupation.
Similarly, in causality, conditional interventions --- where a variable $X$ is set to a value $g(\mathbf{Z_{X}})$ through some function $g$ after observing a set of variables (a context) $\mathbf{Z}_{X}$ --- are more realistic than hard or soft\footnotemark interventions in many real-world scenarios.
\footnotetext{In a soft intervention, the intervened variable keeps its direct causes \citep{peters2017elements}.}
Conditional interventions were first introduced in \citet{pearl1994probabilistic} based on the argument that ``In general, interventions may involve complex policies in which a variable $X$ is made to respond in a specified way to some set $\mathbf{Z}_{X}$ of other variables.''
\Citet{shpitser2012identification} motivate their interest in conditional interventions by providing the concrete example of a doctor selecting treatments based on observed symptoms and medical test results $\mathbf{Z}_{X}$ to improve the patient's health condition.
The doctor performs interventions of the form $\dop(X_{i}=x_{i})$, but ``the specific values of the treatment variables are not known in advance, but instead depend on symptoms and test results performed `on the fly' via policy functions $g_{i}$'' \citep{shpitser2012identification}.
Formally, this is denoted $\dop(X_{i} = g(Z_{X_{i}}))$.
See the paragraph on conditional interventions in \Cref{sec:prelims} for further motivation and details about $\mathbf{Z}_{X}$

\textbf{Novelty and contributions: }
This work, like that of \citet{lee2018structural}, leverages the causal graph to \emph{reduce the search space of the MAB problem, thereby accelerating MAB algorithms applied to it and effectively serving as a pre-processing step for (causal) MAB problems}.
While \citet{lee2018structural} study causal bandits with multi-node hard interventions in the presence of latent confounders, we focus on single-node conditional interventions under the assumption of no latent confounders.
\hlnew{
As discussed in \Cref{sec:prelims}, \emph{restricting to single-node interventions in fact makes the problem more challenging} (relative to the no-confounder multi-node case).
Furthermore, we consider conditional rather than hard interventions.
}
Therefore, our work addresses a fundamentally different and non-comparable problem from that of \citet{lee2018structural}.
Because the single-node intervention problem without latent confounders is already highly non-trivial, we leave latent confounders to future work, making our study a necessary step toward the general case.
The setting we study remains widely applicable --- for instance, to the examples discussed in \Cref{sec:prelims}.
Explicitly, our work is novel because we consider the case where (i) the arms are \emph{conditional interventions} (which generalize both hard and soft interventions); and (ii) the interventions are \emph{single-node interventions}.
This is the first time the minimal search space for a causal bandits problem with non-hard interventions is fully characterized.
Such a characterization has also not been done for single-node interventions (of any kind).
Our contributions are as follows:
(a) we establish a graphical characterization of the minimal set of nodes guaranteed to contain the optimal node on which to perform a conditional intervention (called the \emph{minimal globally interventionally superior set (mGISS)});
and (b) we propose an algorithm which finds this set, given only the causal graph, with a time complexity of $O(\abs{\mathbf{V}} + \abs{E})$, that is, linear in the number of nodes and edges of the causal graph.
As a supplementary result, we also show that, perhaps surprisingly, the exact same minimal set would hold for the optimization problem of selecting an atomic (\emph{i.e.} single-node and hard) intervention in a deterministic causal model.
We provide proofs for the graphical characterization and correctness of the algorithm, as well as experiments that assess the fraction of the search space that can be expected to be pruned using our method, in both randomly generated and real-world graphs, and demonstrate, using well-known real-world models, that our intervention selection can significantly improve a classical MAB algorithm.

	Note that, if the true causal graph is unknown and instead a family of candidate graphs is available, the C4 algorithm can simply be applied to each candidate graph, and the results combined by taking the union of the resulting minimal search spaces.
  All proofs of the results presented in the paper can be found in the appendix. The code repository with the experiments can be found at \url{https://github.com/francisco-simoes/minimal-set-conditional-intervention-bandits}.

\section{Preliminaries}
\label{sec:prelims}
\textbf{Graphs and causal models}\ \
We will make use of Directed Acyclic Graphs (DAGs).
The main concepts of DAGs and notation used in this paper are reviewed in \Cref{sec:app-dags}.
Furthermore, we operate within the Pearlian graphical framework of causality, where causal systems are modeled using Structural Causal Models (SCMs) \citep{peters2017elements,pearl2009causality}.
An \emph{SCM} $\cC$ is a tuple $(\mathbf{V},\mathbf{N},\mathcal{F},p_{\mathbf{N}})$, where $\mathbf{V}=(V_{1}, \ldots, V_{n})$ and $\mathbf{N} = (N_{V_{1}}, \ldots, N_{V_{n}})$ are vectors of random variables.
The exogenous variables are jointly independent, and are distributed according to the \emph{noise distribution} $p_{\mathbf{N}}$, while each endogenous variable $V_{i}$ is a deterministic function $f_{V_{i}}$ of its noise variable $N_{V_{i}}$ and a (possibly empty) set of other endogenous variables $\PA(V_{i})$, called the parents of $V_{i}$.
The $V_{i}$ and $N_{V_{i}}$ are called \emph{endogenous} and \emph{exogenous} (or \emph{noise}) variables, respectively.
$R_{V}$ denotes the range of the random variable $V$.
$\mathcal{F}$ is a set of functions $f_{V_{i}}\colon R_{\PA(V_{i})}\times R_{N_{V_{i}}} \to R_{V_{i}}$, termed \emph{structural assignments}.
The endogenous variables together with $\mathcal{F}$ characterize a DAG called the \emph{causal graph} $G^{\cC}\vcentcolon=(\mathbf{V}, E)$ of $\cC$, whose edge set is $E = \{(P, X) : X \in \mathbf{V},\  P\in \PA(X)\setminus \{X\}\}$.
We denote by $\cC(G)$ the set of SCMs whose causal graph is $G$.
Having an SCM allows us to model interventions: intervening on a variable changes its structural assignment $f_{X}$ to a new one, say $\tilde{f}_{X}$.
This intervention is then denoted $\dop(f_{X} = \tilde{f}_{X})$.
In the simplest type of interventions, called \emph{atomic interventions}, a variable $X$ is set to a chosen value $x$, thus replacing the structural assignment $f_{X}$ of $X$ with a constant function setting it to $x$.
Such an intervention is denoted $\dop(X=x)$, and the SCM resulting from performing this intervention is denoted $\cC^{\dop(X=x)}$.
The joint distribution over the endogenous variables resulting from the atomic intervention $\dop(X=x)$ is denoted $p^{\dop(X=x)}$ and called the \emph{post-intervention distribution} for this intervention.
Each realization $\mathbf{n}\in R_{\mathbf{N}}$ of the noise variables will be called a \emph{unit}.
A \emph{deterministic SCM} is an SCM for which the noise distribution is a point mass distribution with all its mass on some (known) unit $\mathbf{n} \in R_{\mathbf{N}}$.
Finally, nodes are denoted by upper case letters, sets of nodes by boldface letters, and variable values by lower case letters.
We will make use of the fact that the structural assignments of the ancestors of an endogenous variable $X$ (including its own structural assignment) can be composed to express $X$ as a function $\uf_{X}(\mathbf{n})$ of the vector $\mathbf{n}$ of exogenous variables values.
We call this\footnotemark the \emph{unrolled assignment} of $X$.
\footnotetext{The formal definition can be found in \Cref{sec:app-unrolled_assigns}.}

\textbf{Conditional interventions}\ \
Given an SCM $\cC = (\mathbf{V},\mathbf{N},\mathcal{F},p_{\mathbf{N}})$ with causal graph $G$, $X\in \mathbf{V}$, $\mathbf{Z}_{X} \subseteq \mathbf{V}\setminus \{X\}$, and a (any) function $g\colon R_{\mathbf{Z}_{X}} \to R_{X}$ (which we call a \emph{policy for $X$}), the \emph{conditional intervention on $X$ given $\mathbf{Z}_{X}$ for the policy $g$}, denoted $\dop(X = g(\mathbf{Z}_{X}))$, is the intervention where the value of $X$ is determined by that of $\mathbf{Z}_{X}$ through $g$ \citep{pearl2009causality}.
The precise conditioning set $\mathbf{Z}_{X}$ for each $X$ is pre-determined by the specific problem or application, or by the practitioner.
In order to systematically study conditional interventions, we will need to make some assumptions of what nodes can reasonably be in $\mathbf{Z}_{X}$, \emph{i.e.} what variables can we expect to have knowledge of at the time of applying the policy $g$ to intervene on $X$.
As noted in \citet{pearl1994probabilistic,pearl2009causality}, the nodes in $\mathbf{Z}_{X}$ cannot be descendants of $X$ in $G$.
Hence, $\mathbf{Z}_{X} \subseteq \mathbf{V}\setminus \De(X)$.
On the other hand, all (proper) ancestors of $X$ are realized before $X$.
Since we will be dealing with the case with no latent variables, we can assume that all ancestors of $X$ are observed, and can be used by a policy $g$ to set $X$ to a value $g(\mathbf{Z}_{X})$.
Thus, we assume\footnotemark that $\An(X) \setminus \{X\} \subseteq \mathbf{Z}_{X}$.
\footnotetext{
	We are not claiming that all variables in $\An(X)\setminus \{X\}$ \emph{need} to be in $\mathbf{Z}_{X}$ for the best decision to be made, or for our results to hold, but that we \emph{can} always include them in $\mathbf{Z}_{X}$ under the assumptions of our problem.
}
We will then focus on the case where for each $X$, the conditioning set $\mathbf{Z}_X$, chosen by the practitioner, obeys the inclusion relations $\An(X)\setminus \{X\} \subseteq \condset_{X} \subseteq \mathbf{V} \setminus \De(X)$.
Furthermore, we focus on cases where the context that is available for an intervention is also available for later interventions.
As an example, consider the case where a traffic controller needs to intervene on the delay $D_{i,s}$ of a train $i$ at a train station $s$ (for example by forcing it to wait for 5 extra minutes before departing).
Clearly, all delays $D_{i',s'}$ of all train/station pairs affecting $D_{i,s}$ have already been observed, and can therefore be used when selecting $D_{i,s}$.
As another example, similar to the one used in \citet{pearl1994probabilistic} when first introducing conditional interventions, consider the situation where a doctor must decide, over a period of three weeks, whether and when to intervene on the weight, blood pressure or renal blood flow of a patient, in order to improve the patient's kidney function.
The goal is to maximize kidney function (variable Kidney3) at the end of the third week.
Due to side-effects, the patient can only be prescribed medication for one week.
The causal graph for this situation can be found in \Cref{fig:kidney_graph}, \Cref{sec:app-kidney_function}.
Notice that at the time of intervening on a node $X_{i}$, the doctor can use information about all measurements  made until then.
For instance, all the data available when performing an intervention on the renal flow on week 1 (node RenalFlow1) will also be available when intervening on the renal flow on week 2 (node RenalFlow2).
Mathematically, this last assumption can be written $W \in \mathrm{An}(X) \Rightarrow \mathbf{Z}_{W} \subseteq \mathbf{Z}_{X}$.
We then say that $\mathbf{Z}_{X}$ is an \emph{observable conditioning set for $X$}.

\textbf{Conditional causal bandits}\ \
Recall that a MAB problem consists of an agent pulling an arm $a\in \mathcal{A}$ at each round $t$, resulting in a reward sample $Y_{t}$ from an unknown distribution associated to the pulled arm \citep{lattimore2020bandit}.
We denote the mean reward for arm $a$ by $\mu_{a}$ and the mean reward for the best arm by $\mu^{*} = \max_{a\in \mathcal{A}} \mu_{a}$.
The objective is to maximize the total reward obtained over all $T$ rounds.
Equivalently, this can be framed as minimizing the cumulative regret $\mathrm{Reg_{T}} =  T \mu^{*} - \sum_{t=1}^{T} \E[Y_{t}]$.
We now introduce a novel type of (causal) MAB problem.
Consider the setting where the bandit's reward is a (endogenous) variable $Y$ in an SCM $\cC = (\mathbf{V}, \mathbf{N}, \mathcal{F}, p_{\mathbf{N}})$, and the arms are the conditional interventions $\dop(X=g(\condset_{X}))$, where $X \in \mathbf{V} \setminus \{Y\}$.
Furthermore, the agent has knowledge of the causal graph $G$ of $\cC$, but not of the structural assignments $\mathcal{F}$ or the noise distribution $p_{\mathbf{N}}$.
We call this a \emph{single-node conditional-intervention causal bandit}, or simply \emph{conditional causal bandit}.
The reward distribution for arm $\dop(X=g(\condset_{X}))$ is the post-intervention distribution $p_{Y}^{\dop(X=g(\condset_{X}))}$, and is unknown to the agent, since it has no knowledge of $\mathcal{F}$.
Notice that selecting an arm\footnotemark can be subdivided in (i) choosing a node $X$ to intervene on; and (ii) choosing a policy $g$, i.e.~choosing a value to set $X$ to given the observed variables $\condset_X$.
\footnotetext{An arm $a$ comprises both a choice $a_{\text{node}}$ of a node/variable \emph{and} a choice $a_{\text{value}}$ of a value to assign to it. Denote by $a_{\text{node}}(t)$ the node chosen at round $t$. In our experiments (see \Cref{sec:experiments}) we compute estimates of the (pseudo-) regret $\sum_t (\mu^*_{a_{\text{node}}} - \mu_{a_{\text{node}}(t)})$, where $\mu^*_{a_{\text{node}}}$ is the optimal mean reward over the node choice alone, since our focus in this paper is on improving the node choice and this quantity is tractable to compute.}
\emph{We do not impose any restrictions on the function $g$.}
The conditioning sets $\mathbf{Z}_{X}$ are \emph{specified in advance}, as described in the paragraph on conditional interventions above.
In this paper, we find the minimal set of nodes that need to be considered by the agent in step (i).
The value of $X$ chosen in step (ii) can be selected by an MAB algorithm.

\hlnew{
  It is worth noting that conditional causal bandits are distinct from contextual causal bandits.
  Choosing an arm in a conditional causal bandit can also be seen as (i) selecting a variable; (ii) observe the variable's context; (iii) selecting a value to set the variable to.
  In contrast, in a contextual causal bandit, arm selection can be described as (i) observing context; (ii) selecting a variable and the value to set it to.
  See also \Cref{sec:related-work}.
}

As stressed in \Cref{sec:introduction}, the novelty of our problem lies in the fact that we deal with \emph{conditional interventions} that are \emph{single-node}.
Both of these characteristics of our problem demand a different analysis from that of \citet{lee2018structural} (see \Cref{sec:related-work}).
Perhaps unexpectedly, single-node interventions make a search for a minimal search space more involved than in the multi-node case (without latent confounders).
Indeed, if one allows for interventions on arbitrary sets, one simply needs to intervene on all the parents $\Pa(Y)$ of $Y$ \citep{lee2018structural}.
For example, in \Cref{subfig:not_lca_heur} one would simply need to intervene on $\{A_1, A_2\}$.
Since in our case the agent cannot do this whenever $\abs{\Pa(Y)}>1$, the minimal search space will, as we will see, be complex even without unobserved confounding.
That said, the assumption that there is no unobserved confounding is a limitation of this paper, and a natural next step for future work (see \Cref{sec:conclusion}).

\section{Conditional-Intervention Superiority} %
\label{sec:cond-sup}

In this section, we will define a preorder $\condsup_{Y}$ of ``conditional-intervention superiority'' on nodes of an SCM.
If $X \condsup_{Y} W$, then $W$ can never be a better node than $X$ to intervene on with a conditional intervention\footnotemark.
\footnotetext{The relation between nodes introduced by \citet{lee2018structural} is similar, but pertains to multi-node hard interventions.}
We will then show that, perhaps surprisingly, this relation is equivalent to another superiority relation, defined in terms of atomic interventions in a deterministic SCM.

\begin{definition}[Conditional-Intervention Superiority]
	\label{def:cond-int-sup}
	Let $X, W, Y$ be nodes of a DAG $G$.
	$X$ is \emph{conditional-intervention superior to $W$ relative to $Y$} in $G$, denoted $X \condsup_{Y} W$, if for all SCM with causal graph $G$ there is a policy $g$ for $X$ such that for all observable conditioning sets $\condset_{X}$ and $\condset_{W}$ for $X$ and $W$ and all policies $h$ for $W$,
	\begin{equation}
		\label{eq:cond-int-sup}
		\E_{\mathbf{n}} \bar{f}_{Y}^{\dop(X=g(\condset_{X}))}(\mathbf{n}) \ge \E_{\mathbf{n}} \bar{f}_{Y}^{\dop(W=h(\condset_{W}))}(\mathbf{n}).
	\end{equation}
\end{definition}

A similar relation can be defined for atomic interventions in deterministic SCMs, where the vector $\mathbf{N}$ of exogenous variables is fixed to a \emph{known} value $\mathbf{n}$ (see \Cref{sec:prelims}).

\begin{definition}[Deterministic Atomic-Intervention Superiority] %
	\label{def:det-sup}
	Let $X, W, Y$ be nodes of a DAG $G$.
	$X$ is \emph{deterministically atomic-intervention superior} to $W$ relative to $Y$, denoted $X \detsup_{Y} W$, if for every SCM $\C$ with causal graph $G$ and every unit $\mathbf{n}$ there is $x\in R_{X}$ such that no atomic intervention on $W$ results in a larger $Y$ than the value of $Y$ resulting from setting $X=x$.
	That is, for all $(\cC, \mathbf{n}) \in \cC(G) \times R_{\mathbf{N}}$, there is $x\in R_X$ such that for all $w\in R_W$,
	\begin{equation}
		\label{eq:detsup}
		\bar{f}_{Y}^{\dop(X=x)}(\mathbf{n}) \geq \bar{f}_{Y}^{\dop(W=w)}(\mathbf{n}).
	\end{equation}
\end{definition}

We extend \Cref{def:cond-int-sup,def:det-sup} for sets of nodes in the obvious way: $\mathbf{X}$ is superior to $\mathbf{W}$ if
every node in $\mathbf{W}$ is inferior to some node in \textbf{X}.

\begin{definition}
	\label{def:int-sup-nodesets}
	Let now $\mathbf{X}, \mathbf{W}$ be sets of nodes of $G$.
	$\mathbf{X}$ is \emph{conditional-intervention superior} (respectively \emph{deterministic atomic intervention superior}) to $\mathbf{W}$, also denoted $\mathbf{X} \condsup_{Y} \mathbf{W}$ (respectively $\mathbf{X} \detsup_{Y} \mathbf{W}$), if $\forall W \in \mathbf{W}, \exists X\in \mathbf{X}$ such that $X \condsup_{Y} W$ (respectively $X \detsup_{Y} W$).
\end{definition}

The two relations $\condsup_{Y}$, $\detsup_{Y}$ actually coincide (both for nodes and sets of nodes).

\begin{restatable}[Conditional vs Atomic superiority]{proposition}{condvsatomic}
	\label{prop:cond-vs-atomic}
	Let $X$, $W$, $Y$ be nodes in a DAG $G$.
	Then $X$ is conditional-intervention superior to $W$ relative to $Y$ in $G$ if and only if $X$ is deterministic atomic-intervention superior to $W$ relative to $Y$ in $G$.
	That is, $X \condsup_{Y} W \Leftrightarrow X \detsup_{Y} W$.
\end{restatable}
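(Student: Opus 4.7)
The plan is to prove the two implications separately. The forward direction is essentially mechanical; the reverse direction is where the work lies.

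For the forward direction ($\condsup_Y \Rightarrow \detsup_Y$), I would exploit that deterministic SCMs lie in $\cC(G)$. Given any $\cC \in \cC(G)$ and unit $\mathbf{n} \in R_{\mathbf{N}}$, I would form the deterministic SCM $\cC_{\mathbf{n}}\in \cC(G)$ obtained by replacing the noise distribution with the Dirac mass at $\mathbf{n}$, keeping the structural assignments of $\cC$. In $\cC_{\mathbf{n}}$ the expectation in the definition of $\condsup_Y$ collapses to a point evaluation at $\mathbf{n}$. Applying $X \condsup_Y W$ to $\cC_{\mathbf{n}}$ produces a policy $g$ such that for every policy $h$, $\uf_Y^{\dop(X=g(\condset_X(\mathbf{n})))}(\mathbf{n}) \geq \uf_Y^{\dop(W=h(\condset_W(\mathbf{n})))}(\mathbf{n})$. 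Specialising $h$ to the constant policy with value $w$, for each $w \in R_W$, yields the atomic witness $x \defeq g(\condset_X(\mathbf{n})) \in R_X$ required by $\detsup_Y$.

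For the reverse direction ($\detsup_Y \Rightarrow \condsup_Y$), I would fix an arbitrary $\cC\in\cC(G)$. For each $\mathbf{n}$, $\detsup_Y$ supplies an atomic witness $x^{*}(\mathbf{n})\in R_X$ with $\uf_Y^{\dop(X=x^{*}(\mathbf{n}))}(\mathbf{n})\ge \max_{w}\uf_Y^{\dop(W=w)}(\mathbf{n})\ge \uf_Y^{\dop(W=h(\condset_W(\mathbf{n})))}(\mathbf{n})$ for every policy $h$. I would then aim to realise this pointwise selection as a $\condset_X$-measurable policy $g\colon R_{\condset_X}\to R_X$. The natural candidate is $g^{*}(\mathbf{z})\in\argmax_{x\in R_X}\E[\uf_Y^{\dop(X=x)}(\mathbf{n})\mid \condset_X=\mathbf{z}]$; the plan is to combine the pointwise $\detsup_Y$ inequality with the tower property and the inclusion $\An(X)\setminus\{X\}\subseteq\condset_X$ to show that $g^{*}$ beats every $h$ in expectation.

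The hard part will be closing a Jensen-type gap in this last step: combining $\detsup_Y$ pointwise with expectation over $\mathbf{n}$ only yields $\E_{\mathbf{n}}[\max_{x}\uf_Y^{\dop(X=x)}(\mathbf{n})]\ge \max_{h}\E_{\mathbf{n}}[\uf_Y^{\dop(W=h(\condset_W))}(\mathbf{n})]$, whereas $\condsup_Y$ demands the strictly stronger $\max_{g}\E_{\mathbf{n}}[\uf_Y^{\dop(X=g(\condset_X))}(\mathbf{n})]\ge \max_{h}\E_{\mathbf{n}}[\uf_Y^{\dop(W=h(\condset_W))}(\mathbf{n})]$. I expect closing this gap to rely on the graphical structure of the unrolled assignment $\uf_Y^{\dop(X=x)}$: only noises of nodes in $\An_{G_{\bar{X}}}(Y)\setminus\{X\}$ contribute to it, and the interplay between this set and $\condset_X$ should force the pointwise witness $x^{*}(\mathbf{n})$ to be, without loss, $\condset_X$-measurable whenever $\detsup_Y$ holds uniformly over every SCM with graph $G$.
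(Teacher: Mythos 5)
Your forward direction matches the paper's: specialize to the Dirac noise distribution at $\mathbf{m}$, collapse the expectation, and take constant policies $h$ to recover atomic interventions on $W$; this part is fine.

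The reverse direction has a genuine gap, and the route you sketch for closing it will not work. Your candidate mechanism is that the pointwise witness $x^{*}(\mathbf{n})\in\argmax_{x}\uf_{Y}^{\dop(X=x)}(\mathbf{n})$ should turn out to be $\condset_{X}$-measurable. It is not, in general: $x^{*}(\mathbf{n})$ can depend on the noises of $X$'s descendants (including $N_{Y}$ itself), and these are categorically excluded from $\condset_{X}$ since $\condset_{X}\subseteq\mathbf{V}\setminus\De(X)$. For instance, with $W\to X\to Y$ and $Y=X\oplus N_{Y}$, the pointwise optimum is $x^{*}(\mathbf{n})=1\oplus n_{Y}$, which no policy on $\condset_{X}$ can realize; the implication $\detsup_{Y}\Rightarrow\condsup_{Y}$ nevertheless holds there, but only because the same handicap afflicts every policy on $W$. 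This shows the correct argument cannot go through the pointwise optimum at all. The idea you are missing is a structural consequence of the hypothesis: because $X\detsup_{Y}W$ must hold for \emph{every} SCM with graph $G$, one can show (by constructing an explicit counterexample SCM along any offending path, as in Lemma~\ref{lemma:superiority-implies-blocked-path}) that every directed path from $W$ to $Y$ passes through $X$. Granting that, one does not compare $g$ to $\max_{x}$; instead, for an arbitrary policy $h$ on $W$ one builds the \emph{simulating} policy $g^{*}$ that sets $X$ to the value $\uf_{X}[W](h(\uf_{\condset_{W}}(\mathbf{n})),\mathbf{n})$ it would have taken under $\dop(W=h(\condset_{W}))$ --- feasible because $\An(X)\setminus\{X\}\subseteq\condset_{X}$ --- and then chains blocked unrolled assignments ($\uf_{Y}[X]\circ\uf_{X}[W]=\uf_{Y}[W]$, valid precisely because all $W$-to-$Y$ paths cross $X$) to get $\E_{\mathbf{n}}\uf_{Y}^{\dop(X=g^{*}(\condset_{X}))}(\mathbf{n})=\E_{\mathbf{n}}\uf_{Y}^{\dop(W=h(\condset_{W}))}(\mathbf{n})$ exactly, with no Jensen gap to close. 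Without the path-blocking lemma and the simulation construction, your proof of this direction does not go through.
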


\hlnew{
  The difficult direction of this equivalence is $X \detsup_{Y} W \Rightarrow X \condsup_{Y} W$.
  One way to see this is to note that, arguably unsurprisingly, $X \succeq^{\mathrm{det},a}_{Y} W$ forces\footnotemark $X$ to lie on every directed path $W \dashrightarrow Y$, so $X$ fully mediates whatever influence $W$ has on $Y$ (see \Cref{lemma:superiority-implies-blocked-path}). 
  Once this mediation holds, anything a policy $h$ on $W$ achieves can be reproduced by a policy $g$ on $X$: simply set $X$ to the value that $h$ would have induced at $X$. 
  Hence $X$ can always do at least as well as $W$. 
  The converse $X \condsup_{Y} W \Rightarrow X \detsup_{Y} W$ is easier to check. 
  $X \condsup_{Y} W$ implies that there is a policy $g^*$ such that $\dop(X=g^*(\condset_X))$ is at least as good as $\dop(W=h(\condset_W))$ for all $h$, thus in particular for constant policies $h(\condset_W) = w$.
  For a (deterministic) SCM with fixed setting $m$, one can just set $x^*$ to be $g^*(\uf_{\condset_X}(m))$, thus guaranteeing that $\dop(X=x^*)$ is not inferior to $\dop(W=w)$.
  The full, formal proof of \Cref{prop:cond-vs-atomic} can be found in \Cref{sec:app-proofs-mgiss}.
}
\footnotetext{This holds for $W\in \An(Y)$. The case $W\not\in \An(Y)$ is trivial anyway.}
\hlnew{We emphasize that \Cref{prop:cond-vs-atomic} equates conditional-intervention superiority with \emph{deterministic} atomic-intervention superiority only --- not\footnotemark general atomic superiority.}
\footnotetext{See \Cref{example:condsup-not-eq-to-detsup} for an SCM with nodes $A$ and $Z$ such that $A \condsup_Y Z$ holds and yet $A$ is not atomic-intervention superior to $Z$.}

Since these two relations are equivalent, we henceforth refer simply to interventional superiority without further specification, and use the symbol $\intsup_{Y}$ when distinguishing them is not necessary.
We will use \Cref{prop:cond-vs-atomic} to simplify our problem.
Since deterministic atomic interventions are easier to reason about, we use them in formulating proposals for the minimal search space and in our proofs.

\section{Graphical Characterization of the mGISS}
\label{sec:graph-char-minim}

\textbf{Goal}\ \
Our aim is to develop a method to identify, based on a causal graph $G$, the smallest set of nodes that are ``worth testing'' when attempting to maximize $Y$ by performing one single-node intervention.
Specifically, regardless of the structural causal model $\C$ associated with $G$, we want to ensure that the optimal intervention can be discovered within this selected set of nodes.
We define this set as follows:
\begin{definition}[GISS and mGISS]
  Let $G$ be a DAG with set of nodes $\mathbf{V}$.
  A \emph{globally interventionally superior set (GISS)} of $G$ relative to $Y$, is a subset $\mathbf{U}$ of $\mathbf{V}\setminus \{Y\}$ satisfying $\mathbf{U} \intsup_{Y} (\mathbf{V} \setminus\{Y\}) \setminus \mathbf{U}$.
  A \emph{minimal globally interventionally superior set (mGISS)} is a GISS which is minimal with respect to set inclusion.
\end{definition}
This set is unique, so that we can talk of \emph{the} minimal globally interventionally superior set.
\hlnew{
  As a sketch of the argument: if $A$ and $B$ were distinct minimal GISS, pick $X \in B \setminus A$. 
  Since $A$ is globally superior, some $Z \in A$ dominates $X$. 
  Now, either $Z \in B$ already, or $B$ dominates $Z$ via some $X' \in B$, and transitivity gives $X' \succeq_Y X$.
  This produces two distinct comparable nodes inside $B$. 
  But a minimal GISS contains no two comparable nodes (dropping the dominated one would still leave a GISS), so no two distinct minimal sets can exist.
}

\begin{restatable}[Uniqueness of the mGISS]{proposition}{uniquenessmgiss}
  Let $G$ be a DAG and $Y$ a node of $G$ with at least one parent.
  The minimal globally interventionally superior set of $G$ relative to $Y$ is unique.
  We denote it by $\mGISS_{Y}(G)$.
\end{restatable}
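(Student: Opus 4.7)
The plan is to argue by contradiction, leaning on the transitivity of $\intsup_Y$ established in the preceding remark. Suppose $\mathbf{U}_1$ and $\mathbf{U}_2$ are both mGISS of $G$ relative to $Y$, and assume for contradiction that $\mathbf{U}_1 \neq \mathbf{U}_2$. By symmetry I may pick some $X \in \mathbf{U}_1 \setminus \mathbf{U}_2$ and aim to contradict the set-inclusion minimality of $\mathbf{U}_1$.

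The central tool I would first isolate is a simple reduction lemma: \emph{if a GISS $\mathbf{U}$ contains two distinct nodes $X, Z$ with $Z \intsup_Y X$, then $\mathbf{U} \setminus \{X\}$ is still a GISS}. Indeed, $X$ itself (now outside) is dominated by $Z \in \mathbf{U} \setminus \{X\}$, and any other node $W \notin \mathbf{U}$ previously dominated by $X$ is, via $Z \intsup_Y X \intsup_Y W$ and transitivity, also dominated by $Z$. Hence a minimal GISS cannot contain such a redundant pair.

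Applying this to the setup: the GISS property of $\mathbf{U}_2$ yields some $X' \in \mathbf{U}_2$ with $X' \intsup_Y X$, and $X' \neq X$ since $X \notin \mathbf{U}_2$. If $X' \in \mathbf{U}_1$, the pair $\{X, X'\} \subseteq \mathbf{U}_1$ immediately contradicts minimality of $\mathbf{U}_1$ via the lemma. Otherwise, applying the GISS property of $\mathbf{U}_1$ to $X' \notin \mathbf{U}_1$ produces $X'' \in \mathbf{U}_1$ with $X'' \intsup_Y X'$, and transitively $X'' \intsup_Y X$; if $X'' \neq X$, the lemma again contradicts minimality of $\mathbf{U}_1$.

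The main obstacle is the residual subcase $X'' = X$, which forces $X \sim_Y X'$ (mutual $\intsup_Y$-domination) with $X \in \mathbf{U}_1 \setminus \mathbf{U}_2$ and $X' \in \mathbf{U}_2 \setminus \mathbf{U}_1$: the reduction lemma alone is powerless here, since removing $X$ from $\mathbf{U}_1$ leaves no internal $Z$ to cover it. To dispose of this case I would invoke the explicit graphical characterization of $\mGISS_Y(G)$ derived in the next section: once that characterization is shown to produce a canonical subset of the nodes of $G$ which is itself a GISS and is contained in every GISS, every mGISS must coincide with it, ruling out such swaps of $\intsup_Y$-equivalent nodes and yielding $\mathbf{U}_1 = \mathbf{U}_2$.
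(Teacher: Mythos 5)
Your argument is, in its main body, the same as the paper's: your ``reduction lemma'' is exactly \Cref{lemma:mgiss-elements-not-comparable} (no two distinct elements of a mGISS are comparable, proved by showing the dominated one can be deleted), and your case split on whether the dominating node $X'$ lies in $\mathbf{U}_1$ mirrors the paper's split on whether $Z\in\mathbf{A}\cap\mathbf{B}$ or $Z\in\mathbf{A}\setminus\mathbf{B}$. Credit where due: you have correctly isolated the residual subcase $X''=X$, i.e.\ a pair $X\in\mathbf{U}_1\setminus\mathbf{U}_2$, $X'\in\mathbf{U}_2\setminus\mathbf{U}_1$ with $X\intsup_Y X'$ and $X'\intsup_Y X$, which the paper's own proof silently skips (it applies \Cref{lemma:mgiss-elements-not-comparable} to $X'$ and $X$ without checking $X'\neq X$).

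Your proposed resolution of that subcase is the weak point. First, it is a forward reference: \Cref{thm:superiority_of_lsca} is proved after this proposition, and what it actually establishes is that $\Linfty(\Pa(Y))$ is a GISS that is \emph{minimal with respect to inclusion} --- not that it is \emph{contained in every GISS}, which is the property you invoke. Those are not the same thing until uniqueness is known, so as written the appeal is circular. (It can be salvaged: the SCMs constructed in part (ii) of that proof in fact show that \emph{no} node of $G$ dominates a given $B\in\Linfty(\Pa(Y))$, not merely no node of $\Linfty(\Pa(Y))\setminus\{B\}$, which does give containment in every GISS --- but that strengthening has to be stated and argued.) A self-contained fix stays inside \Cref{sec:app-superiorities}: by \Cref{lemma:superiority-implies-blocked-path}, mutual superiority of the distinct nodes $X,X'$ forces every $X\dashrightarrow Y$ path through $X'$ and vice versa, so acyclicity implies $X,X'\notin\An(Y)$; then every node dominates $X$, so by your reduction lemma $\mathbf{U}_1=\{X\}$; but a non-ancestor of $Y$ cannot dominate a parent $A$ of $Y$ (take an SCM with $Y=A+N_Y$ and $A$ observationally $0$), contradicting that $\{X\}$ is a GISS. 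Note this, like \Cref{thm:superiority_of_lsca}, needs $\Pa(Y)\neq\emptyset$; if $Y$ has no parents and $G$ has at least three nodes, every singleton is a mGISS and uniqueness genuinely fails, so the residual case is not vacuous and the hypothesis must be added somewhere.
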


\hlnew{
  We now develop the intuition behind the machinery we introduce to arrive at a graphical characterization of the mGISS. Recall that, by \Cref{prop:cond-vs-atomic}, we can rely on intuition about deterministic causal systems and atomic interventions throughout.
}

\textbf{Intuition}\ \
Since the value of \( Y \) is completely determined by the values of its parents \( A_{1}, \ldots, A_{m} \), along with the fixed value \( n_{Y} \) of a noise variable that cannot be intervened upon (see \Cref{def:det-sup}), we aim to induce the parents to acquire the combination of values \( (a^{*}_{1}, \ldots, a^{*}_{m}) \) that maximizes \( Y \) when \( N_{Y} = n_{Y} \).
If this is not possible to achieve using a single intervention, we aim to obtain the best combination possible.
Clearly, the parents of $Y$ themselves need to be tested by bandit algorithms: there may be one parent on which $Y$ is highly dependent, in such a way that there is a value of that parent which will maximize $Y$.
In the particular case where $Y$ has a single parent $A$, that node is the only node worth intervening on, since all other nodes can only influence $Y$ through $A$. Indeed, if $a^{*} \in R_{A}$ is the value of $A$ which maximizes $Y$, it is not necessary to try to find an intervention on ancestors of $A$ which results in $A=a^{*}$: just set $A=a^{*}$ directly (\Cref{subfig:one-parent-heur}).
If $Y$ has two or more parents, it is possible that a single intervention on one of the $A_{i}$ does not yield the best possible outcome.
Instead, a better configuration (potentially even the ideal case $(a^{*}_{1}, \ldots, a^{*}_{m})$) may be achieved by intervening \emph{on a common ancestor} of some or all of the $A_{i}$ (\Cref{subfig:twopar_heur}).
Notice that $X_{0}$ is also a common ancestor of $A_{1}, A_{2}$, but one is never better off intervening on $X_{0}$ than on $X_{1}$.
This is because all paths from $X_0$ to $A_1$ or $A_2$ must pass through $X_1$, so any influence that $X_0$ can exert on $Y$ is fully mediated by $X_1$.
This seems to indicate that testing interventions on, for instance, all lowest common ancestors (LCAs, see \Cref{sec:app-dags}) of the parents of $Y$, and only them, is necessary.
While this works in \Cref{subfig:twopar_heur}, it fails for a graph such as \Cref{subfig:vanilla-lca-issue}, where $X$ needs to be tested and yet it is not in $\LCA(A_{1}, A_{2}) = \{A_{1}\}$.
This suggests that we need to define a stricter notion of common ancestor to make progress in characterizing $\mGISS_{Y}(G)$.

\newcommand{\graphscale}{0.7}

\begin{figure}
  \centering
  \begin{subfigure}[c]{0.22\textwidth}
    \centering
    \scalebox{\graphscale}{
    \begin{tikzpicture}[mynode/.style={circle,draw=black,fill=white,inner sep=0pt,minimum size=0.7cm},>=latex]
      \node[mynode] (x0) at (1,0.5) {$X_{0}$};
      \node[mynode][fill=lightgray] (x1) at (0,0) {$X_{1}$};
      \node[mynode] (x2) at (1,-0.5) {$X_{2}$};
      \node[mynode] (x3) at (0,-1) {$X_{3}$};
      \node[mynode][fill=lightgray] (a1) at (0,-2) {$A_{1}$};
      \node[mynode][fill=lightgray] (a2) at (1,-1.5) {$A_{2}$};
      \node[mynode] (y) at (1,-2.5) {$Y$};

      \draw[->] (x0) -- (x1);
      \draw[->] (x1) -- (x2);
      \draw[->] (x1) -- (x3);
      \draw[->] (x1) -- (x3);
      \draw[->] (x3) -- (a1);
      \draw[->] (x2) -- (a2);
      \draw[->] (a1) -- (y);
      \draw[->] (a2) -- (y);
    \end{tikzpicture}
  } %
    \caption{Two parents with a lowest common ancestor.}
    \label{subfig:twopar_heur}
  \end{subfigure}
  \hfill
  \begin{subfigure}[c]{0.22\textwidth}
    \centering
    \scalebox{\graphscale}{
      \begin{tikzpicture}[mynode/.style={circle,draw=black,fill=white,inner sep=0pt,minimum size=0.7cm},>=latex]
      \node[mynode][fill=lightgray] (z) at (1,0.5) {$Z$};
      \node[mynode][fill=lightgray] (x1) at (0,0) {$X_{1}$};
      \node[mynode] (x2) at (1,-0.5) {$X_{2}$};
      \node[mynode][fill=white] (x3) at (0,-1) {$X_{3}$};
      \node[mynode][fill=lightgray] (a1) at (0,-2) {$A_{1}$};
      \node[mynode][fill=lightgray] (a2) at (1,-1.5) {$A_{2}$};
      \node[mynode] (y) at (1,-2.5) {$Y$};

      \draw[->] (z) -- (x1);
      \draw[->] (z.south east) to [out=315,in=45,looseness=1.0] (a2.north east);
      \draw[->] (x1) -- (x2);
      \draw[->] (x1) -- (x3);
      \draw[->] (x1) -- (x3);
      \draw[->] (x3) -- (a1);
      \draw[->] (x2) -- (a2);
      \draw[->] (a1) -- (y);
      \draw[->] (a1) -- (a2);
      \draw[->] (a2) -- (y);
    \end{tikzpicture}
  }
    \caption{The same heuristics that justify testing $X_{1}$ also support testing $Z$.}
    \label{subfig:not_lca_heur}
  \end{subfigure}
  \hfill
  \begin{subfigure}[c]{0.22\textwidth}
    \centering
    \scalebox{\graphscale}{
    \begin{tikzpicture}[mynode/.style={circle,draw=black,fill=white,inner sep=0pt,minimum size=0.7cm},>=latex]
      \node[mynode] (x1) at (-0.5,-0.5) {$X_{1}$};
      \node[mynode] (x2) at (0.5,-0.5) {$X_{2}$};
      \node[mynode][fill=lightgray] (a) at (0,-1.5) {$A$};
      \node[mynode] (y) at (0,-2.5) {$Y$};

      \draw[->] (x1) -- (a);
      \draw[->] (x2) -- (a);
      \draw[->] (a) -- (y);
    \end{tikzpicture}
    } %
    \caption{Single parent. Setting $A$ to $a^{*}$ is the best option.}
    \label{subfig:one-parent-heur}
  \end{subfigure}
  \hfill
  \begin{subfigure}[c]{0.22\textwidth}
    \centering
    \scalebox{\graphscale}{
    \begin{tikzpicture}[mynode/.style={circle,draw=black,fill=white,inner sep=0pt,minimum size=0.7cm},>=latex]
      \node[mynode][fill=lightgray] (x) at (0,-0.5) {$X$};
      \node[mynode][fill=lightgray] (a1) at (-1.0,-1.5) {$A_{1}$};
      \node[mynode][fill=lightgray] (a2) at (1.0,-1.5) {$A_{2}$};
      \node[mynode] (y) at (0,-2.5) {$Y$};

      \draw[->] (x) -- (a1);
      \draw[->] (x) -- (a2);
      \draw[->] (a1) -- (a2);
      \draw[->] (a1) -- (y);
      \draw[->] (a2) -- (y);
    \end{tikzpicture}
    } %
    \caption{$X$ may need to be intervened upon. However, $\LCA(A_{1}, A_{2})=\{A_{1}\} \not\ni X$.}
    \label{subfig:vanilla-lca-issue}
  \end{subfigure}

  \caption{Examples illustrating heuristics behind the graphical characterization of the mGISS. The gray nodes are the elements of the mGISS relative to $Y$.}
\end{figure}

\begin{definition}[Lowest Strict Common Ancestors of a Pair of Nodes]
  \label{def:lsca_pair}
  The node $V \in \mathbf{V}$ is a \emph{strict common ancestor} of $X,Y \in \mathbf{V}$ if $V$ is a common ancestor of $X,Y$ from which both $X$ and $Y$ can be reached from $V$ with paths $V\dashrightarrow X$ and $V\dashrightarrow Y$ not containing $Y$ and $X$, respectively.
  The set of strict common ancestors of $X,Y$ is denoted $\SCA(X,Y)$.
  Furthermore, $V$ is a \emph{lowest strict common ancestor} of $X,Y\in \mathbf{V}$ if $V$ is a minimal element of $\SCA(X,Y)$ with respect to the ancestor partial order $\anpo$.
  The set of lowest strict common ancestors of $X,Y$ is denoted $\LSCA(X,Y)$.
\end{definition}

\begin{definition}[Lowest Strict Common Ancestors of a Set]
  \label{def:lsca_set}
  Let $\mathbf{U} \subseteq \mathbf{V}$ and $V\in \mathbf{V}\setminus \mathbf{U}$.
  The node $V$ is a \emph{lowest strict common ancestor} of $\mathbf{U}$ if it is  a lowest strict common ancestor of some pair of nodes $U,U'$ in $\mathbf{U}$.
  The set of lowest strict common ancestors is denoted $\LSCA(\mathbf{U})$.
  That is,
  \begin{equation}
    \begin{split}
      \LSCA(\mathbf{U}) \coloneqq \{V\in \mathbf{V} \setminus \mathbf{U} \colon \exists U,U'\in \mathbf{U} \\
        \suchthat V\in \LSCA(U,U')\}.
    \end{split}
  \end{equation}
\end{definition}

Our heuristic argument so far suggests that we need to test the parents of $Y$ and their LSCAs. However, there are additional nodes that must be considered: the reasoning for testing the lowest strict common ancestors of the parents can be repeated.
For instance, in \Cref{subfig:not_lca_heur}, the best possible configuration of the $A_{i}$ may be achieved by intervening on $Z$. Such an intervention could result in a combination of values of $X_{1}$ and $A_{2}$ that leads to the best possible combinations of $A_{1}$ and $A_{2}$.
This suggests that the $\mGISS_{Y}(G)$ should be determined by recursively finding all the LSCAs of the parents of $Y$, then the LSCAs of that set, and so on, ultimately resulting in what we call the ``LSCA closure of the parents of $Y$'', denoted $\LinftyY$.
In the remainder of this section, we formally define $\LinftyY$, find a simple graphical characterization for it, and prove that it indeed equals $\mGISS_{Y}(G)$.

\begin{definition}[LSCA closure]
  \label{def:lsca_closure}
  For every $i\in \mathbb{N}$ we define the {i\th-order LSCA set} $\mathcal{L}^{i}(\mathbf{U})$ of $\mathbf{U}\subseteq \mathbf{V}$ as follows:
  \begin{equation}
    \label{eq:lsca_sets}
      \mathcal{L}^{0}(\mathbf{U}) \coloneqq \mathbf{U}, \ \text{ and }\
      \mathcal{L}^{i}(\mathbf{U}) \coloneqq \LSCA(\mathcal{L}^{i-1}(\mathbf{U})) \cup \mathcal{L}^{i-1}(\mathbf{U}).
  \end{equation}
  The \emph{LSCA closure} $\Linfty(\mathbf{U})$ of $\mathbf{U}$ is given by\footnotemark
  \begin{equation}
    \label{eq:lsca_closure}
      \mathcal{L}^{\infty}(\mathbf{U}) \coloneqq \mathcal{L}^{k^{*}}(\mathbf{U}),
  \end{equation}
  where  $k^{*}=\min \{i \in \mathbb{N} \colon \mathcal{L}^{i}(\mathbf{U}) = \mathcal{L}^{i+1}(\mathbf{U})\}$.
\end{definition}
\footnotetext{Notice that the existence of the $k^{*}$ is guaranteed, since by construction $\mathcal{L}^{i}(\mathbf{U})\subseteq \mathcal{L}^{i+1}(\mathbf{U}) \subseteq \mathbf{V}$ for all $i\in\mathbb{N}$ and \textbf{V} is finite.}

\hlnew{
  Note that, in cases where $\Pa(Y)$ has no strict common ancestor, the closure stabilizes immediately and $\mGISS = \Pa(Y)$. That is, only the parents of $Y$ themselves are worth testing.  Note that $\Pa(Y) \subseteq L^\infty(\Pa(Y))$ always, so the mGISS is never empty as long as $Y$ has some parents.
}

\begin{example}
  Consider the graph in \Cref{subfig:not_lca_heur} and set $\mathbf{U} = \{A_{1}, A_{2}\}$.
  Then, $\mathcal{L}^{0}(\mathbf{U}) = \{A_{1}, A_{2}\}, \mathcal{L}^{1}(\mathbf{U}) = \{X_{1}, A_{1}, A_{2}\}, \mathcal{L}^{2}(\mathbf{U}) =  \mathcal{L}^{3}(\mathbf{U}) = \{Z, X_{1}, A_{1}, A_{2}\} = \Linfty(\mathbf{U})$.
\end{example}

We will introduce the notion of ``$\Lambda$-structures'' (\Cref{subfig:lambda-struct}), which provides an alternative, elegant, simple graphical characterization of $\LinftyY$.
It will also be instrumental in the proofs of the main results of this paper.

\begin{definition}[$\Lambda$-structure]
  \label{def:lambda_struct}
  Let $V, A, B\in \mathbf{V}$.
  Furthermore, let $\pi_{A}: V\dashrightarrow A$, $\pi_{B}: V\dashrightarrow B$ be paths.
  The tuple $(V,\pi_{A},\pi_{B})$ is a \emph{$\Lambda$-structure} over $(A,B)$ if $\pi_{A}$ and $\pi_{B}$ only intersect at $V$.
  Now, let $\mathbf{U}, \mathbf{W}\subseteq \mathbf{V}$.
  The node $V$ is said to \emph{form a $\Lambda$-structure} over $(\mathbf{U}, \mathbf{W})$ if there are nodes $U\in \mathbf{U}$ and $W\in \mathbf{W}$, and
  paths $\pi_{U}\colon V\dasharrow U$, $\pi_{W}\colon V\dasharrow W$ such that $(V,\pi_{U},\pi_{W})$ is a $\Lambda$-structure over $(U,W)$.
  Denote by $\Lambda(\mathbf{U},\mathbf{W})$ the set of all nodes forming a $\Lambda$-structure over $(\mathbf{U},\mathbf{W})$.
\end{definition}

Notice that, if $V\in \mathbf{U} \cap \mathbf{W}$, then trivially $V\in \Lambda(\mathbf{U},\mathbf{W})$: just take the trivial paths $\pi=\pi'=(V)$.

\begin{restatable}[Simple Graphical Characterization of LSCA Closure]{theorem}{simplecharactlsca}
  \label{thm:simple_graphical_charact}
  A node $V \in \mathbf{V}$ is in the LSCA closure $\Linfty(\mathbf{U})$ of $\mathbf{U}\subseteq \mathbf{V}$ if and only if $V$ forms a $\Lambda$-structure over $(\mathbf{U},\mathbf{U})$.
  \emph{I.e.} $\Linfty(\mathbf{U}) = \Lambda(\mathbf{U},\mathbf{U})$.
\end{restatable}

\renewcommand{\graphscale}{0.8}
\begin{figure}[h]
  \centering
  \begin{subfigure}[c]{0.48\textwidth}
    \centering
    \scalebox{0.9}{
      \begin{tikzpicture}[mynode/.style={fill=white,inner sep=0pt,minimum size=0.7cm}, >=latex]
        \node[mynode] (V) at (2,3) {$V$};
        \node[mynode] (U) at (0,0) {$U$};
        \node[mynode] (U') at (4,0) {$U'$};
        \node[mynode] (pis) at (2,1.2) {$\pi_{U}\cap \pi_{U'} = \{V\}$};
        \node at ($(U.east) + (0.2,0)$) {$\in \mathbf{U}$};
        \node at ($(U'.east) + (0.2,0)$) {$\in \mathbf{U}$};
        \draw[->,dotted,bend right=30] (V) to node[pos=0.9,above,inner sep=1cm] {\small$\pi_{U}$} (U);
        \draw[->,dotted,bend left=30] (V) to node[pos=0.9,above,inner sep=1cm] {\small$\pi_{U'}$} (U');
    \end{tikzpicture}
    } %
    \caption{A $\Lambda$-structure over $(\mathbf{U}, \mathbf{U})$. \Cref{thm:simple_graphical_charact} states that the LSCA closure $\Linfty(\mathbf{U})$ of a set $\mathbf{U}$ is the set of all such structures.}
    \label{subfig:lambda-struct}
  \end{subfigure}
  \begin{subfigure}[c]{0.48\textwidth}
    \centering
    \scalebox{\graphscale}{\begin{tikzpicture}[mynode/.style={circle,draw=black,fill=white,inner sep=0pt,minimum size=0.7cm},
        square/.style={rectangle,draw=black,fill=lightgray,inner sep=0pt,minimum size=0.7cm},
        graynode/.style={circle,draw=black,fill=lightgray,inner sep=0pt,minimum size=0.7cm},
        closurenode/.style={inner sep=0pt,minimum size=0.4cm,text=red,font=\footnotesize},
        >=latex]
        \node[mynode] (A) at (0,3) {$A$};
        \node[graynode] (B) at (2,3) {$B$};
        \node[mynode] (C) at (-1,2) {$C$};
        \node[mynode] (D) at (1,2) {$D$};
        \node[graynode] (E) at (3,2) {$E$};
        \node[mynode] (F) at (0,1) {$F$};
        \node[mynode] (G) at (2,1) {$G$};
        \node[mynode] (H) at (4,1) {$H$};
        \node[mynode] (I) at (-1,0) {$I$};
        \node[square] (J) at (2,0) {$J$};
        \node[square] (K) at (4,0) {$K$};
        \node[closurenode, anchor=west] at (A.east) {$J$};
        \node[closurenode, anchor=west] at (B.east) {$B$};
        \node[closurenode, anchor=west] at (C.east) {};
        \node[closurenode, anchor=west] at (D.east) {$J$};
        \node[closurenode, anchor=west] at (E.east) {$E$};
        \node[closurenode, anchor=west] at (F.east) {$J$};
        \node[closurenode, anchor=west] at (G.east) {$J$};
        \node[closurenode, anchor=west] at (H.east) {$K$};
        \node[closurenode, anchor=west] at (I.east) {};
        \node[closurenode, anchor=west] at (J.east) {$J$};
        \node[closurenode, anchor=west] at (K.east) {$K$};
        \draw[->] (A) -- (C);
        \draw[->] (A) -- (D);
        \draw[->] (B) -- (D);
        \draw[->] (B) -- (E);
        \draw[->] (D) -- (F);
        \draw[->] (D) -- (G);
        \draw[->] (E) -- (G);
        \draw[->] (E) -- (H);
        \draw[->] (F) -- (I);
        \draw[->] (F) -- (J);
        \draw[->] (G) -- (J);
        \draw[->] (H) -- (K);
        \draw[->,bend right=30] (C) to (I);
    \end{tikzpicture}
    } %
    \caption{Illustration of the connectors in a graph. The square nodes belong to $\mathbf{U}$, the connector of each node is written in red next to its node, and the LSCA closure $\Linfty(\mathbf{U})$ consists of the gray nodes.}
    \label{fig:c4example}
  \end{subfigure}
  \caption{}
\end{figure}

We are now ready for the main result of this paper.

\begin{restatable}[Superiority of the LSCA Closure]{theorem}{lscamgiss}
  \label{thm:superiority_of_lsca}
  Let $G$ be a causal graph and $Y$ a node of $G$ with at least one parent.
  Then, the LSCA closure $\mathcal{L}^{\infty}(\PA(Y))$ of the parents of $Y$ is the minimal globally interventionally superior set $\mGISS_{Y}(G)$ of $G$ relative to $Y$.
\end{restatable}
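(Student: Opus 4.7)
The plan is to invoke Proposition~\ref{prop:cond-vs-atomic} so that I can work throughout with the deterministic atomic-intervention preorder $\detsup_{Y}$, for which unit-wise reasoning is tractable. I then prove separately that (a) $\Linfty(\PA(Y))$ is a GISS, and (b) no proper subset of it is.

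For (a), I fix $W \in \mathbf{V} \setminus (\Linfty(\PA(Y)) \cup \{Y\})$ and exhibit some $V^{*} \in \Linfty(\PA(Y))$ with $V^{*} \detsup_{Y} W$. If $W \notin \An(Y)$, then $\dop(W=w)$ leaves $\bar{f}_{Y}(\mathbf{n})$ unchanged in every SCM and every unit, so any parent of $Y$ trivially dominates. Otherwise the core step is a \emph{bottleneck lemma}: there exists a proper descendant $V^{*} \in \Linfty(\PA(Y))$ of $W$ such that every directed path from $W$ to $Y$ passes through $V^{*}$. I prove it by contradiction using Theorem~\ref{prop:simple_graphical_charact}: without such a bottleneck, a Menger-style path-extraction argument produces two internally disjoint directed paths from $W$ to two \emph{distinct} parents of $Y$, placing $W$ in $\Lambda(\PA(Y),\PA(Y)) = \Linfty(\PA(Y))$ and contradicting the hypothesis. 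Given $V^{*}$, in any SCM and any unit $\mathbf{n}$, the unrolled value of $Y$ under $\dop(W=w)$ factors through $V^{*}$:
\[
  \bar{f}_{Y}^{\dop(W=w)}(\mathbf{n}) \;=\; \bar{f}_{Y}^{\dop(V^{*}=v^{*})}(\mathbf{n}), \qquad v^{*} \defeq \bar{f}_{V^{*}}^{\dop(W=w)}(\mathbf{n}),
\]
so taking maxima over $w$ and $v^{*}$ yields $V^{*} \detsup_{Y} W$.

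For (b), I show that for each $V \in \Linfty(\PA(Y))$ and each $W \in \Linfty(\PA(Y)) \setminus \{V\}$ there is an SCM in $\C(G)$ in which $V$ strictly beats $W$, so that $W \not\detsup_{Y} V$. By Theorem~\ref{prop:simple_graphical_charact} there exist $A_{1}, A_{2} \in \PA(Y)$ (distinct when $V \notin \PA(Y)$; otherwise take $A_{1} = A_{2} = V$ with trivial paths) and internally disjoint paths $\pi_{1} \colon V \dashrightarrow A_{1}$, $\pi_{2} \colon V \dashrightarrow A_{2}$. I build a binary deterministic SCM where: (i) every node on $\pi_{1} \cup \pi_{2}$ other than $V$ copies its predecessor along the path, ignoring any other parents; (ii) every remaining endogenous variable, \emph{including} $V$ itself, has constant structural assignment $0$, ignoring its parents; (iii) $Y = \mathbf{1}\{A_{1} = 1 \wedge A_{2} = 1\}$, ignoring any other parents of $Y$. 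Then $\dop(V=1)$ propagates along both $\pi_{1}, \pi_{2}$ and yields $Y = 1$. Any other $\dop(W=w)$ either fails to influence $V$ (constant by default) and hence leaves $A_{1} = A_{2} = 0$ whenever $W$ lies off $\pi_{1} \cup \pi_{2}$, or influences exactly one of $A_{1}, A_{2}$ when $W$ lies strictly on one path while the other $A_{i}$ still inherits $V = 0$; in every subcase $Y = 0$, so $V$ strictly beats $W$.

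The main obstacle is the bottleneck lemma in (a): the Menger-style argument must be controlled so that the two extracted internally disjoint paths terminate at \emph{distinct} parents of $Y$, since only $\Lambda$-structures over distinct parents certify $\Linfty$-membership for non-parents. This is likely to require iterative descent through successive minimal separators until a node with the desired disjoint extensions is reached, leveraging finiteness of $G$ and the fact that the process terminates at worst at $\PA(Y)$ itself (every parent trivially lies in $\Linfty(\PA(Y))$). A secondary subtlety in (b) is ruling out alternate routes by which $W$'s value might reach $A_{1}$ or $A_{2}$; this is handled uniformly by the construction, since every node off $\pi_{1} \cup \pi_{2}$ is identically zero and every node on those paths copies only its path predecessor, so the only way to simultaneously force $A_{1} = A_{2} = 1$ is to intervene at the common apex $V$.
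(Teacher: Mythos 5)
Your proposal is correct and follows essentially the same route as the paper's proof: reduce to $\detsup_{Y}$ via Proposition~\ref{prop:cond-vs-atomic}; for superiority, establish that every ancestor of $Y$ outside the closure has a unique ``bottleneck'' node of $\Linfty(\PA(Y))$ through which all of its paths to $Y$ pass (the paper's Lemma~\ref{lemma:one_node_reachable} and Corollary~\ref{cor:all_paths_through_Z}) and factor the intervention through that node (Lemmas~\ref{lemma:blocking_vs_intervening} and \ref{lemma:chaining_b_and_z}); for minimality, build an adversarial SCM from the two internally disjoint paths to distinct parents supplied by Theorem~\ref{prop:simple_graphical_charact}, your AND-gate $Y=\mathbf{1}\{A_{1}=1\wedge A_{2}=1\}$ playing exactly the role of the paper's $2A_{1}A_{2}$ term.

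Two minor points. First, your bottleneck lemma does not require the ``iterative descent through successive minimal separators'' you anticipate: if two distinct closure nodes were reachable from $W$ by paths whose interiors avoid $\Linfty(\PA(Y))$, the apex of the $\Lambda$-substructure extracted from those two paths would itself lie in $\Linfty(\Linfty(\PA(Y)))=\Linfty(\PA(Y))$, contradicting either the interior-freeness of the paths or $W\notin\Linfty(\PA(Y))$; this is precisely how the paper's Lemma~\ref{lemma:two-paths-implies-in-closure} is used. Second, structural assignments that simply ``ignore'' some parents produce an SCM whose causal graph is a proper subgraph of $G$ rather than $G$ itself, which is why the paper's constructions retain a dependence on every parent via terms like $N_{V}\cdot\heavyside(\sum_{Z\in\Pa(V)}Z)$ that vanish only at the particular unit $\mathbf{n}=\mathbf{0}$ used in the argument; your construction should be patched the same way to guarantee membership in $\cC(G)$.
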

We emphasize that, due to \Cref{prop:cond-vs-atomic}, this graphical characterization of the $\mGISS_{Y}(G)$ is valid both for conditional interventions in a probabilistic causal model as for atomic interventions in a deterministic causal model (\emph{i.e.} a causal model with known $\mathbf{n}$).

\section{Algorithm to Find the Minimal Globally Interventionally Superior Set}
\label{sec:algor-find-minim}

\renewcommand{\graphscale}{0.7}

\begin{algorithm}[htb]
	\caption{C4}
	\label{alg:c4}
	\begin{algorithmic}[1]
		\STATE {\bfseries input:} DAG $G=(\mathbf{V},E)$, set of nodes $\mathbf{U} \subseteq \mathbf{V}$
		\STATE {\bfseries output:} The closure $\Linfty (\mathbf{U})$
		\STATE $S \leftarrow \mathbf{U}$\Comment{initialize closure}
		\STATE $\mathfrak{c}[V] \leftarrow V$ for $V \in \mathbf{U}$\Comment{initialize connectors}
		\FOR{$V \in \mathbf{V} \backslash \mathbf{U}$ in reverse topological order}
		\STATE $\mathbf{C} \leftarrow \{\mathfrak{c}[V']: V' \in \Ch(V)\cap \An(\mathbf{U})\}$
		\IF{$|\mathbf{C}|=1$}
		\STATE $\mathfrak{c}[V] \leftarrow X$ where $\mathbf{C}=\{X\}$
		\ELSIF{$|\mathbf{C}|>1$}
		\STATE $\mathfrak{c}[V] \leftarrow V$, $S \leftarrow S \cup \{V\}$\Comment{$V$ is added to closure}
		\ENDIF
		\ENDFOR
		\STATE \textbf{return} $S$
	\end{algorithmic}
\end{algorithm}
The {\bf C}losure {\bf C}omputation via {\bf C}hildren with Multiple {\bf C}onnectors (C4) Algorithm (\Cref{alg:c4}) computes the closure $\Linfty(\mathbf{U})$ in $O(|\mathbf{V}|+|E|)$ time, using {\it connectors} (illustrated in \Cref{fig:c4example}):
\begin{definition}[Connector]\label{def:connector}
	Let $G=(\mathbf{V},E)$ be a DAG, $\mathbf{U} \subseteq \mathbf{V}$, $V \in \An(\mathbf{U})$. The \emph{$\mathbf{U}$-connector $\mathfrak{c}[V]$ of $V$ in $G$} is defined recursively.
	Let $\mathbf{C}=\{\mathfrak{c}[V']: V' \in \Ch(V) \cap \An(\mathbf{U})\}$ be the set of $V$'s children's connectors.
	If $V \in \mathbf{U}$, then $\mathfrak{c}[V]:=V$. If $V \notin \mathbf{U}$: if $|\mathbf{C}|=1$ and $\mathbf{C}=\{X\}$ then $\mathfrak{c}[V]:=X$, otherwise $\mathfrak{c}[V]:=V$.%
\end{definition}

\hlnew{
	The connectors are constructed such that each $\mathfrak{c}[V]$ ``connects'' $V$ to $\Linfty(\mathbf{U})$ in the sense that $\mathfrak{c}[V]$ is the first node in $\Linfty(\mathbf{U})$ on any path from $V$ to $\Linfty(\mathbf{U})$.
	Thus, $\mathfrak{c}[V]$ mediates all influence that $V$ exerts over $\Linfty(\mathbf{U})$.
	This is formalized in \Cref{lem:connectpath} in \Cref{sec:app-c4-proofs}.
	Consider for example the graph from \Cref{fig:c4example}.
	$J$ and $K$ are the elements of $\mathbf{U}$, and thus $\mathfrak{c}[J] = J$, $\mathfrak{c}[K]=K$.
	Trivially, the first nodes on all paths from $J$ and $K$ to $\Linfty(\mathbf{U}) \supseteq \mathbf{U}$ are $J$ and $K$, respectively.
	The connector of $H$ is $\mathfrak{c}[H]=K$, and indeed $K$ is the first element of the closure $\Linfty(\mathbf{U})$ on the only path ($H \rightarrow K$) from $H$ to $\Linfty(\mathbf{U})$.
	Similarly for $G$, whose connector is $\mathfrak{c}[G]=J$.
	In contrast, $E$ is its own connector (and thus is, as discussed below, an element of $\Linfty(\mathbf{U})$).
	Again, all paths from $E$ to $\Linfty(\mathbf{U})$ must (trivially) go through $E$.
}

Crucially, \Cref{lem:connectpath} implies that $V \in \Linfty (\mathbf{U}) \Leftrightarrow \mathfrak{c}[V]=V$.
Equivalently, $V$ is its own connector if and only if it forms a $\Lambda$-structure over $(\mathbf{U}, \mathbf{U})$.
Intuitively, if all children of $V$ have the same connector $X$ (\emph{i.e.} $\mathbf{C}=\{X\}$), then $V$ can only influence $\mathbf{U}$ via $X$, making $X$ interventionally superior to $V$, and thus $V \notin \Linfty(\mathbf{U})$.
On the other hand, if $V$'s children have multiple connectors (\emph{i.e.} $\abs{\mathbf{C}}>1$), then interventions on $V$ can influence all those connectors, so $V$ is a potentially worthwhile candidate for intervention, and thus $V \in \Linfty(\mathbf{U})$.
This establishes correctness of C4, which finds all nodes satisfying $\mathfrak{c}[V] = V$ in linear time.

\begin{restatable}{theorem}{cfourcorrect}
	\label{prop:c4correct}
	C4 correctly computes $\Linfty(\mathbf{U})$, and runs in $O(|\mathbf{V}|+|E|)$ time.
\end{restatable}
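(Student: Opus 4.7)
The plan is to prove correctness by strong induction on the reverse topological order in which C4 processes nodes, maintaining the dual invariant that, after $V$ is processed, the stored value $\mathfrak{c}[V]$ equals $V$'s $\mathbf{U}$-connector if one exists and is $\mathrm{NULL}$ otherwise, and simultaneously $V \in S$ iff $V \in \Linfty(\mathbf{U})$. By Definition~\ref{def:connector}, $V \in \Linfty(\mathbf{U})$ is equivalent to $V$ being its own connector, so the two halves of the invariant are tightly coupled and the $S$-membership part follows from the connector part.

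The base cases are handled by initialization and by sinks. For $V \in \mathbf{U}$, the algorithm sets $\mathfrak{c}[V] = V$ and includes $V$ in $S$, which is correct since $V \in \mathbf{U} \subseteq \Linfty(\mathbf{U})$. For a sink $V \notin \mathbf{U}$, the loop finds $C = \emptyset$ and leaves $\mathfrak{c}[V] = \mathrm{NULL}$; this is correct because such a $V$ has no descendants and hence no path to $\mathbf{U}$, so by Theorem~\ref{prop:simple_graphical_charact} it cannot lie in $\Lambda(\mathbf{U},\mathbf{U}) = \Linfty(\mathbf{U})$.

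For the inductive step, fix $V \notin \mathbf{U}$ and assume the invariant holds for every child of $V$. I split on $|C|$, where $C$ is the set of non-$\mathrm{NULL}$ connectors of $V$'s children. If $|C|=0$, then by the induction hypothesis no child reaches $\Linfty(\mathbf{U})$; since every nontrivial path out of $V$ begins at one of its children, $V$ itself has no path to $\mathbf{U}$, so $V \notin \Linfty(\mathbf{U})$ and $\mathfrak{c}[V] = \mathrm{NULL}$ is correct. If $|C|=1$, say $C=\{X\}$, I first argue that $X$ is a connector of $V$: pick any child $V'$ whose connector is $X$, prepend the edge $V \to V'$ to the witness path from Lemma~\ref{lem:connectorunique}, and obtain a path from $V$ to $X$ whose only intersection with $\Linfty(\mathbf{U})$ is $X$. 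To rule out $V \in \Linfty(\mathbf{U})$, note that any $\Lambda$-structure at $V$ must exit through two distinct children $V_1, V_2$, and the first intersection of each outgoing path with $\Linfty(\mathbf{U})$ must equal the corresponding child's connector (a direct consequence of Lemma~\ref{lem:connectorunique}), which here is $X$ in both cases, violating the internal disjointness required by Definition~\ref{def:lambda_struct}.

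The delicate case is $|C|>1$, where I must construct an honest $\Lambda$-structure at $V$ from witness paths of two children. Pick children $V_1 \neq V_2$ whose connectors $X_1 \neq X_2$ differ, and fix witness paths $\pi_i \colon V_i \dashrightarrow X_i$ with $\pi_i \cap \Linfty(\mathbf{U}) = \{X_i\}$ via Lemma~\ref{lem:connectorunique}. The main obstacle is that $\pi_1$ and $\pi_2$ may cross at nodes outside $\Linfty(\mathbf{U})$. I resolve this with a latest-collision argument: if $\pi_1,\pi_2$ share any node $W \neq V$, take $W$ to be the last such shared node along $\pi_1$; the sub-paths $\rho_i \colon W \dashrightarrow X_i$ obtained by continuing along $\pi_i$ from $W$ then meet only at $W$ (since no point of $\pi_1$ after $W$ lies on $\pi_2$, and $X_1 \neq X_2$ prevents endpoint collisions via $\pi_i \cap \Linfty(\mathbf{U}) = \{X_i\}$), so $(W,\rho_1,\rho_2)$ is a $\Lambda$-structure over $(\Linfty(\mathbf{U}),\Linfty(\mathbf{U}))$. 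Applying Theorem~\ref{prop:simple_graphical_charact} with $\mathbf{U}$ replaced by $\Linfty(\mathbf{U})$, together with the idempotence $\Linfty(\Linfty(\mathbf{U})) = \Linfty(\mathbf{U})$ immediate from Definition~\ref{def:lsca_closure}, forces $W \in \Linfty(\mathbf{U})$; but $W \in \pi_i$ combined with $\pi_i \cap \Linfty(\mathbf{U}) = \{X_i\}$ then forces $W = X_1 = X_2$, a contradiction. Hence $\pi_1$ and $\pi_2$ are internally disjoint, and prepending $V \to V_i$ yields the desired $\Lambda$-structure over $(\mathbf{U},\mathbf{U})$ at $V$ (extending the witness paths to endpoints in $\mathbf{U}$ if $X_i \notin \mathbf{U}$ by unrolling the construction recursively one level down). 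Therefore $V \in \Linfty(\mathbf{U})$, the algorithm correctly adds $V$ to $S$, and $\mathfrak{c}[V] = V$ is $V$'s connector by Definition~\ref{def:connector}.
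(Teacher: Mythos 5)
Your proof is correct and follows essentially the same route as the paper's: induction along a reverse topological order with the three-way case split on $|C|$, using uniqueness of connectors, extraction of a $\Lambda$-substructure at a collision node to show the two witness paths cannot cross, and the idempotence $\Linfty(\Linfty(\mathbf{U}))=\Linfty(\mathbf{U})$ combined with Theorem~\ref{prop:simple_graphical_charact}. The only thing to drop is the final parenthetical about ``extending the witness paths to endpoints in $\mathbf{U}$ by unrolling the construction recursively'' --- that extension is neither justified (the extended paths could collide again beyond $X_1,X_2$) nor needed, since the prepended paths already form a $\Lambda$-structure over $(\Linfty(\mathbf{U}),\Linfty(\mathbf{U}))$ and the same idempotence argument you invoked for the collision step gives $V\in\Linfty(\Linfty(\mathbf{U}))=\Linfty(\mathbf{U})$ directly, exactly as in the paper's Lemma~\ref{lem:closureofclosure}.
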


\section{Experimental Results}
\label{sec:experiments}
We evaluate C4 on both random and real graphs.
Additionally, we examine the impact of our method on the cumulative regret of a bandit algorithm.
\newcommand{\myimagescale}{0.34}
\newcommand{\mysubfigwidth}{0.239\textwidth}

\begin{figure*}
	\centering
	\begin{subfigure}[c]{0.258\textwidth}
		\centering
		\includegraphics[trim=0 0 10 0, clip, scale=\myimagescale]{./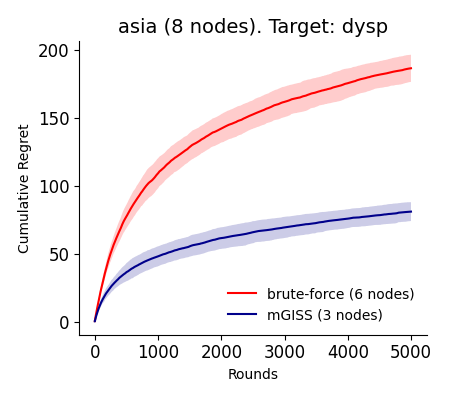}
	\end{subfigure}
	\hfill
	\begin{subfigure}[c]{\mysubfigwidth}
		\centering
		\includegraphics[trim=25 0 10 0, clip, scale=\myimagescale]{./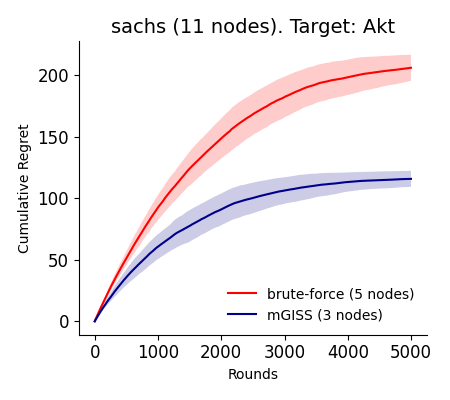}
	\end{subfigure}
	\hfill
	\begin{subfigure}[c]{\mysubfigwidth}
		\centering
		\includegraphics[trim=25 0 10 0, clip, scale=\myimagescale]{./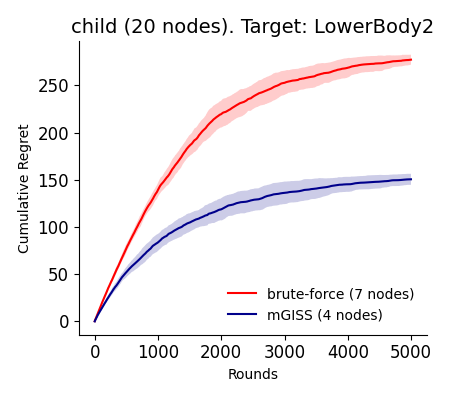}
	\end{subfigure}
	\hfill
	\begin{subfigure}[c]{\mysubfigwidth}
		\centering

		\includegraphics[trim=25 0 10 0, clip, scale=\myimagescale]{./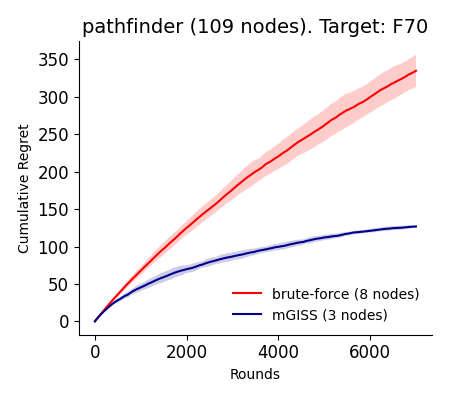}
	\end{subfigure}
	\caption{Comparison of cumulative regret curves for node selection using a UCB-based bandit algorithm for conditional interventions, with (mGISS) and without (brute-force) pruning the search space. These curves were obtained by averaging over $500$ runs for the \texttt{bnlearn} datasets \texttt{asia}, \texttt{sachs} and \texttt{child}, and over $300$ runs for \texttt{pathfinder}. The node counts in each legend are the numbers of candidate intervention nodes resulting from each type of pre-processing (rather than the total graph size in the panel title): all proper ancestors of $Y$ for ``brute-force'', and the mGISS nodes for ``mGISS''. For every dataset, pruning the search space with the C4 algorithm results in faster convergence and smaller values of regret.}
	\label{fig:exps}
\end{figure*}

\textbf{Search space reduction in random graphs}
We applied the C4 algorithm to randomly generated DAGs using the Erdős-Rényi model for $N$ graphs and probability $p$ \citep{erdos1959random} adapted to DAG-generation\footnotemark.
\footnotetext{After fixing a total order $\trianglelefteq$ on the nodes, each pair of nodes $V$, $U$ with $V \trianglelefteq U$ is assigned an edge $(V, U)$ with probability $p$. The value $p$ can be used to control the expected degree.}
We generated $1000$ graphs using $20, 100, 300$, and $500$ nodes, and varying the expected (total) degree of nodes from $2$ to $11$ in steps of $3$.
For each graph $G$, we set the target $Y$ to be the node with the most ancestors, used C4 to compute $\Linfty(\Pa(Y)) = \mGISS_{Y}(G)$, and calculated the fraction of nodes in $\An(Y) \setminus \{Y\}$ that remain in $\mGISS_{Y}(G)$.
The results revealed that, for a given number of nodes, graphs with lower expected degrees benefit more from our method (\emph{i.e.} their $\mGISS_{Y}(G)$ correspond to smaller fractions of $\An(Y) \setminus \{Y\}$).
Furthermore, for a fixed expected degree, our method is more effective for higher numbers of nodes.
For example, for graphs with $500$ nodes, the mGISS retained, on average, $17\%$, $29\%$, $62\%$ and $77\%$ of the nodes, for expected degrees of $2, 5, 8$ and $11$, respectively.
Moreover, graphs with an expected degree of $5$ saw these numbers decrease from $70\%$ at $20$ nodes to $47\%, 35\%$ and $29\%$ for $100, 300$ and $500$ nodes, respectively.
The complete results are presented in \Cref{fig:fractions-hist} (\Cref{sec:app-exps}).
These results are not surprising:
if the average degree is small compared to the number of nodes, the edge density is small, in which case we expect fewer $\Lambda$-structures to form over Pa(Y).
Graphs modeling real-world systems tend to have low average degrees, as can be seen in the graphs from the popular Bayesian network repository \texttt{bnlearn}.
Therefore, we expect our method to be especially effective in those graphs.
We test this below.

\textbf{Search space reduction in real-world graphs}
We tested our method in most graphs from the \texttt{bnlearn} repository\footnotemark
\footnotetext{All which can be imported in Python using the library \texttt{pgmpy}.}
, as well as on a graph representing the causal relationships  between train delays in a segment of the railway system of the Netherlands (see \Cref{sec:app-exps}).
For each graph, we set $Y$ to be the node with most ancestors\footnotemark.
\footnotetext{\label{fn:Y-not-single-child}We also require $Y$ to have more than one parent, to avoid the trivial case with $\abs{\mGISS_{Y}(G)} = 1$.}
The results are presented in the bar plot of \Cref{fig:realworld_fractions} (\Cref{sec:app-exps}).
This confirmed that realistic models with larger graphs tended to benefit more from our method, with a reduction of over $90\%$ of the search space for some of the largest models.
Notice also that these models indeed have relatively small average degrees, all below $4.0$.
From this, we conclude that we can expect our method to be useful when reducing the search space of conditional causal bandit tasks in real-world causal models, especially when they are large.

\textbf{Impact on conditional intervention bandits}
We present empirical evidence that restricting the node search space to the mGISS allows a straightforward UCB-based\footnotemark algorithm (which we call CondIntUCB) for conditional causal bandits to converge more rapidly to better nodes.
\footnotetext{The Upper Confidence Bound (UCB) algorithm is widely used. See \emph{e.g.} \citet{lattimore2020bandit}.}
As explained in \Cref{sec:prelims}, on each round the algorithm must (i) choose which node $X$ to intervene on; and (ii) choose the value for $X$, given its conditioning set $Z_{X}$\footnotemark.
\footnotetext{For simplicity, we use the smallest observable conditioning set $\mathbf{Z}_{X} = \An(X) \setminus \{X\}$ (see \Cref{sec:prelims}).}
Choice (i) employs UCB over nodes, while choice (ii) utilizes a UCB instance specific to the conditioning set value.
In other words, for each realization of $\condset_{X}$ (each context) there is a UCB.
This is identical to what is described in \citet[\S 18.1]{lattimore2020bandit} for contextual bandits with one bandit per context.
The cumulative regret\footnotemark is computed with respect to node choice, since we want to see how our node selection method affects the quality of node choice by CondIntUCB.
\footnotetext{For the computation of regret, we use the estimated best arm, defined as the arm that most runs concluded to be the best at the end of training.}
We use 4 real-world datasets from the \texttt{bnlearn} repository, and again choose the node of each dataset with the most ancestors as the target\footref{fn:Y-not-single-child}.
These datasets were selected because their graphical structures are non-trivial\footnotemark and both $\An(Y)$ and $\mGISS_{Y}(G)$ are sufficiently small to allow experimentation with our setup.
\footnotetext{In contrast, the \texttt{cancer} dataset, for example, only has nodes whose mGISS is either all of the node's ancestors or a single node.}
For each dataset, we run CondIntUCB up to 500 times and plot the two average cumulative regret curves along with their standard deviations, corresponding to using all nodes (brute-force) and the mGISS nodes (\Cref{fig:exps}).
The total number of rounds is chosen as to observe (near) convergence.
These results show that cumulative regret curves can be significantly improved—meaning that better nodes are selected earlier for applying conditional interventions—if the search space over nodes is pruned using our C4 algorithm.

\section{Related Work}
\label{sec:related-work}

Recent works in ``contextual causal bandits'' address interventions that account for context, bearing resemblance to our problem.
However, our problem remains distinct.
In a $K$-arm contextual bandit problem, each round is associated with a context that determines the reward distributions of the $K$ arms.
The agent uses the context to select one of the $K$ arms.
A general approach to solving such problems is to maintain a separate standard bandit algorithm for each context.
More efficient solutions typically rely on assumptions about relationships between contexts \citep{lattimore2020bandit}.
In contrast, a conditional causal bandit problem involves, in each round, an intervened node $X$ and an observed context that is a sample of $\mathbf{Z}_{X}$.
This context determines the reward distributions of the $K = \abs{R_X}$ possible atomic interventions on $X$, and the agent chooses among these according to a policy.
Thus, a conditional causal bandit problem can be interpreted as a collection of contextual bandits, one for each node $X$ in a causal graph.
In particular, conditional causal bandits are not simply particular cases of contextual bandits.
In this paper, we leverage the structure of the causal graph to eliminate certain nodes, \emph{i.e}., to exclude some of these contextual bandits from consideration.
In \Citet{madhavan2024contextual}, the term ``contexts'' is used in a very different way to the one used in our paper, actually referring to different graphs as opposed to different variable values.
\Citet{subramanian2022contextual,subramanian2024causal} tackle the scenario in which an intervention is performed, with knowledge of a given set of context variables, on a \emph{pre-chosen} variable $X$ that has an edge into $Y$ (and no other outgoing edges).
This approach can be understood as selecting a conditional intervention for a predefined node from a very simple graph.
In contrast, in our setting we need to choose what variable to intervene on to begin with, and there are no restrictions on the causal graph.

As mentioned in \Cref{sec:introduction}, \citet{lattimore2016bandits} introduced the original causal bandit problems, which involve hard interventions in causal models.
Subsequent works \citep{rajat2017identifying, yabe2018causal, lu2020regret, nair2021budgeted,sawarni2023learning,maiti2022causal,feng2023combinatorial} proposed algorithms for variants of causal bandits with both hard and soft interventions, budget constraints, and unobserved confounders.

All of the works described above proposed algorithms which aim at accelerating learning by utilizing knowledge of the causal model.
As explained in \Cref{sec:introduction}, this contrasts with our work, which, like the work of \citet{lee2018structural,lee2019structural}, uses knowledge of the causal graph to find a minimal search space (over the nodes) for causal bandits.
And, while the latter focus on multi-node, hard interventions, we focus on single-node, conditional interventions.
\hlnew{
It is instructive to compare the mGISS with the minimal intervention sets (MIS) and possibly-optimal minimal intervention sets (POMIS) of \citet{lee2018structural}. 
There, an arm is a multi-node intervention on a \emph{set} of variables, and one restricts attention to MISs (sets with no redundant subset) and, among these, to POMISs --- the MISs that are non-dominated, i.e. optimal for some SCM consistent with $G$. 
In our work, arms are single-node, so there is no redundancy among simultaneously-intervened sets to exploit, and thus no construction analogous to the MIS.
The correct analogy is between the \emph{set} of all POMISs and the mGISS.
Each POMIS is a non-dominated set of nodes, while each element of the mGISS is a non-dominated node.
}

The work of \Citet{lee2020characterizing} presents an interesting connection to our work.
Given a causal graph, they study the sets of pairs $(\mathrm{node}, \mathrm{context(node)})$ (referred to as ``scopes'') that may correspond to an optimal (multi-node) intervention policy where each node $X$ in a scope is intervened on according to a policy $\pi_{X}(X \mid \mathrm{context(X)})$.
This is a challenging problem, and they do not provide a full characterization of these optimal scopes, instead deriving a set of rules that can be used to compare certain pairs of scopes.
In this paper, we instead address the single-node intervention case, so that the rules derived by \citet{lee2020characterizing} for scope comparison do not apply to our case.
Additionally, we are not concerned with finding the smallest conditioning sets we can. These are fixed by the experiment or the intervener (see \Cref{sec:prelims}). 
We focus instead on choosing the nodes that can yield the best results when intervened upon.

\section{Discussion and Conclusion}
\label{sec:conclusion}
In this paper we introduced the conditional causal bandit problem, where the agent only has knowledge of the causal graph $G$, the arms are conditional interventions,
and the reward variable belongs to $G$.
The theoretical contributions include a rigorous graphical characterization of the minimal set of nodes which is guaranteed to contain the node with the optimal conditional intervention, and the C4 algorithm, which computes this set in linear time.
Empirical results validate that our approach substantially prunes the search space in both real-world and sparse randomly-generated graphs.
Furthermore, integrating mGISS with a UCB-based conditional bandits algorithm showcased improved cumulative regret curves.

Importantly, our method is not itself a causal bandit algorithm, but a search space reduction that can be used as a pre-processing step for any intervention-selection strategy. 
Once a graph is available, C4 can substantially restrict the candidate intervention space. 
Note that we focused on single-node conditional interventions, a setting that, somewhat surprisingly,  is actually more challenging than the multi-node case. 
To our knowledge, there is currently no existing work which addresses search-space pruning in this setting. 
As a result, direct comparisons with existing approaches are not possible.

Several extensions remain open, including settings with partially known graphs or a finite number $K>1$ of interventions. 
\hlnew{
Addressing latent confounding would require substantially more research and is left to future work: it can render nodes outside the LSCA closure essential --- e.g. in the case\footnotemark $Z \to A \to Y$ with a latent confounder between $A$ and $Y$, $Z$ must enter the mGISS despite being a strict common ancestor of no pair of parents of $Y$.
}
\footnotetext{\Cref{example:condsup-not-eq-to-detsup} with latent $W$ would be an example of this. There, interventions on $A$ could only beat interventions on $Z$ if we could condition on $W$, and so observability of $W$ was essential for $A$ to be superior to $Z$.}
We view this work as a first step toward addressing these more general cases.
Furthermore, while \citet{lee2020characterizing} consider multi-node interventions, it would be interesting in future work to adapt their ideas to the single-node case and identify the smallest $\condset_{X}$ sets for which the best policy can still be found.

\hlnew{
Our method assumes the causal graph is known.
When it is not, two cases are worth distinguishing. If a family of candidate graphs is available (e.g., a Markov equivalence class or the output of a causal discovery procedure), C4 can be applied to each candidate and the results combined by union.
For a misspecified single graph, the consequences depend on the type of error: extra edges enlarge the LSCA closure (so the mGISS is conservatively larger but still contains the optimal node), while missing edges can cause genuinely necessary nodes to be excluded and thus break the guarantees. 
In short, C4 is robust to over-specification and brittle to under-specification of the causal graph.
}

\begin{acknowledgements} %
	\label{sec:acknowledgments}
	This publication is part of the CAUSES project (KIVI.2019.004) of the research programme Responsible Use of Artificial Intelligence which is financed by the Dutch Research Council (NWO) and ProRail.
\end{acknowledgements}

\bibliography{uai2026_conference}

\newpage
\onecolumn

\title{The Minimal Search Space for Conditional Causal Bandits (Supplementary Material)}
\maketitle
\appendix

\section{Directed Acyclic Graphs}
\label{sec:app-dags}

All graphs in this paper are directed acyclic graphs (DAGs).
Every path is assumed to be directed.
A path $\pi$ in a graph $G = (\mathbf{V}, E)$ is a tuple of nodes such that each node $X$ in the path has an outgoing arrow from $X$ to the next node in the tuple\footnotemark.
For $X \in \mathbf{V}$, we denote by $\Pa(X)$, $\Ch(X)$, $\De(X)$ and $\An(X)$ the sets of parents, children, descendants and ancestors of $X$, respectively.
\footnotetext{Since all DAGs we are considering in this paper come from SCMs, there is at most one arrow between any two nodes, so that a tuple of nodes is enough to define a path. For a general graph one would have to specify a list of edges.}
We denote by $\pi\colon X \dashrightarrow Y$ a path starting at node $X$ and ending at node $Y$, and $\mathring{\pi}$ denotes the path formed by the inner nodes of $\pi$.
By abuse of notation, we often perform set operations such as $\pi_{1} \cap \pi_{2}$ between paths, which implicitly means that these operations are performed on the sets of nodes belonging to the paths.
Tuples with a single node are also considered to be paths, and are said to be \emph{trivial}.
Also, if $B\in \pi\colon X\dashrightarrow Y$, then the paths $\pi\vert^{Z}\colon Z\dashrightarrow Y$ and $\pi\vert_{Z}\colon X\dashrightarrow Z$ are the paths resulting from removing from $\pi$ all nodes before and after $Z$, respectively.
Every node is an ancestor of itself, so that the relation $\anpo$ defined by $X \anpo Y \iff Y \in \An(X)$ is a partial order.
We call this the \emph{ancestor partial order}.
If there is a non-trivial path from $X$ to $Y$, then $Y$ is said to be \emph{reachable} from $X$.
The set of common ancestors of nodes $X$ and $Y$ is denoted $\CA(X,Y) = \An(X) \cap \An(Y) = \{Z \in \mathbf{V}\ \colon X \anpo Z \wedge Y \anpo Z\}$.
Finally, the \emph{degree} of a node in a DAG is the sum of the incoming and outgoing arrows of that node.

We also make use of a lesser-known graph theory concept, relevant for this paper: the ``lowest common ancestors'' of nodes $(X,Y)$.
These are common ancestors that don't reach any other common ancestors, intuitively making them the ``closest'' to $(X,Y)$.

\begin{definition}[Lowest Common Ancestors in a DAG \citep{bender2005lowest}]
  Let $X, Y$ be nodes of a DAG $G=(\mathbf{V},E)$.
  A \emph{lowest common ancestor (LCA)} of $X$ and $Y$ is a minimal element of $\CA(X,Y)$ with respect to the ancestor partial order $\anpo$.
  The set of all lowest common ancestors of $X$ and $Y$ is denoted $\LCA(X,Y)$.
\end{definition}

For example, in \Cref{subfig:twopar_heur}, $\LCA(A_{1}, A_{2}) = \{X_{1}\}$, whereas in \Cref{subfig:not_lca_heur}, $\LCA(A_{1}, A_{2}) = \{A_{1}\}$.

\section{The Kidney Function Example}
\label{sec:app-kidney_function}

Recall the kidney function example discussed in \Cref{sec:prelims}.
The variables Weight$N$, BP$N$ and RenalFlow$N$ are the weight, blood pressure, and renal blood flow of the patient at the end of week $N$ (equivalently, at the start of week $N+1$).
All are measured at the end of each week.
The doctor can intervene on one of these variables \emph{using the measured values as context for the intervention}, in order to optimize the kidney function of the patient at the end of week 3 (Kidney3).
We model this situation with the causal graph depicted in \Cref{fig:kidney_graph}.
Making use of \Cref{thm:superiority_of_lsca}, we see that the minimal set of nodes which needs to be tested in this case is \{RenalFlow2, Weight2, Weight1, Weight0\}.

\begin{figure}[h!]
  \centering
  \begin{tikzpicture}[->, >=stealth, scale=0.6]

    \node (W0) at (0,0) {Weight0};
    \node (W1) at (0,-2) {Weight1};
    \node (W2) at (0,-4) {Weight2};

    \node (BP0) at (3,0) {BP0};
    \node (BP1) at (3,-2) {BP1};
    \node (BP2) at (3,-4) {BP2};

    \node (R0) at (6,0) {RenalFlow0};
    \node (R1) at (6,-2) {RenalFlow1};
    \node (R2) at (6,-4) {RenalFlow2};
    \node (K3) at (6,-6) {Kidney3};

    \draw (W0) -- (BP0);
    \draw (W1) -- (BP1);
    \draw (W2) -- (BP2);

    \draw (BP0) -- (R0);
    \draw (BP1) -- (R1);
    \draw (BP2) -- (R2);

    \draw (W0) -- (W1);
    \draw (W1) -- (W2);
    \draw (W2) -- (K3);

    \draw (R0) -- (R1);
    \draw (R1) -- (R2);
    \draw (R2) -- (K3);

  \end{tikzpicture}

  \caption{Causal graph for the kidney function example from \Cref{sec:prelims}. The doctor can intervene on any node $X$ except Kidney3, making use of the measured variables $\mathbf{Z}_{X}$ until (and including) that week, thus including in particular $\An(X)\setminus \{X\}$.}
  \label{fig:kidney_graph}
\end{figure}

\section{Unrolled Assignments}
\label{sec:app-unrolled_assigns}

The structural assignments of an SCM can be utilized to express any endogenous variable as a function of the exogenous variables only.
This is achieved by composing the assignments until reaching the exogenous variables.
Our proofs will rely on these functions, which we will refer to as ``unrolled assignments'', since we ``unroll'' the expressions for the endogenous variables until only exogenous variables are left.
We define them formally by induction as follows:

\begin{definition}[Unrolled Assignment]
  \label{def:unr_assign}
  We define the \emph{unrolled assignment} $\bar{f}_{X}\colon R_{\mathbf{N}} \to R_{X}$ of any (exogenous or endogenous) variable $X$ from an SCM $\cC = (\mathbf{V}, \mathbf{N}, \mathcal{F}, p_{\mathbf{N}})$ by induction.
  For $X = N_{i}\in \mathbf{N}$, define $\bar{f}_{X}(\mathbf{n}) \coloneqq n_{i}$.
  Now, let $\trianglelefteq$ be a topological order on $G$ where the first elements are the endogenous variables with no endogenous parents.
  Let $S$ be the poset $(\mathbf{V}, \trianglelefteq)$.
  In ascending order, take $X\in S$, and define:
  \begin{equation}
    \label{eq:unrolled_assign}
      \bar{f}_{X}(\mathbf{n})  \coloneqq
    \begin{cases}
      f_{X}(n_{X}), \ \mathrm{if}\ \PA(X) = \emptyset \\
      f_{X}(\bar{f}_{\PA(X)}(\mathbf{n}), n_{X}),\ \mathrm{otherwise}
    \end{cases},
  \end{equation}
  where $\bar{f}_{\PA(X)}(\mathbf{n}) = (\bar{f}_{\PA(X)_{1}}(\mathbf{n}),\ldots,\bar{f}_{\PA(X)_{m_{X}}}(\mathbf{n}))$ and $m_{X} = \abs{\PA(X)}$.
\end{definition}

Additionally, we can consider $X$ as a function of both exogenous variables and a chosen endogenous variable $B$.
To achieve this, we substitute the assignments until we reach either $B$ or the exogenous variables, thereby ``unrolling'' the dependencies until we reach the exogenous variables or we are blocked by $B$.

\begin{definition}[Blocked Unrolled Assignment]
  \label{def:blocked_unr_assign}
  Let $X, B$ endogenous variables from an SCM $\cC = (\mathbf{V}, \mathbf{N}, \mathcal{F}, p_{\mathbf{N}})$
  We define the \emph{unrolled assignment} $\bar{f}_{X}[B]\colon R_{B} \times R_{\mathbf{N}} \to R_{X}$ of $X$ \emph{blocked by} $B$ by induction.
  Let $S$ be the poset from \Cref{def:unr_assign}.
  In ascending order, take $X\in S$, and define:
  \begin{equation}
    \label{eq:blocked_unrolled_assign}
      \bar{f}_{X}[B](B, \mathbf{n})  \coloneqq
    \begin{cases}
      \bar{f}_{X}(\mathbf{n}), \ \mathrm{if}\ X\notin \De(B) \\
      B, \ \mathrm{if}\ X=B \\
      f_{X}(\bar{f}_{\PA(X)}[B](B, \mathbf{n}),n_{X})\ \mathrm{otherwise}
    \end{cases},
  \end{equation}
  where $\bar{f}_{\PA(X)}[B](\mathbf{n}) = (\bar{f}_{\PA(X)_{1}}[B](B,\mathbf{n}),\ldots,\bar{f}_{\PA(X)_{m_{X}}}[B](B,\mathbf{n}))$ and $m_{X} = \abs{\PA(X)}$.
\end{definition}

\begin{remark}
  Strictly speaking, $\bar{f}_{X}$ is not a function of all the values of all the noise variables, but only of the exogenous variables $N_{W}$ associated with endogenous variables $W$ that $Y$ depends on.
  Similarly, $\bar{f}_{X}[B]$ is also not a function of all the values of all the noise variables.
  Namely, if $X$ only depends on an endogenous variable $W$ through $B$, then $n_{W}$ will never appear in the expression for $\bar{f}_{X}[B]$, and the same holds in case $B=W$.
  A more accurate notation would reflect these facts, writing the unrolled assignments as functions of the specific noise variables that can affect them, rather than as functions of all noise variables.
  We opted not to adopt this notation to avoid complicating the notation and conceptual simplicity of these quantities.
\end{remark}

The following lemma relates blocked unrolled assigments with atomic interventions, and will be used to prove \Cref{thm:superiority_of_lsca}.
\begin{lemma}
  \label{lemma:blocking_vs_intervening}
  Let $X \in \mathbf{V}$ and $Y\in \mathbf{V} \cup \mathbf{N}$.
  Then $\bar{f}_{Y}[X](x, \mathbf{n}) = \bar{f}_{Y}^{\dop(X=x)}(\mathbf{n})$.
\end{lemma}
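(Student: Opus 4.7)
The plan is to prove the identity by structural induction on a topological order $\trianglelefteq$ of the nodes in $G^{*}$ in which the exogenous variables $\mathbf{N}$ come first, mirroring the inductive construction of $\bar{f}_{Y}$ and $\bar{f}_{Y}[X]$ in Definitions~\ref{def:unr_assign} and \ref{def:blocked_unr_assign}. At each node $Y$, I split into the three cases used to define $\bar{f}_{Y}[X]$: either $Y \notin \De(X)$, or $Y = X$, or $Y \in \De(X) \setminus \{X\}$. The point is that the atomic intervention $\dop(X=x)$ modifies only the structural assignment of $X$ (leaving every other $f_{V}$ and every $N_{V}$ untouched) and blocks all causal propagation through $X$, which matches precisely the behaviour of the ``blocking'' in $\bar{f}_{Y}[X]$.

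For the base case, take any $Y$ appearing at or before the last exogenous variable in $\trianglelefteq$, i.e.\ $Y \in \mathbf{N}$. Then $\bar{f}_{Y}[X](x,\mathbf{n}) = \bar{f}_{Y}(\mathbf{n}) = n_{Y}$ (since an exogenous variable is never a descendant of an endogenous $X$ and cannot equal $X$), and likewise $\bar{f}_{Y}^{\dop(X=x)}(\mathbf{n}) = n_{Y}$ because the intervention does not alter the noise distribution's realized values. For the inductive step, assume the identity holds for every $V$ strictly preceding $Y$ in $\trianglelefteq$; in particular, it holds for each $P \in \PA(Y)$. Now I consider the three cases. If $Y \notin \De(X) \cup \{X\}$, then no ancestor of $Y$ is affected by $\dop(X=x)$, so $\bar{f}_{Y}^{\dop(X=x)}(\mathbf{n}) = \bar{f}_{Y}(\mathbf{n}) = \bar{f}_{Y}[X](x,\mathbf{n})$, the last equality by definition of the blocked assignment. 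If $Y = X$, then $\bar{f}_{X}[X](x,\mathbf{n}) = x$ by definition; and in $\mathcal{C}^{\dop(X=x)}$ the structural assignment of $X$ is the constant $x$, so $\bar{f}_{X}^{\dop(X=x)}(\mathbf{n}) = x$ as well.

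The only substantive case is $Y \in \De(X) \setminus \{X\}$. Here, by Definition~\ref{def:blocked_unr_assign},
\begin{equation*}
\bar{f}_{Y}[X](x,\mathbf{n}) = f_{Y}\bigl(\bar{f}_{\PA(Y)}[X](x,\mathbf{n}),\, n_{Y}\bigr),
\end{equation*}
and since $\dop(X=x)$ does not modify $f_{Y}$ (because $Y \neq X$) nor the parent set of $Y$ (because $Y \neq X$ so no incoming edges into $Y$ are severed),
\begin{equation*}
\bar{f}_{Y}^{\dop(X=x)}(\mathbf{n}) = f_{Y}\bigl(\bar{f}_{\PA(Y)}^{\dop(X=x)}(\mathbf{n}),\, n_{Y}\bigr).
\end{equation*}
Each parent $P \in \PA(Y)$ strictly precedes $Y$ in $\trianglelefteq$, so the inductive hypothesis gives $\bar{f}_{P}^{\dop(X=x)}(\mathbf{n}) = \bar{f}_{P}[X](x,\mathbf{n})$ component-wise, and the two expressions coincide.

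I do not expect a serious obstacle; the proof is essentially a bookkeeping induction, and the only thing to be careful about is that in the $Y=X$ case one must invoke the definition of $\bar{f}_{X}^{\dop(X=x)}$ directly rather than recursing through $\PA(X)$ (since in $G^{\dop(X=x)}$ the node $X$ has no endogenous parents). The structure of the argument is in fact a simpler version of the induction already carried out in Lemma~\ref{lemma:cond-int-vs-atom-int}, where the policy $g$ plays the role of a function of $\mathbf{Z}_{X}$; here $g$ is just the constant $x$.
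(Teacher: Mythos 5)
Your proof is correct and follows essentially the same route as the paper's: an induction over a topological order of $G^{*}$ with the exogenous variables first, followed by the same three-way case split ($Y\notin\De(X)$, $Y=X$, $Y\in\De(X)\setminus\{X\}$) and the same use of the inductive hypothesis on $\PA(Y)$ together with the fact that $\dop(X=x)$ leaves $f_{Y}$ unchanged for $Y\neq X$. No gaps.
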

\begin{proof}
  Let $X$ be an endogenous variable.
  We want to prove that the expression holds for any variable $Y$.
  We will prove this by induction.
  Let $\trianglelefteq$ be a topological order on the nodes of $G^{*}$.
  Note that the first elements with respect to this order are the exogenous variables, \emph{i.e.} $N \trianglelefteq Z$ whenever $N \in \mathbf{N}$ and $Z \in \mathbf{V}$.
  The result is true for the exogenous variables.
  Indeed, for $Y \in \textbf{N}$ we have that
  $\bar{f}_{Y}[X](x, \mathbf{n}) = \bar{f}_{Y}(\mathbf{n}) = Y = \bar{f}_{Y}^{\dop(X=x)}(\mathbf{n})$, since $Y \notin \De(X) \cup \{X\}$ and $Y$ is exogenous (both in the pre- and post-intervention structural causal models).
  This establishes the base case of the induction.
  Now let $Y$ be endogenous.
  For the inductive step, we will prove that, if the result is true for the parents $\PA_{G^{*}}(Y)$ of $Y$ in $G^{*}$ (induction hypothesis), then it is also true for $Y$.
  Assume the antecedent (induction hypothesis).
  There are three possibilities: $Y \in \De(X)\setminus\{X\}$, $Y = X$ or $Y \notin \De(X)$.
  In case $Y \in \De(X)\setminus\{X\}$:
  \begin{equation}
    \begin{split}
    \bar{f}_Y[X](x, \mathbf{n}) &= f_Y(\bar{f}_{\PA(Y)}[X](x, \mathbf{n}), n_{Y}).\\
      & \overeq{I.H.} f_Y(\bar{f}_{\PA(Y)}^{\dop(X=x)}(\mathbf{n}), n_{Y})\\
                                & = f_Y^{\dop(X=x)}(\bar{f}_{\PA(Y)}^{\dop(X=x)}(\mathbf{n}), n_{Y})\\
      & \overeq{def} \bar{f}_{Y}^{\dop(X=x)}(\mathbf{n}),
    \end{split}
  \end{equation}
  where in the third equality we used that $f^{\dop(X=x)}_{Y}=f_{Y}$.
  If instead $Y = X$, then one simply has $\bar{f}_{Y}[X](x, \mathbf{n}) = \bar{f}_{X}[X](x, \mathbf{n}) \overeq{def} x$.
  Furthermore, $\bar{f}^{\dop(X=x)}_{Y}(\mathbf{n}) = \bar{f}^{\dop(X=x)}_{X}(\mathbf{n}) = f^{\dop(X=x)}_{X}(\mathbf{n}) = x$, where the second equality holds simply because $X$ has no non-exogenous parents in the post-intervention graph.
  Finally, if $Y \notin \De(X)$, then $\bar{f}_{Y}[X](x, \mathbf{n}) = \bar{f}_{Y}(\mathbf{n})$ by definition.
  And $\bar{f}_{Y}^{\dop(X=x)}(\mathbf{n}) = f_{Y}^{\dop(X=x)}(\bar{f}^{\dop(X=x)}_{\PA(Y)}(\mathbf{n}), n_{Y}) = f_{Y}(\bar{f}_{\PA(Y)}(\mathbf{n}), n_{Y})$, where in the last equality we used that $X\notin \An(Y) \Rightarrow \bar{f}^{\dop(X=x)}_{\PA(Y)}(\mathbf{n}) = \bar{f}_{\PA(Y)}(\mathbf{n})$.
  This establishes the inductive step: if the results holds for the first $j \ge \abs{\mathbf{N}}$ variables with respect to $\trianglelefteq$, then it also holds for the variable $j + 1$, since its parents are among the first $j$ variables.
\end{proof}

The following lemma shows how one can chain (blocked) unrolled assignments when there is a node $Z$ present in all paths from the blocking node $B$ to $Y$.
This result is consistent with the intuition that, if all paths from $B$ to $Y$ must go through $Z$, then knowing the value of $Z$ is enough to compute $Y$.
\begin{lemma}
  \label{lemma:chaining_b_and_z}
  If all paths from $B$ to $Y$ must include $Z$, then $\bar{f}_{Y}[B](b, \mathbf{n}) = \bar{f}_{Y}[Z](\bar{f}_{Z}[B](b, \mathbf{n}), \mathbf{n})$.
\end{lemma}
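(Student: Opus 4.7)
The plan is to prove the equality by structural induction on $Y$, following a topological order of $G^{*}$ and unfolding Definition \ref{def:blocked_unr_assign} on both sides. Two degenerate cases can be dispatched immediately: if $Z = Y$, the right-hand side collapses via the $X=B$ clause of the blocked-unrolled definition to $\bar{f}_{Z}[B](b,\mathbf{n})$, which is the left-hand side; if $Z = B$, then $\bar{f}_{B}[B](b,\mathbf{n}) = b$, so the right-hand side becomes $\bar{f}_{Y}[B](b,\mathbf{n})$ directly. From here on I may assume $Z \notin \{B, Y\}$.

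The interesting inductive case is $Y \in \De(B) \setminus \{B\}$. I will expand both sides using the third clause of Definition~\ref{def:blocked_unr_assign} and reduce the problem to showing $\bar{f}_{P}[B](b,\mathbf{n}) = \bar{f}_{P}[Z](\bar{f}_{Z}[B](b,\mathbf{n}),\mathbf{n})$ for each parent $P$ of $Y$. The hypothesis that every directed path $B \dashrightarrow Y$ contains $Z$ supplies three structural facts I need: (a) $Z \in \De(B)$, obtained by truncating any such path at $Z$; (b) $B \notin \PA(Y)$, since otherwise the one-edge path $(B, Y)$ would avoid $Z$, given $Z \notin \{B, Y\}$; and (c) for every parent $P$ of $Y$ in $\De(B) \setminus \{Z\}$, every path $B \dashrightarrow P$ itself contains $Z$, because extending any such path by $P \to Y$ yields a path $B \dashrightarrow Y$ which by hypothesis contains $Z$, and $Z$ equals neither $P$ nor $Y$. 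Fact (c) is exactly what licenses the induction hypothesis at $P$. For the remaining parent types the simpler clauses of Definition~\ref{def:blocked_unr_assign} suffice: if $P = Z$, both sides evaluate to $\bar{f}_{Z}[B](b,\mathbf{n})$ on the nose; and if $P \notin \De(B)$, then $P \notin \De(Z)$ too (since $Z \in \De(B)$ combined with $P \in \De(Z)$ would force $P \in \De(B)$), so both sides collapse to $\bar{f}_{P}(\mathbf{n})$.

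The residual case $Y \notin \De(B)$ (where the hypothesis is vacuous) needs a short auxiliary argument. The left-hand side is $\bar{f}_{Y}(\mathbf{n})$ by definition. For the right-hand side, either $Z \notin \An(Y)$, in which case the definition again yields $\bar{f}_{Y}(\mathbf{n})$; or $Z \in \An(Y)$, in which case $Z \notin \De(B)$ (otherwise $Y \in \De(B)$), so $\bar{f}_{Z}[B](b,\mathbf{n}) = \bar{f}_{Z}(\mathbf{n})$, and a routine sub-induction on $Y$ then establishes the self-consistency identity $\bar{f}_{Y}[Z](\bar{f}_{Z}(\mathbf{n}),\mathbf{n}) = \bar{f}_{Y}(\mathbf{n})$.

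I expect the main obstacle to be the case analysis on the parent $P$ rather than any single deep step. The substantive insight driving the whole argument is the path-extension observation (c): it is precisely the mechanism by which the ``$Z$-separates-$B$-from-$Y$'' hypothesis propagates through the recursive definition, ensuring the induction hypothesis always fires on a parent strictly earlier in the topological order. Verifying that the correct clause of Definition~\ref{def:blocked_unr_assign} applies on each side, for each kind of parent, is the bulk of the bookkeeping but involves no new ideas beyond those listed above.
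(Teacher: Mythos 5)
Your proof is correct and follows essentially the same route as the paper's: an induction along a topological order that expands the blocked unrolled assignments and splits the parents of $Y$ into those equal to $Z$, those in $\De(B)$ (where the induction hypothesis fires), and those outside $\De(B)$. The only differences are organizational --- the paper pre-packages the nodes where the hypothesis holds into an explicitly ordered set $S$ and inducts within it, whereas you propagate the hypothesis to parents on the fly via the path-extension argument (c) and additionally spell out the degenerate cases $Z\in\{B,Y\}$ and $Y\notin\De(B)$, which the paper leaves implicit.
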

\begin{proof}
  Let $S$ be the poset whose elements are all the descendants $A$ of $B$ for which all paths from $B$ to $A$ must go through $Z$, and the partial order is a topological order $\trianglelefteq$.
  Denote the elements of $S$ by $W_{i}$, where $i \in \{0, \ldots, m-1\}$ corresponds to the position of $W_{i}$ in the order $\trianglelefteq$.
  We will prove the result by induction on a topological order.
  Notice that $Y\in S$.
  Thus, we can just show the result for all $W_{i}$.
  We start with the base case $W_{0}$.
  By definition:
  \begin{equation}
    \label{eq:w0_case}
      \bar{f}_{W_{0}}[Z](\bar{f}_{Z}[B](b, \mathbf{n}),\mathbf{n}) = f_{W_{0}}(\bar{f}_{\PA(W_{0})}[Z](\bar{f}_{Z}[B](b, \mathbf{n}), \mathbf{n}),n_{W_{0}}).
  \end{equation}
  Recall that $\PA(W_{0}) = (\PA(W_{0})_{1}, \ldots, \PA(W_{0})_{m_{0}})$.
  Hence, we want to check that $\bar{f}_{\PA(W_{0})_{i}}[Z](\bar{f}_{Z}[B](b, \mathbf{n}), \mathbf{n}) = \bar{f}_{\PA(W_{0})_{i}}[B](b, \mathbf{n})$ for all $i$, since in that case the right hand side of \Cref{eq:w0_case} becomes $f_{W_{o}}(\bar{f}_{\PA(W_{0})_{i}}[B](b, \mathbf{n}), n_{W_{0}}) \defeq \uf_{W_{0}}[B](b, \mathbf{n})$.\\
  If $\PA(W_{0})_{i} = Z$, then by definition of blocked unrolled assignment $\bar{f}_{\PA(W_{0})_{i}}[Z](\bar{f}_{Z}[B](b, \mathbf{n}), \mathbf{n}) = \bar{f}_{Z}[Z](\bar{f}_{Z}[B](b, \mathbf{n}), \mathbf{n}) = \bar{f}_{Z}[B](b, \mathbf{n})$.

  \begin{minipage}[c]{0.69\textwidth}
    If $\PA(W_{0})_{i} \ne Z$, then $\PA(W_{0})_{i}$ cannot be a descendant of $B$.
    Indeed, $W_{0}$ must have no parent that is a descendant of $B$, except maybe for $Z$.
    That is: $\PA(W_{0}) \cap \De(B) \subseteq \{Z\}$.
    Otherwise, either that parent would be in $S$ and thus equal to $W_{k}$ for some $k>0$, or it would be in $\De(B) \setminus (S \cup \{Z\})$, so that there would be a path $B \dashrightarrow W_{0}$ not crossing $Z$ --- both cases contradict the definition of $W_{0}$.
    Hence, we only need to consider the case where $\PA(W_{0})_{i} \notin \De(B)$.
    In particular, $\PA(W_{0})_{i} \notin \De(Z)$.
    Then:
  \end{minipage}
  \begin{minipage}[c]{0.3\textwidth}
    \centering
    \begin{tikzpicture}[mynode/.style={inner sep=2pt,minimum size=0cm},>=latex]
      \node[mynode] (b) at (0,1) {$B$};
      \node[mynode] (a) at (0.8,0.5) {$A$};
      \node[mynode] (m) at (-0.8,-0.1) {$M$};
      \node[mynode] (z) at (0,0) {$Z$};
      \node[mynode] (r) at (0.8,-0.1) {$R$};
      \node[mynode] (w0) at (0.8,-1) {$W_0$};
      \node[mynode] (w1) at (-0.8,-1) {$W_1$};
      \node[mynode] (w2) at (0,-1.8) {$W_2\equiv Y$};

      \draw[->] (b) -- (z);
      \draw[->] (b) -- (a);
      \draw[->] (b) -- (m);
      \draw[->] (a) -- (z);
      \draw[->] (z) -- (w0);
      \draw[->] (z) -- (w1);
      \draw[->] (z) -- (m);
      \draw[->] (r) -- (w0);
      \draw[->] (w0) -- (w2);
      \draw[->] (w1) -- (w2);

      \node[fit=(w0) (w1) (w2), draw=red, rounded corners, inner sep=1pt, dashed] (wsrect) {};
      \node[right] at (wsrect.east) {$\color{red}S$};
    \end{tikzpicture}
  \end{minipage}
  \begin{equation}
    \label{eq:no_deb_case}
    \bar{f}_{\PA(W_{0})_{i}}[Z](\bar{f}_{Z}[B](b, \mathbf{n}), \mathbf{n}) = \bar{f}_{\PA(W_{0})_{i}}(\mathbf{n}) = \bar{f}_{\PA(W_{0})_{i}}[B](b, \mathbf{n}).
  \end{equation}
  This shows the result for the base case $W_{0}$.
  Now, assume it to be true for all $W_{j}$ with $j \le k$ (induction hypothesis).
  \Cref{eq:w0_case} still holds for $W_{k+1}$.
  Now, each parent $\PA(W_{k+1})_{i}$ must either be equal to $W_{j}$ for some $j < k+1$, or not a descendant of $B$ (for the same reason as for the parents of $W_{0}$).
  In the latter case, \Cref{eq:no_deb_case} still holds for $\PA(W_{k+1})_{i}$.
  Hence, we only need to check that, for $\PA(W_{k+1})_{i} = W_{j}$ (with $j < k+1$), we have that $\bar{f}_{W_{j}}[Z](\bar{f}_{Z}[B](b, \mathbf{n})\mathbf{n}) = \bar{f}_{W_{j}}[B](b, \mathbf{n})$.
  But this is just the induction hypothesis.
\end{proof}

\section{Conditional Superiority vs Deterministic Atomic Superiority}
\label{sec:app-superiorities}

We will show that conditional intervention superiority is equivalent to deterministic atomic intervention superiority.
This result will help prove results about the former by making use of the latter, which is mathematically simpler and easier to reason about.

\begin{notation*}
  We denote by $G^{*}$ the graph resulting from adding to a causal graph $G$ the exogenous variables as nodes, and an edge $N_{X_{i}} \rightarrow X_{i}$ for each exogenous variable $N_{X_{i}}$.
\end{notation*}

\begin{lemma}[Conditional Intervention vs Atomic Intervention]
  \label{lemma:cond-int-vs-atom-int}
  Let $A$ be a set of endogenous variables of an SCM $\cC$ and let $X, Y$ be endogenous variables of $\cC$ not in $A$. When evaluated at a setting $\mathbf{n}$, the unrolled assignment of $Y$ after a conditional intervention $\dop(X=g(A))$ coincides with the unrolled assignment of $Y$ after the atomic intervention $\dop(X=g(\bar{f}_{A}(\mathbf{n})))$.
  That is:
  \begin{equation*}
    \bar{f}_{Y}^{\dop(X=g(A))}(\mathbf{n}) = \bar{f}_{Y}^{\dop(X=g(\bar{f}_{A}(\mathbf{n})))}(\mathbf{n}).
  \end{equation*}
\end{lemma}
\begin{proof} %
  This result can be proved by induction in a similar way to \Cref{lemma:blocking_vs_intervening}.\\
  Let $X$ be an endogenous variable.
  We want to prove that the expression holds for any variable $Y$.
  We will prove this by induction on a topological order $\trianglelefteq$ on the nodes of $G^{*}$ such that the first elements are precisely the exogenous variables, \emph{i.e.} $N \trianglelefteq Z$ whenever $N \in \mathbf{N}$ and $Z \in \mathbf{V}$.\\
  The result is true for the exogenous variables.
  Indeed, for $Y \in \textbf{N}$, and making use of \Cref{lemma:blocking_vs_intervening}, we have that
  $\bar{f}_{Y}^{\dop(X=g(\bar{f}_{A}(\mathbf{n})))}(\mathbf{n}) = \bar{f}_{Y}[X](g(\bar{f}_{A}(\mbox{n})), \mathbf{n}) = \bar{f}_{Y}(\mathbf{n}) = Y = \bar{f}_{Y}^{\dop(X=g(A))}(\mathbf{n})$, since $Y \notin \De(X) \cup \{X\}$ and $Y$ is exogenous (both in the pre- and post-intervention (both conditional and atomic) structural causal models).
  This establishes the base case of the induction.\\
  Now let $Y$ be endogenous.
  For the inductive step, we will prove that, if the result is true for the parents $\PA_{G^{*}}(Y)$ of $Y$ in $G^{*}$ (induction hypothesis), then it is also true for $Y$.
  Assume the antecedent (induction hypothesis).
  There are three possibilities: $Y \in \De(X)\setminus\{X\}$, $Y = X$ or $Y \notin \De(X)$.
  In case $Y \in \De(X)\setminus\{X\}$:
  \begin{equation}
    \begin{split}
      \bar{f}_{Y}^{\dop(X=g(\bar{f}_{A}(\mathbf{n})))}(\mathbf{n}) & \overeq{def} f_{Y}^{\dop(X=g(\uf_{A}(\mathbf{n})))}(\bar{f}_{\PA(Y)}^{\dop(X=g(\uf_{A}(\mathbf{n})))}(\mathbf{n}), n_{Y}) \\
      & = f_Y(\bar{f}_{\PA(Y)}^{\dop(X=g(\uf_{A}(\mathbf{n})))}(\mathbf{n}), n_{Y})\\
      & \overeq{I.H.} f_Y(\bar{f}_{\PA(Y)}^{\dop(X=g(A))}(\mathbf{n}), n_{Y})\\
      & = f_Y^{\dop(X=g(A))}(\bar{f}_{\PA(Y)}^{\dop(X=g(A))}(\mathbf{n}), n_{Y})\\
      & \overeq{def} \bar{f}_{Y}^{\dop(X=g(A))}(\mathbf{n}),
    \end{split}
  \end{equation}
  where in the second and fourth equalities we used that $f^{\dop(X=g(\uf_{A}(\mathbf{n})))}_{Y}=f_{Y} = f_{Y}^{\dop(X=g(A))}$.
  We also used that $Pa(Y)$ is unchanged by these interventions.
  If instead $Y = X$, then one has:
  \begin{equation}
    \label{eq:2}
    \begin{split}
      \bar{f}_{X}^{\dop(X=g(A))}(\mathbf{n}) &\overeq{def} f_{X}^{\dop(X=g(A))}(\bar{f}_{\PA^{G^{\dop(X=g(A))}}(X)}^{\dop(X=g(A))}(\mathbf{n}), n_{X}) \\
                                             &= f_{X}^{\dop(X=g(A))}(\bar{f}_{A}^{\dop(X=g(A))}(\mathbf{n}), n_{X}) \\
                                             &= g(\bar{f}_{A}(\mathbf{n}), n_{X}),
    \end{split}
  \end{equation}
  and also:
  \begin{equation}
    \label{eq:3}
    \begin{split}
        \bar{f}_{X}^{\dop(X=g(\bar{f}_{A}(\mathbf{n})))}(\mathbf{n})
            &\overeq{def}  f_{X}^{\dop(X=g(\uf_{A}(\mathbf{n})))}(\bar{f}_{\PA^{G^{\dop(X=g(\bar{f}_{A}(\mathbf{n})))}}(X)}^{\dop(X=g(\bar{f}_{A}(\mathbf{n})))}(\mathbf{n}), n_{X})  \\
            &= g(\bar{f}_{A}(\mathbf{n}), n_{X}).
    \end{split}
  \end{equation}
  Finally, if $Y \notin \De(X)$, then trivially $\bar{f}_{Y}^{\dop(X=g(A))}(\mathbf{n}) = \bar{f}_{Y}(\mathbf{n})$ and $\bar{f}_{Y}^{\dop(X=g(\uf_{A}(\mathbf{n})))}(\mathbf{n}) = \bar{f}_{Y}(\mathbf{n})$.

  This establishes the inductive step: if the results holds for the first $j \ge \abs{\mathbf{N}}$ variables with respect to $\trianglelefteq$, then it also holds for the variable $j + 1$, since its parents are among the first $j$ variables.
\end{proof}

\begin{lemma}[Superiority and Paths] %
  \label{lemma:superiority-implies-blocked-path}
  If $X\detsup_{Y} W$, then all paths $W \dashrightarrow Y$ must include $X$.
\end{lemma}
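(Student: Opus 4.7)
I will argue by contraposition: assume there is a path $\pi \colon W = V_0 \to V_1 \to \cdots \to V_k = Y$ in $G$ that avoids $X$, and exhibit an SCM $\cC$ with causal graph $G$ together with a unit $\mathbf{n}$ witnessing $X \not\detsup_Y W$. Morally, I will turn $\pi$ into a ``copy channel'' that propagates $W$'s value down to $Y$, while making every node off $\pi$ inert, so that an intervention on $W$ steers $Y$ through the channel but an intervention on $X$ cannot reach $\pi$ at all.

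Concretely, let every endogenous variable have range $\{0,1\}$ and define the structural assignments as follows. For each $i \in \{1,\ldots,k\}$, let $f_{V_i}$ return the value of $V_{i-1}$ (which is a parent of $V_i$, since $\pi$ uses the edge $V_{i-1} \to V_i$), ignoring all other parents and the noise $n_{V_i}$. For every node $V$ outside $\pi$, let $f_V$ return $n_V$ and ignore its parents. I will be careful to view each $f_V$ as a formal function of its entire parent set, so that $\cC$ indeed has causal graph $G$; this is standard, since an SCM structural assignment is not required to vary nontrivially in every parent. Finally, take $\mathbf{n}$ with $n_W = 0$ (the remaining noise values may be set to $0$ as well).

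With this construction, both sides of the superiority inequality are immediate. Under $\dop(W=1)$, the channel forces $V_1 = 1$, then $V_2 = 1$, and so on down to $Y = V_k = 1$, so $\bar{f}_Y^{\dop(W=1)}(\mathbf{n}) = 1$. Under $\dop(X=x)$ for any $x \in R_X$, the node $X$ lies off $\pi$, so none of the copy assignments along $\pi$ is affected by the intervention; the channel then propagates the untouched value $W = n_W = 0$ down to $Y$, giving $\bar{f}_Y^{\dop(X=x)}(\mathbf{n}) = 0$. Consequently $\max_{x \in R_X} \bar{f}_Y^{\dop(X=x)}(\mathbf{n}) = 0 < 1 \le \max_{w \in R_W} \bar{f}_Y^{\dop(W=w)}(\mathbf{n})$, contradicting $X \detsup_Y W$.

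The main point that deserves care is verifying that $\cC$ has causal graph \emph{exactly} $G$: I must explicitly allow $f_V$ to depend only nominally on some of its parents, which is the only nontrivial piece of bookkeeping. A small corner case is the trivial path $W = Y$, which would force $X = W = Y$ for $X$ to lie on the path; here the lemma holds vacuously and no construction is needed. Beyond those remarks, the single-unit calculation above is the whole argument.
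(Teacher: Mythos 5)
Your proof is correct and takes essentially the same approach as the paper's: both refute $X \detsup_{Y} W$ by turning an $X$-avoiding path $W \dashrightarrow Y$ into a deterministic copy channel, making all other nodes inert, and evaluating at the all-zeros unit. The only real difference is cosmetic: the paper's structural assignments include gating terms of the form $N_{V}\cdot\heavyside(\sum_{Z\in\Pa(V)}Z)$ so that every declared parent genuinely enters each assignment, whereas you rely on purely formal dependence on the parent set --- which is admissible under the paper's definition of the causal graph, and which you correctly flag as the one point requiring care.
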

\begin{proof}
  If $W\notin \An(Y)$, there are no paths from $W$ to $Y$ and the conclusion is vacuously true.
  We assume from now on that $W\in \An(Y)$.
  Assume, for the sake of contradiction, that there is a path $\pi\colon W \dashrightarrow A \rightarrow Y$ in $G$ without $X$, where $A$ is a parent of $Y$.
  Let $\prev_{\pi}$ denote the operator that, given a node $X$ in the path $\pi$ (where $X$ is different from $W$), outputs the node that precedes $X$ in that path.
  Consider the SCM with graph $G$ and structural assignments and noise distributions given by:
  \begin{equation*}
    \begin{cases}
      f_{Y}(A, \Pa(Y)\setminus A, N_{Y}) = 2A + N_{Y} \cdot \heavyside(\sum_{Z\in \Pa(Y)\setminus A} Z) \\
      f_{C \in \pi \setminus W}(\Pa(C), N_{C}) = \prev_{\pi}(C) + N_{C} \cdot \heavyside(\sum_{Z\in \Pa(C)\setminus \prev_{\pi}(C)} Z) \\
      f_{W}(\Pa(W), N_{W}) = N_{W} \cdot \heavyside(\sum_{Z\in \Pa(W)} Z) \\
      f_{V \notin \pi}(\Pa(V), N_{V}) = N_{V} \cdot \heavyside(\sum_{Z\in \Pa(V)} Z) \\
      N_{V} \sim \mathrm{Ber}(\frac{1}{2})
    \end{cases},
  \end{equation*}
  where $\heavyside\colon \R \to \{0,1\}$ is the unit step function, which maps values larger than $0$ to $1$, and all non-positive values to $0$.
  Then, $\bar{f}_{Y}^{\dop(W=1)}(\mathbf{0}) = 2 \bar{f}_{A}^{\dop(W=1)}(\mathbf{0}) = 2$, while, for every $Z \in \mathbf{V}\setminus \pi$ and $z\in R_{Z}$, we have $\bar{f}_{Y}^{\dop(Z=z)}(\mathbf{0}) = 0$.
  Since $X$ is not in $\pi$, then in particular $\bar{f}_{Y}^{\dop(Z=z)}(\mathbf{0}) = 0$ for every $x\in R_{X}$.
  That is, for the setting $\mathbf{n} = \mathbf{0}$, there is no intervention on $X$ that is better than $\dop(W=1)$, which contradicts the antecedent.
\end{proof}

We will also need a lemma similar to \Cref{lemma:everything-beats-nonancestors} but for conditional-intervention superiority.
Its proof is similar to that of \Cref{lemma:everything-beats-nonancestors}.
\begin{lemma}
  \label{lemma:cond-everything-beats-nonancestors}
  If $X_1 \in \mathbf{V} \setminus \{Y\}$ and $X_2 \notin \An(Y)$, then $X_1 \condsup_Y X_2$.
\end{lemma}
\begin{proof}
  For any SCM $\cC$, intervening on $X_2 \notin \An(Y)$ will give $Y = \bar{f}_{Y}(\mathbf{n})$.
  When intervening on $X_1$, we can simply set it to $\bar{f}_{X_{1}}(\mathbf{n})$ (the observational value for $X_{1}$) to obtain the same value of $Y$ (and possibly we can do better). Notice that this is a (trivial) instance of conditional intervention.
  Thus $X_1 \condsup_Y X_2$.
\end{proof}

\condvsatomic*
\begin{proof}
($\Rightarrow$): Assume $X\condsup_{Y} W$.
Let $\cC = (\mathbf{V}, \mathbf{N}, \mathcal{F}, p_{\mathbf{N}})$ be an SCM with causal graph $G$ and $\mathbf{m}\in R_{\mathbf{N}}$.
Let $g^{*} = \argmax_{g} \E_{\mathbf{n}} \bar{f}_{Y}^{\dop(X=g(\condset_{X}))}(\mathbf{n})$.
Then,
  $\forall h, \quad
  \E_{\mathbf{n}} \bar{f}_{Y}^{\dop(X=g^{*}(\condset_{X}))}(\mathbf{n}) \ge \E_{\mathbf{n}} \bar{f}_{Y}^{\dop(W=h(\condset_{W}))}(\mathbf{n})$.
  This holds in particular for $p_{\mathbf{N}} = \delta(\mathbf{m})$.
  Denoting by $\mathcal{F}(\mathbf{A}, \mathbf{B})$ the set of functions with domain $\mathbf{A}$ and codomain $\mathbf{B}$, we can then write:
  \begin{equation*}
    \forall h\in \mathcal{F}(R_{\condset_{W}}, R_{W}), \bar{f}_{Y}^{\dop(X=g^{*}(\bar{f}_{\condset_{X}}(\mathbf{m})))}(\mathbf{m})
    \ge \bar{f}_{Y}^{\dop(W=h(\bar{f}_{\condset_{W}}(\mathbf{m})))}(\mathbf{m}),
  \end{equation*}
  where we also used \Cref{lemma:cond-int-vs-atom-int}.
  Now, since every $w\in R_{W}$ can be attained from $\bar{f}_{\condset_{W}}(\mathbf{m})$ by simply choosing $h$ to be the constant function which is always equal to $w$, then choosing $x^{*} = g^{*}(\bar{f}_{\condset_{X}}(\mathbf{m}))$ allows us to write:
  \begin{equation*}
    \forall w\in R_{W}, \bar{f}_{Y}^{\dop(X=x^{*})}(\mathbf{m})
    \ge \bar{f}_{Y}^{\dop(W=w)}(\mathbf{m}).
  \end{equation*}
  This proves that $X \detsup_{Y} W$.

  \partialqed{\Rightarrow}

  ($\Leftarrow$): Assume now that $X \detsup_{Y} W$.
  If $W \not\in \An(Y)$, the result follows immediately from \Cref{lemma:cond-everything-beats-nonancestors}.
  Assume henceforth that $W \in \An(Y)$.
  Let $p_{\mathbf{N}}\in \mathcal{P}(\mathbf{N})$ and $\mathcal{F}(G) = \{f_{V}\colon V\in G\}$.
  We want to show that $\max_{g} \E_{\mathbf{n}} \bar{f}_{Y}^{\dop(X=g(\condset_{X}))}(\mathbf{n}) \ge \max_{h} \E_{\mathbf{n}} \bar{f}_{Y}^{\dop(W=h(\condset_{W}))}(\mathbf{n})$.
  From \Cref{lemma:cond-int-vs-atom-int}, we can write this as $\max_{g} \E_{\mathbf{n}} \bar{f}_{Y}^{\dop(X=g(\uf_{\condset_{X}}(\mathbf{n})))}(\mathbf{n}) \ge \max_{h} \E_{\mathbf{n}} \bar{f}_{Y}^{\dop(W=h(\uf_{\condset_{W}}(\mathbf{n})))}(\mathbf{n})$.
  Denote the expected value on the left-hand-side by $\alpha(g)$, and the one on the right-hand-side by $\beta(h)$.
  Assume, for the sake of contradiction, that there is $h^{*}$ such that $\beta(h^{*}) > \alpha(g)$ for all $g$.
  Define $H(\mathbf{n}) = h^{*}(\uf_{\condset_{W}}(\mathbf{n}))$.
  Since $W\in \An(Y)$, from \Cref{lemma:superiority-implies-blocked-path} we know that $X \in \De(W)$ and all paths from $W$ to $Y$ go through $X$.
  We then define\footnotemark $g^{*}(\uf_{\condset_{X}}(\mathbf{n})) = \uf_{X}[W](h^{*}(\uf_{\condset_{W}}(\mathbf{n})), \mathbf{n})$.
  \footnotetext{Notice that, since $\mathbf{Z}_{W} \subseteq \mathbf{Z}_{X}$, this is well defined.}
  Let $G(\mathbf{n}) = g^{*}(\uf_{\condset_{X}}(\mathbf{n})) $.
  Then:
  \begin{equation*}
    \begin{split}
      \alpha(g^{*}) &= \E_{\mathbf{n}} \bar{f}_{Y}^{\dop(X=G(\mathbf{n}))}(\mathbf{n}) \\
                    &= \E_{\mathbf{n}} \bar{f}_{Y}[X](G(\mathbf{n}), \mathbf{n}) \\
                    &= \E_{\mathbf{n}} \bar{f}_{Y}[X](\uf_{X}[W](H(\mathbf{n}), \mathbf{n}), \mathbf{n}) \\
                    &= \E_{\mathbf{n}} \bar{f}_{Y}[W](H(\mathbf{n}), \mathbf{n}) \\
                    &= \E_{\mathbf{n}} \bar{f}^{\dop(W=H(\mathbf{n}))}_{Y}(\mathbf{n}) \\
                    &= \beta(h^{*}).
    \end{split}
  \end{equation*}
  where in the fourth equality we used \Cref{lemma:chaining_b_and_z}, and in the fifth we used \Cref{lemma:blocking_vs_intervening}.
  This contradicts our assumption.

  \partialqed{\Leftarrow}
\end{proof}

As mentioned in the main text (\Cref{sec:cond-sup}), the superiority relation for atomic interventions in non-deterministic (general) SCMs defined in the natural way is \emph{not} equivalent to $\condsup_{Y}$.
Indeed, consider the following example:

\begin{example}
  \label{example:condsup-not-eq-to-detsup}
  Consider the SCM given by $Y=A\oplus W$, $A=Z\oplus W$ and $N_{Z},N_{W} \sim \mathrm{Bern}(\nicefrac{1}{2})$, where $\oplus$ is the XOR operator and all variables are binary.
  Setting $Z$ to $1$ ensures that $Y=1$, so that $\E_{\mathbf{n}}\uf_{Y}^{\dop(Z=1)}=1$.
  No atomic intervention on $A$ would accomplish this: $\E_{\mathbf{n}}\uf_{Y}^{\dop(A=0)}=\E_{\mathbf{n}}\uf_{Y}^{\dop(A=1)}=\frac{1}{2}$.
  Hence $A \not\intsup_{Y}^{a} Z$ (where $\intsup_{Y}^{a}$ denotes atomic-intervention superiority, defined in the obvious way).
  However, $\E_{\mathbf{n}}\uf_{Y}^{\dop(A=g(W))}=1 = \max R_{Y}$ if one uses the policy $g(0)=1$, $g(1)=0$.
  Thus $A \condsup_{Y} Z$.
\end{example}

\section{Intervention Superiority Relations are Preorders}
\label{sec:app-preorders}

It is straightforward to show that both interventional superiority relations are in fact preorders, and are partial orders if restricted to $\An(Y)$.

\begin{proposition}
  \label{prop:sup_is_preorder}
  The interventional superiority relation between nodes is a preorder in $G$.
  The interventional superiority relation between node sets is also a preorder.
\end{proposition}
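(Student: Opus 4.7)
The plan is to verify the two defining properties of a preorder, reflexivity and transitivity, for both the node relation and its set-level extension, working with the deterministic atomic version $\detsup_{Y}$ (which by \Cref{prop:cond-vs-atomic} is equivalent to $\condsup_{Y}$, so it suffices to treat one of them).

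For the node case, I would first observe reflexivity $X \detsup_{Y} X$ directly from \Cref{def:det-sup}: for any SCM $\C \in \cC(G)$ and any unit $\mathbf{n}$, picking any $x^{*} \in R_{X}$ trivially satisfies $\bar{f}_{Y}^{\dop(X=x^{*})}(\mathbf{n}) \geq \bar{f}_{Y}^{\dop(X=x^{*})}(\mathbf{n})$; equivalently, the equivalent $\max$-form of the definition is a tautology. For transitivity, suppose $X \detsup_{Y} W$ and $W \detsup_{Y} Z$. Fix an arbitrary pair $(\C, \mathbf{n}) \in \cC(G) \times R_{\mathbf{N}}$. By the first hypothesis applied at $(\C, \mathbf{n})$ there is $x^{*} \in R_{X}$ with $\bar{f}_{Y}^{\dop(X=x^{*})}(\mathbf{n}) \geq \bar{f}_{Y}^{\dop(W=w)}(\mathbf{n})$ for all $w \in R_{W}$; by the second there is $w^{*} \in R_{W}$ with $\bar{f}_{Y}^{\dop(W=w^{*})}(\mathbf{n}) \geq \bar{f}_{Y}^{\dop(Z=z)}(\mathbf{n})$ for all $z \in R_{Z}$. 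Chaining these at $w = w^{*}$ yields $\bar{f}_{Y}^{\dop(X=x^{*})}(\mathbf{n}) \geq \bar{f}_{Y}^{\dop(Z=z)}(\mathbf{n})$ for every $z$, which is exactly $X \detsup_{Y} Z$ (witnessed by the same $x^{*}$).

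For the set case, the relation is $\mathbf{X} \intsup_{Y} \mathbf{W}$ iff $\forall W \in \mathbf{W},\ \exists X \in \mathbf{X}\colon X \intsup_{Y} W$. Reflexivity $\mathbf{X} \intsup_{Y} \mathbf{X}$ follows by taking, for each $W \in \mathbf{X}$, the witness $X \vcentcolon= W$ and using reflexivity of the node relation. Transitivity is equally straightforward: given $\mathbf{X} \intsup_{Y} \mathbf{W}$ and $\mathbf{W} \intsup_{Y} \mathbf{Z}$, for any $Z \in \mathbf{Z}$ there exists $W \in \mathbf{W}$ with $W \intsup_{Y} Z$, and for that $W$ there exists $X \in \mathbf{X}$ with $X \intsup_{Y} W$; transitivity at the node level then gives $X \intsup_{Y} Z$, which is the required witness.

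There is essentially no main obstacle here: both properties reduce to bookkeeping on the existential quantifier in \Cref{def:det-sup} and on the $\forall W \exists X$ clause in the set-extension definition. The only subtle point worth stating carefully in the write-up is that transitivity for nodes requires picking the same witness $x^{*}$ produced by the first hypothesis (it does not depend on the inner quantifier over $z$), which is why the chaining works pointwise in $(\C, \mathbf{n})$. Since the argument never uses anything beyond the definitions, it holds verbatim for $\condsup_{Y}$ as well, either directly or by invoking \Cref{prop:cond-vs-atomic}.
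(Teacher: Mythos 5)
Your proof is correct and follows essentially the same route as the paper: reflexivity via the max-form tautology, transitivity by chaining the witnesses (equivalently, chaining the max inequalities as the paper does), and the set case reduced to the node case in exactly the same way. One small wording caveat: in the reflexivity step, an \emph{arbitrary} $x^{*}$ does not witness the existential form of \Cref{def:det-sup} (the chosen $x^{*}$ must dominate every $x \in R_{X}$, not just itself), so you should rely on the max-form tautology or take $x^{*}$ to be a maximizer, as the paper does, rather than ``any $x^{*}$''.
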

\begin{proof} %
  Let $G$ be a DAG and let $Y\in G$.
  We will first prove the result for the interventional superiority relation on nodes.\\
  Reflexivity:
  Let $X$ be a node in $G$ and $\C \in \C(G)$.
  For each setting $n$, the largest value of $Y$ that can be achieved by intervening on $X$ is attained when setting $X$ to $x^{*}(n) = \argmax_{x} \uf_{Y}^{\dop(X=x)}(n)$.
  Hence, $\uf_{Y}^{\dop(X=x^{*}(n))}(n) \ge \uf_{Y}^{\dop(X=x)}(n)$ for all $x\in R_{X}$, so that $X \detsup_{Y} X$.\\
  Transitivity:
  assume that $Z \detsup_{Y} W$ and $W\detsup_{Y} X$.
  Let $\C\in\C(G)$ and $n\in R_{N}$.
  Then $\max_{x} \uf_{Y}^{\dop(X=x)}(n) \le \max_{w} \uf_{Y}^{\dop(W=w)}(n) \le \max_{z} \uf_{Y}^{\dop(Z=z)}(n)$.
  Hence $Z \detsup_{Y} X$.\\
  This establishes that $\detsup_{Y}$ is a preorder in $G$.
  We now show the result for node sets. Let $\mathbf{X}$, $\mathbf{W}$ and $\mathbf{Z}$ be sets of nodes in $G$. \\
  Reflexivity:
  let $X\in \mathbf{X}$.
  Since, by reflexivity of $\detsup_{Y}$ on nodes, we have that $X\detsup_{Y} X$, it trivially follows that $\mathbf{X}\detsup_{Y}\mathbf{X}$.\\
  Transitivity:
  assume that $\mathbf{Z}\detsup_{Y} \mathbf{W}$ and $\mathbf{W}\detsup_{Y} \mathbf{X}$.
  Let $X\in \mathbf{X}$.
  Then there is $W\in \mathbf{W}$ such that $W\detsup_{Y} X$.
  There is also $Z\in \mathbf{Z}$ such that $Z\detsup_{Y} W$.
  By transitivity of $\detsup_{Y}$ on nodes, it follows that $Z\detsup_{Y} X$.
  Hence $\mathbf{Z}\detsup_{Y} \mathbf{X}$.
\end{proof}

\begin{remark}[Interventional Superiority is not a partial order, and it is not total]
  One may have expected interventional superiority (both on nodes and on node sets) to be a partial order in $G$.
  However, they are merely preorders.
  That is, the antisymmetry property does not hold (unless, as shown in \Cref{sec:antisymm-holds-any}, we restrict ourselves to the ancestors of $Y$).
  To see this for $\detsup_{Y}$ on nodes, just notice that, if $X, W \notin \An(Y)$, then trivially $X\detsup_{Y} W$ and $W\detsup_{Y} X$, no matter what $X$ and $W$ are.
  For node sets, consider the case where $\mathbf{X}\subsetneq \mathbf{W}$, but the best intervention lies in $\mathbf{X}$.
  Then $\mathbf{X}\detsup_{Y} \mathbf{W}$ and $\mathbf{W}\detsup_{Y}\mathbf{X}$, even though $\mathbf{X}\ne \mathbf{W}$.\\
  Notice also that $\detsup_{Y}$ on nodes cannot be a total preorder: just consider the graph $A_{1}\rightarrow Y \leftarrow A_{2}$.
  One can have an SCM $\C$ in which intervening on $A_{1}$ can lead to larger values of $Y$ than interventions on $A_{2}$.
  But one can also switch the structural assignments of $\C$, which would lead to the opposite conclusion.
  This example also shows that $\detsup_{Y}$ on node sets also cannot be a total preorder.
\end{remark}

\subsection{\ldots and are Partial Orders in $\An(Y)$}
\label{sec:antisymm-holds-any}
\begin{lemma}[Antisymmetry holds in $\An(Y)$]
\label{lemma:on-the-path}
For $X_1 \in \An(Y) \setminus \{Y\}$ with $\pi_{1}$ a directed path from $X_1$ to $Y$ and $X_2 \in \mathbf{V} \setminus \{Y\}$ with $X_2 \detsup_Y X_1$, we have $X_2 \in \pi_1$.
\end{lemma}
\begin{proof}
  Take $\cC_1$ to be the SCM in which the nodes $\mathbf{V}$ are binary variables, with structural assignments
  \begin{equation*}
    V_i =
    \begin{cases}
      P_i & \text{if $P_i \to V_i$ is on the path $\pi_1$,}\\
      0   & \text{otherwise.}
    \end{cases}
  \end{equation*}
  Then intervening on $X_1$ (by setting it to $1$) or another node in $\pi_1$ will give $Y=1$, but intervening on a node not in $\pi_1$ gives $Y=0$. So $X_2 \detsup_Y X_1$ implies $X_2 \in \pi_1$.
\end{proof}

\begin{lemma}[Everything beats Non-Ancestors]
  \label{lemma:everything-beats-nonancestors}
  If $X_1 \in \mathbf{V} \setminus \{Y\}$ and $X_2 \notin \An(Y)$, then $X_1 \detsup_Y X_2$.
\end{lemma}
\begin{proof}
  For any SCM $\cC$, intervening on $X_2 \notin \An(Y)$ will give $Y = \bar{f}_{Y}(\mathbf{n})$.
  When intervening on $X_1$, we can set it to $\bar{f}_{X_{1}}(\mathbf{n})$ (the observational value for $X_{1}$) to obtain the same value of $Y$ (and possibly we can do better).
  Thus $X_1 \detsup_Y X_2$.
\end{proof}

\begin{lemma}[Non-Ancestors do not beat Ancestors]
  \label{lemma:nonancestors-dont-beat-ancestors}
  If $X_1 \in \An(Y) \setminus \{Y\}$ and $X_2 \notin \An(Y)$, then $X_2 \not\detsup_Y X_1$.
\end{lemma}
\begin{proof}
  $X_2$ is not on any directed path from $X_1$ to $Y$, so it follows from \Cref{lemma:on-the-path} that $X_2 \not\detsup_Y X_1$.
\end{proof}

\begin{proposition}[Antisymmetry]
  \label{prop:antisymmetry}
  For $X_1 \in \An(Y) \setminus \{Y\}$ and $X_2 \in \mathbf{V} \setminus \{Y\}$, if $X_1 \detsup_Y X_2$ and $X_2 \detsup_Y X_1$ then $X_1 = X_2$.
\end{proposition}
\begin{proof}
  Pick a directed path $\pi_1$ from $X_1$ to $Y$.
  By \Cref{lemma:on-the-path}, $X_2 \in \pi_1$. Thus $X_2 \in \An(Y)$. Pick $\pi_2 = \pi_{1}\vert^{X_{2}}$.
  Now by an analogous argument, we find that $X_1 \in \pi_2$, which implies $X_1 = X_2$.
\end{proof}

Notice that \Cref{prop:antisymmetry} is actually slightly stronger than antisymmetry within $\An(Y)$, since $X_1 \detsup_Y X_2$ and $X_2 \detsup_Y X_1$ can only occur for distinct $X_1,X_2$ if \emph{both} are outside $\An(Y)$.

\section{Proofs for the Minimal Globally Interventionally Superior Set}
\label{sec:app-proofs-mgiss}

\subsection{Uniqueness of the mGISS}
\begin{lemma}[Elements of an mGISS are not Comparable]
  \label{lemma:mgiss-elements-not-comparable}
  Let $\mathbf{A}\subseteq \mathbf{V}$ be an mGISS relative to $Y$.
  Let $X, X' \in \mathbf{A}$ and $X \ne X'$.
  Then $X' \not\detsup_{Y} X$.
\end{lemma}
\begin{proof}
  Assume $X' \detsup_{Y} X$ for the sake of contradiction.
  We will show that this implies that $\mathbf{A}\setminus X$ is also a GISS.
  That is, that for every element of $(\mathbf{V}\setminus Y)\setminus (\mathbf{A}\setminus X)$ there is an element of $\mathbf{A}\setminus X$ which is superior to it.
  Let $W\in (\mathbf{V}\setminus Y)\setminus (\mathbf{A}\setminus X)$.
  If $W = X$, then $X' \in \mathbf{A}\setminus X$ and $X' \detsup_{Y} X$.
  If $W \ne X$, then $W \in (\mathbf{V}\setminus Y)\setminus \mathbf{A}$.
  Since $\mathbf{A}$ is a GISS, we can pick $\tilde{X} \in \mathbf{A}$ such that $\tilde{X} \detsup_{Y} W$.
  In case $\tilde{X} = X$, we can choose instead $X'$.
  Indeed, since $X' \detsup_{Y} X$ and $X \detsup_{Y} W$, we have by transitivity of $\detsup_{Y}$ (\Cref{prop:sup_is_preorder}) that $X' \detsup_{Y} W$.
  This shows that $\mathbf{A}\setminus X \subseteq \mathbf{A}$ is a GISS, contradicting the minimality of $\mathbf{A}$.
\end{proof}

\begin{lemma}\label{lemma:mGISS-subset-AnY}
  Let $G$ be a DAG and $Y$ a node of $G$ with at least one parent.
  Then any minimal globally interventionally superior set of $G$ relative to $Y$ is a nonempty subset of $\An(Y)$.
\end{lemma}
\begin{proof}
  Let $\mathbf{U}$ be a minimal globally interventionally superior set of $G$ with respect to $Y$.
  Suppose $\mathbf{U} \cap \An(Y) = \emptyset$. Then $((\mathbf{V} \setminus \{Y\}) \setminus \mathbf{U}) \cap \An(Y) \ne \emptyset$, and by \Cref{lemma:nonancestors-dont-beat-ancestors}, no element of $\mathbf{U}$ is interventionally superior to those: contradiction. So $\mathbf{U} \cap \An(Y) \ne \emptyset$.

  Next suppose $\mathbf{U} \nsubseteq \An(Y)$. Then $\mathbf{U} \cap \An(Y)$ is also a GISS: any of its elements is interventionally superior to $\mathbf{U} \setminus \An(Y)$ by \Cref{lemma:everything-beats-nonancestors}.
  So again we have a contradiction, and we conclude that if $\mathbf{U}$ is minimal, $\mathbf{U} \subseteq \An(Y)$.
\end{proof}

\uniquenessmgiss*
\begin{proof}
  Let $\mathbf{A}$ and $\mathbf{B}$ be minimal globally interventionally superior sets of $G$ with respect to $Y$.
  Assume, for the sake of contradiction, that $\mathbf{B}\ne \mathbf{A}$.
  By minimality of $\mathbf{A}$, we have $\mathbf{B}\not\subseteq \mathbf{A}$, so that $\mathbf{B} \setminus \mathbf{A} \ne \emptyset$.
  Let $X\in \mathbf{B}\setminus \mathbf{A}$.
  In particular, $X\in (\mathbf{V}\setminus Y) \setminus \mathbf{A}$.
  Hence, $\exists Z\in \mathbf{A} \suchthat Z\detsup_{Y} X$.
  Either $Z \in \mathbf{A}\cap \mathbf{B}$ or $Z\in \mathbf{A}\setminus \mathbf{B}$.
  If $Z\in \mathbf{A}\setminus \mathbf{B}$, then in particular $Z\in (\mathbf{V}\setminus Y) \setminus \mathbf{B}$.
  Since $\mathbf{B}$ is a GISS, there is $X'\in \mathbf{B}$ such that $X' \detsup_{Y} Z$.
  We claim that $X' \ne X$: Suppose for the sake of contradiction that $X' = X$. Then by \Cref{prop:antisymmetry}, $X,Z \notin \An(Y)$. But this contradicts \Cref{lemma:mGISS-subset-AnY}, so we conclude $X' \ne X$.
  By transitivity of $\detsup_{Y}$ (\Cref{prop:sup_is_preorder}), it follows that $X' \detsup_{Y} X$.
  Similarly, if $Z \in \mathbf{A} \cap \mathbf{B}$, one again has two elements $Z$ and $X$ of $\mathbf{B}$ such that $Z\detsup_{Y} X$.
  In both cases, this contradicts the assumption that $\mathbf{B}$ is an mGISS, as per \Cref{lemma:mgiss-elements-not-comparable}.
\end{proof}

\subsection{The LSCA Closure and $\Lambda$-Structures}

It will be useful to know that, in order to show that a node belongs to $\Linfty(\mathbf{U})$, it suffices to prove that it belongs to the LSCA closure of a subset of $\mathbf{U}$.
We show by induction that this is indeed the case.
\begin{lemma}
  \label{lemma:subsets-and-closure}
  If $\mathbf{U}'\subseteq \mathbf{U}$, then $\Linfty(\mathbf{U}')\subseteq \Linfty(\mathbf{U})$.
\end{lemma}
\begin{proof}
  Recall that $\Linfty(\mathbf{U})=\mathcal{L}^{i}(\mathbf{U})$ for some $i\in \mathbb{N}$.
  We will show the result by induction on $i\in \mathbb{N}$.
  The base case holds trivially: $\mathcal{L}^{0}(\mathbf{U}')=\mathbf{U}'\subseteq \mathbf{U} = \mathcal{L}^{0}(\mathbf{U})$.
  Now assume that $\mathcal{L}^{i}(\mathbf{U}') \subseteq \mathcal{L}^{i}(\mathbf{U})$ for a given $i\in \mathbb{N}$ (induction hypothesis).
  Let $V\in \LSCA(\mathcal{L}^{i}(\mathbf{U}'))$.
  Then there are paths $V\dashrightarrow X$, $V\dashrightarrow Y$ with $X,Y\in \mathcal{L}^{i}(\mathbf{U}')$ not containing $Y$ and $X$, respectively.
  But $X,Y$ are also in $\mathcal{L}^{i}(\mathbf{U})$, so that $V\in \LSCA(\mathcal{L}^{i}(\mathbf{U}))$.
  Then $\LSCA(\mathcal{L}^{i}(\mathbf{U}')) \subseteq \LSCA(\mathcal{L}^{i}(\mathbf{U}))$.
  Using once more the induction hypothesis, it follows that $\mathcal{L}^{i+1}(\mathbf{U}') = \LSCA(\mathcal{L}^{i}(\mathbf{U}')) \cup \mathcal{L}^{i}(\mathbf{U}') \subseteq \LSCA(\mathcal{L}^{i}(\mathbf{U})) \cup \mathcal{L}^{i}(\mathbf{U}) = \mathcal{L}^{i+1}(\mathbf{U})$.
\end{proof}

\begin{lemma}
  \label{lemma:lsca_lambda_struct}
  Let $\mathbf{U}\subseteq \mathbf{V}$.
  If $V\in \LSCA(\mathbf{U})\setminus \mathbf{U}$, then $V$ forms a $\Lambda$-structure over $(\mathbf{U},\mathbf{U})$.
\end{lemma}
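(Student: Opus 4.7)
The strategy is to unpack the definition of $\LSCA(\mathbf{U})$ and translate the minimality of $V$ directly into a path-disjointness statement. Since $V \in \LSCA(\mathbf{U}) \setminus \mathbf{U}$, there exist $U, U' \in \mathbf{U}$ such that $V \in \LSCA(U, U')$; by the definition of strict common ancestor, we may fix paths $\pi_A \colon V \dashrightarrow U$ not containing $U'$ and $\pi_B \colon V \dashrightarrow U'$ not containing $U$. The claim reduces to showing that any such pair of paths automatically satisfies $\pi_A \cap \pi_B = \{V\}$: once this holds, $(V, \pi_A, \pi_B)$ is a $\Lambda$-structure over $(U, U')$, which by Definition~\ref{def:lambda_struct} witnesses $V \in \Lambda(\mathbf{U}, \mathbf{U})$.

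The core of the argument is a contradiction driven by the minimality clause. Suppose there were some $W \in (\pi_A \cap \pi_B) \setminus \{V\}$. Since $\pi_A$ starts at $V$ and passes through $W \neq V$, the initial sub-path $\pi_A\vert_W \colon V \dashrightarrow W$ is non-trivial, so $W$ is a strict descendant of $V$. At the same time, the tail sub-paths $\pi_A\vert^W \colon W \dashrightarrow U$ and $\pi_B\vert^W \colon W \dashrightarrow U'$ inherit the avoidance conditions of $\pi_A$ and $\pi_B$: since $U' \notin \pi_A$ we have $U' \notin \pi_A\vert^W$, and symmetrically $U \notin \pi_B\vert^W$. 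Hence $W \in \SCA(U, U')$. But then $W$ is a strict common ancestor of $U$ and $U'$ that lies strictly below $V$ in the ancestor order $\anpo$, contradicting the fact that $V$ is a lowest (i.e.\ minimal in the appropriate direction) element of $\SCA(U, U')$.

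A small bookkeeping point, needed to make the sub-path step meaningful, is that $W$ is distinct from both $U$ and $U'$: $U$ is an endpoint of $\pi_A$ but is explicitly excluded from $\pi_B$, so $U \notin \pi_A \cap \pi_B$, and symmetrically $U' \notin \pi_A \cap \pi_B$. I do not anticipate any real obstacle; the only subtlety is being careful about the direction of $\anpo$ so that ``$W$ is a strict descendant of $V$'' genuinely violates $V$ being lowest among strict common ancestors of $(U, U')$. Once this contradiction is in hand, the very same $\pi_A$ and $\pi_B$ we fixed at the outset serve as the witnessing paths of the $\Lambda$-structure, completing the proof.
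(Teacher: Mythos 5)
Your proof is correct and uses essentially the same idea as the paper's: any intersection point $W\neq V$ of the two paths would itself be a strict common ancestor of $(U,U')$ properly descended from $V$, contradicting $V\in\LSCA(U,U')$. The only (immaterial) difference is that you apply this to the full paths ending at $U,U'$ and obtain a $\Lambda$-structure over $(U,U')$ directly, whereas the paper first truncates each path at its first element of $\mathbf{U}$ and exhibits a $\Lambda$-structure over those truncation points; both are valid witnesses for the lemma.
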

\begin{proof}%
  Let $V\in \LSCA(\mathbf{U})\setminus \mathbf{U}$.
  By \Cref{def:lsca_set}, there are distinct $U, U' \in \mathbf{U}$ for which there are paths $\pi\colon V\dashrightarrow U$ and $\pi'\colon V\dashrightarrow U'$ whose interiors do not intersect $\{U, U'\}$.
  \begin{minipage}[c]{0.65\textwidth}
    Now, let $W$ (respectively $W'$) be the first element in $\pi$ (respectively $\pi'$) in $\mathbf{U}$.
    Notice that $W\ne W'$, otherwise $W=W'$ would be in $\SCA(U,U')$ and be reachable from $V$, so that $V$ would not be a minimal element of $\SCA(U,U')$. This would contradict $V\in \LSCA(U,U')$.
    Similarly, the paths $\pi\vert_{W}\colon V\dashrightarrow W$, $\pi'\vert_{W'}\colon V\dashrightarrow W'$ resulting from restricting $\pi$ cannot have interior intersections: such an intersection node $\tilde{V}$ would be an SCA of $U,U'$ reachable from $V$, so that $V\notin \LSCA(U,U')$ --- again a contradiction.
    Therefore, $V$ forms a $\Lambda$-structure over $W,W'$.
  \end{minipage}
  \begin{minipage}[c]{0.3\textwidth}
    \vspace{-0cm}
    \centering
    \begin{tikzpicture}[mynode/.style={circle,draw=black,fill=white,inner sep=0pt,minimum size=0.8cm},>=latex]
      \node (v) at (0,1) {$V$};
      \node (w) at (-0.5,0) {$W$};
      \node (w') at (0.5,0) {$W'$};
      \node (u) at (-0.5,-1) {$U$};
      \node (u') at (0.5,-1) {$U'$};

      \draw[->, dotted] (v) -- (w);
      \draw[->, dotted] (v) -- (w');
      \draw[->, dotted] (w) -- (u);
      \draw[->, dotted] (w') -- (u');

      \node[fit=(w) (u'), draw=red, rounded corners, inner sep=6pt, dashed] (Urect) {};
      \node[right] at (Urect.east) {$\color{red}\mathbf{U}$};
    \end{tikzpicture}
  \end{minipage}\\
\end{proof}

\simplecharactlsca*
\begin{proof}
  Proof of $\subseteq$:  %
  If $\Linfty(\mathbf{U}) = \mathbf{U}$, then the result is trivially true.
  We assume from now on that $\Linfty(\mathbf{U})\supseteq \mathbf{U}$.
  We will prove that $V\in\Linfty(\mathbf{U}) \Rightarrow V\in \Lambda(\mathbf{U},\mathbf{U})$ by induction with respect to a chosen strict reverse topological order $<$ (\emph{i.e.} $V' \in \An(V)\setminus \{V\} \Rightarrow V < V'$).
  The base case is $V_{0}\in \mathbf{U}$, since an element of $\mathbf{U}$ will be the first element of $\Linfty(\mathbf{U})$ for any chosen $<$.
  In this case, we can simply take the trivial paths $\pi=\pi'=(V_{0})$.
  Then $V_{0}\in \Lambda(\mathbf{U},\mathbf{U})$.
  Now, assume that $V\in \Linfty(\mathbf{U})\setminus\mathbf{U}$ and that the implication holds for all $W\in \Linfty(\mathbf{U})$ such that $W<V$ (induction hypothesis).
  Let $W,W'$ be\footnotemark distinct elements of $\Linfty(\mathbf{\mathbf{U}})$ such that $V\in \LSCA(W,W')$.
  \footnotetext{Such $W, W'$ must exist by the definition of $\Linfty(\mathbf{U})$ whenever $\Linfty(\mathbf{U})\supseteq \mathbf{U}$.}
  In particular, $W,W'<V$.
  By \Cref{lemma:lsca_lambda_struct} applied to $\{W,W'\}$, there are paths $V \overset{\alpha}{\dashrightarrow} W$, $V \overset{\alpha'}{\dashrightarrow} W'$ intersecting only at $V$.
  Furthermore, by the induction hypothesis we have that $W,W'\in \Lambda(\mathbf{U},\mathbf{U})$, so that there are paths $W \overset{\pi_{1}}{\dashrightarrow} U_{1}$, $W \overset{\pi_{2}}{\dashrightarrow} U_{2}$, $W' \overset{\pi'_{1}}{\dashrightarrow} U'_{1}$, $W' \overset{\pi'_{2}}{\dashrightarrow} U'_{2}$ such that $U_{1}, U_{2}, U'_{1}, U'_{2} \in \mathbf{U}$, $\pi_{1}\cap \pi_{2} = \{W\}$ and $\pi'_{1}\cap \pi'_{2} = \{w'\}$.

  \begin{minipage}[c]{0.65\textwidth}
    Let $\mathbf{S} = (\alpha \cup \pi_{1} \cup \pi_{2}) \cap (\alpha' \cup \pi'_{1} \cup \pi'_{2})$ and $\trianglelefteq$ be a chosen topological order.
    If $\mathbf{S} = \emptyset$, we can just take $\gamma = \pi_{1}\circ \alpha\colon V\dashrightarrow U_{1}$ and $\gamma' = \pi'_{1}\circ \alpha'\colon V\dashrightarrow U'_{1}$ to form a $\Lambda$-structure for $V$ over $(\mathbf{U},\mathbf{U})$.
    Assume from now on that $\mathbf{S}\ne \emptyset$.
    Let $S$ be the first element of $\mathbf{S}$ with respect to $\trianglelefteq$.
    Since $\alpha \cap \alpha' = \emptyset$, there are three options: either (i) $S\in \pi_{i} \cap \alpha'\setminus\{W'\}$ for some $i$; (ii) $S\in \pi'_{i} \cap \alpha\setminus\{W\}$ for some $i$; or (iii) $S\in \pi_{i} \cap \pi'_{j}$ for some $i,j$.
    By symmetry, we can restrict ourselves to the cases (i) and (iii): the argument for (i) will also hold for (ii).
    In both cases (i) and (ii) we have $S \in \pi_{i}$ for some $i\in\{1,2\}$.
    Without loss of generality, assume $s\in \pi_{2}$.
  \end{minipage}
  \begin{minipage}[c]{0.35\textwidth}
    \vspace{-0cm}
    \centering
    \begin{tikzpicture}[mynode/.style={fill=white,minimum size=0.1cm},>=latex]
      \node[mynode] (v) at (0,1) {$V$};
      \node[mynode] (w) at (-0.75,0) {$W$};
      \node[mynode] (w') at (0.75,0) {$W'$};
      \node[mynode] (u1) at (-1.5,-1.5) {$U_{1}$};
      \node[mynode] (u2) at (-0.25,-1.5) {$U_{2}$};
      \node[mynode] (u'1) at (0.25,-1.5) {$U'_{1}$};
      \node[mynode] (u'2) at (1.5,-1.5) {$U'_{2}$};
      \node (s) at (0,-0.65) {$S$};

      \draw[->, dotted, color=red] (v) -- node[pos=0.9,above,inner sep=8pt] {\tiny$\gamma_{1}$} (w);
      \draw[->, dotted, color=blue] (v) -- node[pos=0.9,above,inner sep=8pt] {\tiny$\gamma_{2}$} (w');
      \draw[->, dotted] (w) -- (u1);
      \draw[->, dotted, color=red] (w) -- (s);
      \draw[->, dotted] (s) -- (u2);
      \draw[->, dotted] (w') -- (s);
      \draw[->, dotted, color=red] (s) -- (u'1);
      \draw[->, dotted, color=blue] (w') -- (u'2);

    \end{tikzpicture}
  \end{minipage}
  For case (iii), assume, also without loss of generality, that $s\in \pi'_{1}$.
  If furthermore $s\ne W'$, we can construct the following two paths with no non-trivial intersections:
  \begin{equation}
    \label{eq:caseii_composite_paths}
    \begin{cases}
      \gamma_{1} = \pi_{1}'\vert^{S}\circ\pi_{2}\vert_{s}\circ\alpha: V \dashrightarrow U'_{1}\\
      \gamma_{2} = \pi_{2}'\circ\alpha': V \dashrightarrow U'_{2}
    \end{cases}
  \end{equation}
  To see that these paths have no non-trivial intersections, start by noticing that, by definition of $S$, there is no intersection between $\pi_{2}$ and $\pi_{2}'$ at nodes $A \triangleleft S$, so that $\pi_{2}\vert_{S} \cap \pi_{2}' = \emptyset$.
  And since $\pi_{1}'\cap\pi_{2}' = \{W'\}$ and $S \ne W'$, we have $\pi_{1}'\vert^{S}\cap\pi_{2}' = \emptyset$.
  Finally, $\pi_{2}\cap \alpha'=\pi_{2}'\cap\alpha=\pi_{1}\cap \alpha'=\emptyset$, since otherwise there would be elements of $S$ which are ancestors of $s$.
  Notice that this argument still holds if $S=W$, in which case $\gamma_{1}$ reduces to $\pi'_{1}\vert^{W} \circ \alpha$.
  This shows that $V\in \Lambda(\mathbf{U},\mathbf{U})$ for case (iii), in case $S\ne W'$.
  If instead $S=W'$, we can simply choose paths similar to those for the case $S=W$ (just changing the numbers and the prime) as follows: $\gamma_{1} = \pi_{1} \circ \alpha$ and $\gamma_{2} = \pi_{2}\vert^{W'} \circ \alpha'$.

  \begin{minipage}[c]{0.49\textwidth}
    \vspace{-0cm}
    \centering

     \begin{tikzpicture}[mynode/.style={fill=white,minimum size=0.1cm},>=latex]
      \node[mynode] (v) at (0,1) {$V$};
      \node[mynode] (w) at (-0.75,0) {$W$};
      \node[mynode] (w') at (0.75,0) {$W'$};
      \node[mynode] (u1) at (-1.5,-1.5) {$U_{1}$};
      \node[mynode] (u2) at (-0.50,-1.5) {$U_{2}$};
      \node[mynode] (u'1) at (0.25,-1.5) {$U'_{1}$};
      \node[mynode] (u'2) at (1.5,-1.5) {$U'_{2}$};

      \draw[->, dotted, color=red] (v) -- node[pos=0.9,above,inner sep=8pt] {\tiny$\gamma_{1}$} (w);
      \draw[->, dotted, color=blue] (v) -- node[pos=0.9,above,inner sep=8pt] {\tiny$\gamma_{2}$} (w');
      \draw[->, dotted] (w) -- (u1);
      \draw[->, dotted] (w) -- (u2);
      \draw[->, dotted] (w') -- (w);
      \draw[->, dotted, color=red] (w) -- (u'1);
      \draw[->, dotted, color=blue] (w') -- (u'2);
    \end{tikzpicture}
  \end{minipage}
  \begin{minipage}[c]{0.49\textwidth}
    \centering
    \begin{tikzpicture}[mynode/.style={fill=white,minimum size=0.1cm},>=latex]
      \node[mynode] (v) at (0,1) {$V$};
      \node[mynode] (w) at (-0.75,0) {$W$};
      \node[mynode] (w') at (0.75,0) {$W'$};
      \node[mynode] (u1) at (-1.5,-1.5) {$U_{1}$};
      \node[mynode] (u2) at (-0.50,-1.5) {$U_{2}$};
      \node[mynode] (u'1) at (0.25,-1.5) {$U'_{1}$};
      \node[mynode] (u'2) at (1.5,-1.5) {$U'_{2}$};

      \draw[->, dotted, color=red] (v) -- node[pos=0.9,above,inner sep=8pt] {\tiny$\gamma_{1}$} (w);
      \draw[->, dotted, color=blue] (v) -- node[pos=0.9,above,inner sep=8pt] {\tiny$\gamma_{2}$} (w');
      \draw[->, dotted, color=red] (w) -- (u1);
      \draw[->, dotted] (w) -- (w');
      \draw[->, dotted, color=blue] (w') -- (u2);
      \draw[->, dotted] (w') -- (u'1);
      \draw[->, dotted] (w') -- (u'2);
    \end{tikzpicture}
  \end{minipage}

  \begin{minipage}[c]{0.65\textwidth}
    We now turn to case (i), where $S\in \pi_{2}\cap\alpha'\setminus\{W\}$.
    Construct the paths:
    \begin{equation}
      \begin{cases}
        \gamma_{1} = \pi_{1}\circ\alpha: V \dashrightarrow U_{1}\\
        \gamma_{2} = \pi_{2}\vert^{s}\circ\alpha'\vert_{s}: V \dashrightarrow U_{2}
      \end{cases}
    \end{equation}
    Notice that $\alpha\cap\pi_{2}\vert^{S}=\emptyset$, otherwise there would be a cycle in the DAG.
    Also, $\pi_{1}\cap\alpha'\vert_{S}=\emptyset$ by definition of $S$.
    And trivially $\pi_{1}\cap\pi_{2}\vert^{S}=\emptyset$ and $\alpha\cap\alpha'=\{V\}$.
  \end{minipage}
  \begin{minipage}[c]{0.35\textwidth}
    \vspace{-0cm}
    \centering
    \begin{tikzpicture}[mynode/.style={fill=white,minimum size=0.1cm},>=latex]
      \node[mynode] (v) at (0,1.25) {$V$};
      \node[mynode] (w) at (-0.5,0.65) {$W$};
      \node[mynode] (w') at (0.75,-0.25) {$W'$};
      \node[mynode] (u1) at (-1.5,-1.25) {$U_{1}$};
      \node[mynode] (u2) at (-0.25,-1.25) {$U_{2}$};
      \node[mynode] (u'1) at (0.25,-1.25) {$U'_{1}$};
      \node[mynode] (u'2) at (1.5,-1.25) {$U'_{2}$};
      \node (s) at (0.5,0.45) {$S$};

      \draw[->, dotted, color=red] (v) -- (w);
      \draw[->, dotted, color=blue] (v) -- node[pos=0.9,above,inner sep=8pt] {\tiny$\gamma_{2}$} (s);
      \draw[->, dotted] (w) -- (s);
      \draw[->, dotted, color=blue] (s) -- (u2);
      \draw[->, dotted] (s) -- (w');
      \draw[->, dotted, color=red] (w) -- node[pos=0.5,above,inner sep=10pt] {\tiny$\gamma_{1}$} (u1);
      \draw[->, dotted] (w') -- (u'2);
      \draw[->, dotted] (w') -- (u'1);

    \end{tikzpicture}
  \end{minipage}

  It follows that $\gamma_{1}$ and $\gamma_{2}$ intersect only trivially, so that $(v,\gamma_{1},\gamma_{2})$ forms a \(\Lambda\)-structure over $(\mathbf{U},\mathbf{U})$.

  \partialqed{\subseteq}

  Proof of $\supseteq$: %
  Let $V\in\Lambda(\mathbf{U},\mathbf{U})$.
  Then, there is a pair of nodes $U,U' \in \mathbf{U}$ over which $V$ forms \(\Lambda\)-structures.
  We are going to show that $V\in\Linfty(\{U,U'\})$.
  Let $\mathbf{L}$ be the set of all the \(\Lambda\)-structures $\lambda_{i} = (V,\pi_{i}\colon V\dashrightarrow U,\pi_{i}'\colon V\dashrightarrow U')$ over $(U, U')$.
  Let $A$  be the set of nodes in $\Linfty(\{U,U'\})$ which belong to some $\pi_{i}$.
  Formally, $A = \{a \in \Linfty(\{U,U'\}) \colon \exists i \suchthat \lambda_{i} \in \mathbf{L}, a \in \pi_{i} \}\setminus \{V\}$.
  Let $\dot{a}$ be the first element of $A$ with respect to a chosen topological order $\trianglelefteq$.
  Denote by $\Pi'(\dot{a})$ the set of paths $\pi_{i}'$ belonging to some $\Lambda$-structure $(V, \pi_{i}', \pi_{i})$ in $\mathbf{L}$ such that $\pi_{i}$ contains $\dot{a}$.
  Let $A'(\dot{a}) = \{a' \in \Linfty(\{U,U'\}) \colon \exists \pi'_{i} \in \Pi'(\dot{a}) \suchthat a'\in  \pi'_{i}\}$.
  \begin{minipage}[c]{0.69\textwidth}
    Furthermore, let\footnotemark $\dot{a}'$ be the first element of $A'(\dot{a})$ with respect to $\trianglelefteq$.
    Denote by $(V, \dot{\pi}, \dot{\pi}')$ a $\Lambda$-structure of $\mathbf{L}$ such that $a\in\dot{\pi}$ and $a'\in \dot{\pi}'$.
    Notice that $\dot{a} \ne \dot{a}'$ and $\dot{\pi}\vert_{\dot{a}}\cap \dot{\pi}'\vert_{\dot{a}'} = \{V\}$, by definition of $\Lambda$-structure.
    In particular, $v \in \SCA(\dot{a}, \dot{a}')$.
    Suppose, for the sake of contradiction, that there is $\tilde{V} \in \SCA(\dot{a}, \dot{a}')$ such that $\tilde{V}$ is reachable from $V$.
    Then there is $\lambda = (V, \gamma, \gamma')$ in $\mathbf{L}$ such that $\tilde{V}, \dot{a} \in \gamma$.
    But $\tilde{V} \trianglelefteq \dot{a}$, contradicting minimality of $\dot{a}$.
    Hence $V \in \LSCA(\dot{a}, \dot{a}')$.
    Finally, since $\dot{a}, \dot{a}' \in \Linfty(\mathbf{U})$, it follows that $V\in \Linfty(\mathbf{U})$.
  \end{minipage}
    \footnotetext{Notice that $A'$ (and $A$) are not empty (at least one of $\{U,U'\}$ is in $A'$ (and $A$).}
  \begin{minipage}[c]{0.3\textwidth}
    \centering
    \begin{tikzpicture}[mynode/.style={fill=white,minimum size=0.1cm},>=latex]
      \node[mynode] (v) at (0,1) {$V$};
      \node[mynode] (vt) at (0.75,1) {$\tilde{V}$};
      \node[mynode] (ad) at (-0.75,0) {$\dot{a}$};
      \node[mynode] (ad') at (0.75,0) {$\dot{a}'$};
      \node[mynode] (u) at (-0.75,-1.0) {$U$};
      \node[mynode] (u') at (0.75,-1.0) {$U'$};

      \draw[->, dotted] (v) -- (vt);
      \draw[->, dotted] (v) -- (ad);
      \draw[->, dotted] (v) -- (ad');
      \draw[->, dotted] (vt) -- (ad);
      \draw[->, dotted] (vt) -- (ad');
      \draw[->, dotted] (ad) -- (u);
      \draw[->, dotted] (ad') -- (u');
    \end{tikzpicture}
  \end{minipage}

 \partialqed{\supseteq}
\end{proof}

\subsection{The LSCA Closure is the mGISS}
\begin{lemma}
  \label{lemma:two-paths-implies-in-closure}
  Let $B\in \mathbf{V}$.
  Assume there are nodes $Z, W$ which are reachable from $B$ with paths whose interiors do not intersect $\Linfty(\{Z, W\}))$.
  Then $B \in \Linfty(\{Z, W\})$.
\end{lemma}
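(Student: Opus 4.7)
The plan is to apply the graphical characterisation of $\Linfty$ given by \Cref{prop:simple_graphical_charact}: it suffices to exhibit a $\Lambda$-structure at $B$ over $(\{Z,W\},\{Z,W\})$, i.e., two paths from $B$ to nodes in $\{Z,W\}$ whose only common node is $B$. The natural candidates are the hypothesised paths $\pi_Z\colon B\dashrightarrow Z$ and $\pi_W\colon B\dashrightarrow W$; the entire task reduces to showing that $\pi_Z\cap\pi_W=\{B\}$.

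Suppose, for contradiction, that $\pi_Z$ and $\pi_W$ share a node other than $B$. Let $V$ be the common node appearing latest along $\pi_Z$ (the one nearest $Z$ in the order along $\pi_Z$), and form the sub-paths $\alpha\defeq\pi_Z\vert^{V}\colon V\dashrightarrow Z$ and $\beta\defeq\pi_W\vert^{V}\colon V\dashrightarrow W$. By the choice of $V$, no node strictly after $V$ in $\pi_Z$ lies on $\pi_W$, so $\alpha\cap\beta=\{V\}$. Hence $(V,\alpha,\beta)$ is a $\Lambda$-structure over $(Z,W)$, and \Cref{prop:simple_graphical_charact} yields $V\in\Lambda(\{Z,W\},\{Z,W\})=\Linfty(\{Z,W\})$.

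It remains to derive a contradiction with the interior hypothesis. Since $V\ne B$, either $V$ is an interior node of $\pi_Z$ or $V=Z$ (the only remaining endpoint of $\pi_Z$). In the former case, the membership $V\in\Linfty(\{Z,W\})$ directly contradicts the assumption that the interior of $\pi_Z$ is disjoint from $\Linfty(\{Z,W\})$. In the latter case, $Z$ lies on $\pi_W$ and, since $Z\ne W$, cannot be its terminal endpoint, so $Z$ is an interior node of $\pi_W$; but $Z\in\{Z,W\}\subseteq\Linfty(\{Z,W\})$, contradicting the interior hypothesis for $\pi_W$. In either case we reach a contradiction, so $\pi_Z\cap\pi_W=\{B\}$, and $B$ itself forms a $\Lambda$-structure over $(\{Z,W\},\{Z,W\})$, giving $B\in\Linfty(\{Z,W\})$.

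The principal subtlety is the boundary case $V=Z$ (symmetrically $V=W$): the contradiction does not come directly from $V$ being interior to $\pi_Z$, and instead relies on the implicit distinctness of $Z$ and $W$ together with $\{Z,W\}\subseteq\Linfty(\{Z,W\})$ to force an interior violation of $\pi_W$. The remaining ingredients---the ``latest common node'' truncation and the reduction to $\Lambda$-structures via \Cref{prop:simple_graphical_charact}---are routine.
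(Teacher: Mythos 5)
Your proof is correct, and it takes a genuinely different route from the paper's. The paper does not invoke \Cref{prop:simple_graphical_charact} here at all; instead it runs a self-contained induction: it collects the strict common ancestors of $(Z,W)$ reachable from $B$, discards those whose every pair of paths to $Z$ and $W$ re-intersects an earlier element, organizes the survivors into layers, and argues layer by layer that each one is a lowest strict common ancestor of two previously established members of $\Linfty(\{Z,W\})$, finally reaching $B$ itself. You instead reduce the whole lemma to the characterization $\Linfty(\mathbf{U})=\Lambda(\mathbf{U},\mathbf{U})$ and show that the two hypothesised paths meet only at $B$, via a ``last common node along $\pi_Z$'' argument in which the interior hypothesis (together with $Z\neq W$, which is indeed implicit in the statement, and $Z\neq B$, which follows from reachability in a DAG) rules out any nontrivial intersection. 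This is legitimate --- \Cref{prop:simple_graphical_charact} is established before this lemma and its proof does not depend on it --- and it is essentially the same device the paper itself deploys later in the C4 correctness proofs (\Cref{lem:substructure} combined with \Cref{lem:closureofclosure}). What your approach buys is brevity and the elimination of the pruning-and-layering machinery; what the paper's buys is independence from \Cref{prop:simple_graphical_charact}, though nothing in the paper's logical ordering requires that independence. The one place your argument could have gone wrong is the boundary case $V=Z$, where the contradiction must be routed through the interior of $\pi_W$ rather than of $\pi_Z$, and you handle it correctly.
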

\begin{proof}
  Let $\mathbf{B}$ be the intersection of the two paths. Note that all elements of $\mathbf{B}$ are comparable in the ancestor partial order.
  Take $B'$ to be the least element of $\mathbf{B}$. Then $B'$ forms a Lambda structure over $\{Z,W\}$, so by \Cref{thm:simple_graphical_charact}, $B'$ is in $\Linfty(\{Z, W\})$. Now suppose $B' \neq B$: then this contradicts that the interiors of the paths are not in $\Linfty(\{Z, W\})$. So $B = B' \in \Linfty(\{Z, W\})$.
\end{proof}

\begin{lemma}
  \label{lemma:one_node_reachable}
  Let $B\in\An(Y)$ and $B\notin \LinftyY$.
  Then there is exactly one node $Z \in \LinftyY$ reachable from $B$ by paths whose interiors do not contain elements from $\LinftyY$.
\end{lemma}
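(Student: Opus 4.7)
The plan is to establish existence and uniqueness separately, leveraging the fact that $\LinftyY$ is a fixed point under $\LSCA$-iteration together with \Cref{lemma:two-paths-implies-in-closure}.

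For \emph{existence}, I would argue as follows. Since $B\in\An(Y)$ there exists a directed path $\pi\colon B\dashrightarrow Y$. Because $\Pa(Y)\subseteq \LinftyY$, the node on $\pi$ immediately preceding $Y$ belongs to $\LinftyY$, so the set of nodes of $\pi$ (excluding $B$ itself) that lie in $\LinftyY$ is nonempty. Let $Z$ be the first such node encountered along $\pi$ after $B$, and let $\pi_Z$ be the initial segment of $\pi$ up to $Z$. Then $Z\in\LinftyY$ is reachable from $B$ and the interior of $\pi_Z$, by choice of $Z$, contains no element of $\LinftyY$.

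For \emph{uniqueness}, suppose for contradiction that there exist two distinct nodes $Z,W\in \LinftyY$, each reachable from $B$ via paths $\pi_Z$, $\pi_W$ whose interiors do not meet $\LinftyY$. Since $\{Z,W\}\subseteq \LinftyY$, \Cref{lemma:subsets-and-closure} gives $\Linfty(\{Z,W\})\subseteq \Linfty(\LinftyY)$; combined with the fixed-point property $\Linfty(\LinftyY)=\LinftyY$ (immediate from \Cref{def:lsca_closure}), we obtain $\Linfty(\{Z,W\})\subseteq \LinftyY$. Therefore the interiors of $\pi_Z$ and $\pi_W$ also avoid $\Linfty(\{Z,W\})$. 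Applying \Cref{lemma:two-paths-implies-in-closure}, we conclude $B\in \Linfty(\{Z,W\})\subseteq \LinftyY$, contradicting the hypothesis $B\notin \LinftyY$.

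The main (minor) obstacle is the careful bookkeeping in the uniqueness step: one must verify that the hypothesis of \Cref{lemma:two-paths-implies-in-closure} is met, which requires noting that the stronger condition ``interior disjoint from $\LinftyY$'' implies the weaker condition ``interior disjoint from $\Linfty(\{Z,W\})$'' via the inclusion $\Linfty(\{Z,W\})\subseteq \LinftyY$. Everything else is routine, and the existence argument is essentially a first-hitting argument along any path witnessing $B\in\An(Y)$.
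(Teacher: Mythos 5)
Your proposal is correct and follows essentially the same route as the paper: existence via a first-hitting argument along a path $B\dashrightarrow Y$ using $\Pa(Y)\subseteq\LinftyY$, and uniqueness by invoking \Cref{lemma:two-paths-implies-in-closure} to force $B\in\Linfty(\{Z,W\})\subseteq\LinftyY$, a contradiction. Your explicit verification that the paths' interiors avoid $\Linfty(\{Z,W\})$ (via \Cref{lemma:subsets-and-closure} and the fixed-point property) is a detail the paper leaves implicit, but the argument is the same.
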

\begin{proof}
    There must be at least one node in $\LinftyY$ reachable from $B$ by paths not containing interior elements from $\LinftyY$: since $\PA(Y) \subseteq \LinftyY$ and $B \in \An(Y)$, there are paths from $B$ to $Y$ crossing $\LinftyY$ (in fact, paths must at least intersect $\PA(Y)$).
    Choose one such path $\pi\colon B \dashrightarrow Y$.
    Let $Z$ be the first element of $\LinftyY$ in $\pi$.
    Then the path ${\pi\vert}_{Z}\colon B \dashrightarrow Z$ obtained from $\pi$ by truncating it at $Z$ has no interior nodes in the closure $\LinftyY$.
    Furthermore, if there would be a second path from $B$ to $W \in \LinftyY \setminus \{Z\}$ containing no interior nodes from the closure, then, by \Cref{lemma:two-paths-implies-in-closure}, $B$ would be in $\Linfty(\{Z,W\})$ and thus in $\LinftyY$ --- contradiction.
    This establishes uniqueness.
\end{proof}

\begin{corollary}
  \label{cor:all_paths_through_Z}
  Under the assumptions of \Cref{lemma:one_node_reachable}, all directed paths from $B$ to $Y$ must go through $Z$.
\end{corollary}
\begin{proof}
  If there was a path from $B$ to $Y$ not containing $Z$, it would have to go through a parent $A$ of $Y$.
  But the first element of $\LinftyY$ in this path (perhaps $A$ itself) would contradict the uniqueness of $Z$ from \Cref{lemma:one_node_reachable}.
\end{proof}

To facilitate the exposition of the proof in \Cref{thm:superiority_of_lsca}, we introduce the concept of superiority \emph{in a given SCM}.
It is simply a version of \Cref{def:det-sup} with no universal quantifier over $\cC$.
This definition can be extended for sets of nodes in the obvious way (in an identical manner to how \Cref{def:int-sup-nodesets} extended \Cref{def:cond-int-sup} and \Cref{def:det-sup}).
\begin{definition}[Superiority in an SCM]
 Let $\cC$ be an SCM with causal graph $G$.
 Let $X, W, Y$ be nodes in $G$.
 $X$ is \emph{(deterministically) atomic-intervention superior to $W$ relative to $Y$ in $G$}, denoted $X \detsup_{G,Y} W$, if for every unit $\mathbf{n}$ there is $x\in R_{X}$ such that no atomic intervention on $W$ results in a larger $Y$ than the value of $Y$ resulting from setting $X=x$.
  That is, \Cref{eq:detsup} holds (in $\cC$) for all $\mathbf{n} \in R_{\mathbf{N}}$.
\end{definition}

\lscamgiss*
\begin{proof} %
  We need to prove two results:
  \begin{enumerate}[label=(\roman*)]
    \item $\mathcal{L}^{\infty}(\PA(Y))$ is globally interventionally superior with respect to $Y$.
    That is: $\mathcal{L}^{\infty}(\PA(Y)) \detsup_{Y} \mathbf{V}\setminus\left(\mathcal{L}^{\infty}(\PA(Y)) \cup \{Y\}\right)$.
    \item Furthermore, this is the minimal set with this property.
    Namely, any proper subset $\mathbf{I}$ of $\Linfty(\PA(Y))$ would not be interventionally superior to $\mathbf{V}\setminus \left(\mathbf{I} \cup \{Y\}\right)$.
  \end{enumerate}
  Proof of \emph{(i)}:
  Let $B \in \mathbf{V} \setminus \Linfty(\PA(Y))$ and $B \ne Y$. We want to show that there is $A$ in the closure $\LinftyY$ such that $A \detsup_{Y} B$. \\
  If $B$ is not an ancestor of $Y$, then trivially $\bar{f}_{Y}^{\dop(B=b)}(\mathbf{n}) = \bar{f}_{Y}(\mathbf{n})$ for all $\mathbf{n} \in R_{\mathbf{N}}$ and for all $b \in R_{B}$, so that in particular $\max_{b\in R_{B}} \bar{f}_{Y}^{\dop(B=b)}(\mathbf{n}) = \bar{f}_{Y}(\mathbf{n})$.
  Now, let $A$ be a parent of $Y$, and $a^{*} = \bar{f}_{A}(\mathbf{n})$ (\emph{i.e.} $a^{*}$ is the value that $A$ would attain if no intervention was performed).
  Then, from the definition of unrolled assignment and atomic intervention, $\bar{f}_{Y}^{\dop(A=a^{*})}(\mathbf{n}) = \bar{f}_{Y}(\mathbf{n})$.
  Thus, $\max_{a \in R_{A}} \bar{f}_{Y}^{\dop(A=a)}(\mathbf{n}) \ge \bar{f}_{Y}(\mathbf{n}) = \max_{b\in R_{B}} \bar{f}_{Y}^{\dop(B=b)}(\mathbf{n})$. That is, $A \detsup_{Y} B$. \\
  Assume from now on that $B$ is an ancestor of $Y$.
  From \Cref{lemma:one_node_reachable} there is one and only one node $Z \in \LinftyY$ reachable from $B$ by paths not containing intermediate elements from $\LinftyY$.
  Let $z^{*} \in \argmax_{z\in R_{Z}} [\bar{f}_{Y}^{\dop(Z=z)}(\mathbf{n})]$.
  Further, let $b \in R_{B}$.
  From \Cref{lemma:chaining_b_and_z}, we have that $\bar{f}_{Y}[B](b, \mathbf{n}) = \bar{f}_{Y}[Z](\bar{f}_{Z}[B](b, \mathbf{n}), \mathbf{n})$, which of course is at most $\bar{f}_{Y}[Z](z^{*}, \mathbf{n})$.
  Finally, \Cref{lemma:blocking_vs_intervening} allows us to relate this to a post-intervention unrolled assignment as $\bar{f}_{Y}[Z](z^{*}, \mathbf{n}) = \bar{f}^{\dop(Z=z)}_{Y}(\mathbf{n})$.
  This shows that $\max_{b\in R_{B}}\bar{f}_{Y}^{\dop(B=b)}(\mathbf{n}) \le \max_{z\in R_{Z}}\bar{f}_{Y}^{\dop(Z=z)}(\mathbf{n}) $, so that $Z \detsup_{Y} B$.

  \partialqed{(i)}

  Proof of \emph{(ii)}:
  We want to show that, for any causal graph $G$ and node $Y$ from $G$, for any set $\mathbf{I} \subsetneq \LinftyY$ there is an SCM $\cC$ (with causal graph $G$) such that $\mathbf{I}$ is not interventionally superior to $\overline{\mathbf{I}} \setminus \{Y\}$ in $\cC$, \emph{i.e.} $\mathbf{I} \not\detsup_{\cC,Y} \overline{\mathbf{I}} \setminus \{Y\}$.
  In other words, we want to prove that:
  \begin{equation}
    \begin{split}
      \forall &\text{ DAG }G = (\mathbf{V}, E), \forall Y\in \mathbf{V}, \forall \mathbf{I} \subsetneq \LinftyY, \\
        &\exists \text{ SCM } \mathcal{\cC} \suchthat G^{\mathcal{\cC}}=G \ \text{ and }\  \mathbf{I} \not\detsup_{\cC,Y} \overline{\mathbf{I}} \setminus \{Y\}.
    \end{split}
  \end{equation}
  Let $G$ be a DAG, $Y$ be a node of $G$ and $\mathbf{I}$ a proper subset of $\LinftyY$.
  Take $B$ a minimal element of $\Linfty(\PA(Y)) \setminus \mathbf{I}$ with respect to $\anpo$.
  In particular, $B \in \overline{\mathbf{I}} \setminus \{Y\}$.
  We will show that there is an SCM $\cC$ in which no element of $\mathbf{I}$ is interventionally superior to $B$, thus proving that $\mathbf{I} \not\detsup_{\cC,Y} \overline{\mathbf{I}} \setminus \{Y\}$, and hence $\mathbf{I} \not\detsup_{Y} \overline{\mathbf{I}} \setminus \{Y\}$.
  We will divide the proof in two cases: $B \in \PA(Y)$ and $B \in \LinftyY \setminus \PA(Y)$.\\
  Assume $B \in \PA(Y)$.
  We can construct an SCM with causal graph $G$ as follows:
  \begin{equation}
    \label{eq:scm_b_in_pay}
    \begin{cases}
      f_{Y}(\PA(Y), N_{Y}) = 2B + \heavyside\left(\sum_{W \in \PA(Y) \setminus \{B\}} W\right) + N_{Y}\\
      f_{B}(\PA(B), N_{B}) = N_{B} \cdot \left(1 - \heavyside\left(\sum_{W \in \PA(B)} W\right)\right) \\
      f_{V\ne Y,B}(\PA(V), N_{V}) = \heavyside\left(\sum_{W \in \PA(V)} W\right) + N_{V} \\
      N_{V \ne B} \sim \delta(0) \\
      N_{B} \sim \mathrm{Ber}(\nicefrac{1}{2})
    \end{cases},
  \end{equation}
  where all endogenous variables are binary except for $Y$ (whose range is $\mathbb{N}$), and all exogenous variables are simply zero except for $N_{B}$, which is also binary.
  The idea is that $B$ has a stronger influence on $Y$ than all the other parents of $Y$ combined, and there are values of $\mathbf{n}$ (namely whenever $N_{B} = 0$) for which $B$ is not influenced by other variables.
  We need to show that, for all $X \in \mathbf{I}$, there is $\mathbf{n} \in R_{\mathbf{N}}$ such that
  \begin{equation}
    \label{eq:x_not_int_superior}
    \max_{x\in R_{X}} \bar{f}_{Y}^{\dop(X=x)}(\mathbf{n}) < \max_{b\in R_{B}} \bar{f}_{Y}^{\dop(B=b)}(\mathbf{n}).
  \end{equation}
  Notice that $R_{\mathbf{N}} = \{\mathbf{0}, \mathbf{e}_{N_{B}}\}$, where $\mathbf{e}_{N_{B}}$ is zero everywhere except for the $N_{B}$ element, which is $1$.
  Let $X\in \mathbf{I}$ and choose $\mathbf{n} = \mathbf{0}$.
  We have $\max_{b\in\{0,1\}}\bar{f}_{Y}^{\dop(B=b)}(\mathbf{0}) = \bar{f}_{Y}^{\dop(B=1)}(\mathbf{0}) = 2 + \heavyside\left(\sum_{W \in \PA(Y)\setminus \{B\}} W\right) \ge 2$.
  Furthermore:
  \begin{equation}
    \begin{split}
      \max_{x\in\{0,1\}}\bar{f}_{Y}^{\dop(X=x)}(\mathbf{0}) &= \max_{x\in\{0,1\}} \left(2n_{B}\cdot \left( 1 - \heavyside\left(\sum_{W \in \PA(B)} W\right) \right) + \heavyside\left(\sum_{W \in \PA(Y)\setminus\{B\}} W\right)\right) \\
        &= \max_{x\in\{0,1\}} \left(0 + \heavyside\left(\sum_{W \in \PA(Y)\setminus\{B\}} W\right)\right) \\
        &\le 1 < 2 \le \max_{b\in\{0,1\}}\bar{f}_{Y}^{\dop(B=b)}(\mathbf{0}).
    \end{split}
  \end{equation}

  \begin{minipage}[c]{0.74\textwidth}
  This proves the result for $B\in \PA(Y)$.\\
  Assume now that $B\in \LinftyY\setminus\PA(Y)$.
  From \Cref{thm:simple_graphical_charact}, there are nodes $A_{1}, A_{2} \in \PA(Y)$ which are reachable from $B$ by paths $\pi_{1},\pi_{2}$ which only intersect at $B$.
  Denote by $\prev_{i}$ the operator which, given a node $A$ in the path $\pi_{i}$ different from $B$, outputs the previous node in that path.
  We construct an SCM with causal graph $G$ as follows:
  \end{minipage}
  \begin{minipage}[c]{0.25\textwidth}
    \centering
    \begin{tikzpicture}[mynode/.style={inner sep=2pt,minimum size=0cm},>=latex] %
      \node[mynode] (b) at (0,1.1) {$B$};
      \node[mynode] (a1) at (-0.5,0) {$A_{1}$};
      \node[mynode] (a2) at (0.5,0) {$A_{2}$};
      \node[mynode] (y) at (0,-0.8) {$Y$};

      \draw[->, dotted, bend left=90] (b) -- node[pos=0.9,above,inner sep=8pt] {\tiny$\pi_{1}$} (a1);
      \draw[->, dotted] (b) -- node[pos=0.9,above,inner sep=8pt] {\tiny$\pi_{2}$} (a2);
      \draw[->] (a1) -- (y);
      \draw[->] (a2) -- (y);
    \end{tikzpicture}
  \end{minipage}
  \begin{equation}
    \begin{cases}
      f_{Y}(A_{1}, A_{2}, \PA_{Y}\setminus\{A_{1}, A_{2}\}, N_{Y}) = 2A_{1} \cdot A_{2} + \heavyside\left(\sum_{W \in \PA(Y) \setminus \{A_{1},A_{2}\}} W\right) + N_{Y}\\
      f_{B}(\PA(B), N_{B}) = N_{B} \cdot \left(1 - \heavyside\left(\sum_{W \in \PA(B)} W\right)\right) \\
      f_{A\in\pi_{i}\setminus\{B\}}(\prev_{i}(A), \PA(A)\setminus\{\prev_{i}(A)\}, N_{A}) = \prev_{i}(A) + N_{A}\heavyside\left(\sum_{W \in \PA(A)\setminus\{\prev_{i}(A)\}} W\right) \\
      f_{V\notin \pi_{1}\cup\pi_{2}\cup\{Y\}}(\PA(V), N_{V}) = \heavyside\left(\sum_{W \in \PA(V)} W\right) + N_{V} \\
      N_{V \notin \pi_{i}} \sim \delta(0) \\
      N_{B}, N_{A\in \pi_{i}} \sim \mathrm{Ber}(\nicefrac{1}{2})
    \end{cases},
  \end{equation}
  where all endogenous variables except $Y$ are ternary, and all exogenous variables are zero except for those of the type $N_{A}, A\in \pi_{i}$, which are binary.
  Let $X\in \mathbf{I}$.
  We again need to show that there is $\mathbf{n}\in R_{\mathbf{n}}$ such that \Cref{eq:x_not_int_superior} holds.
  One again chooses the setting $\mathbf{n} = \mathbf{0}$.
  The intuition behind this SCM is similar to that of \Cref{eq:scm_b_in_pay}, with the added property that the elements of the paths $\pi_{i}$ are simply noisy copies of $B$, and perfect copies when $\mathbf{N} = \mathbf{0}$.
  In particular, $A_{i}=B, i\in \{1,2\}$, or, using the language of unrolled assignments, $\bar{f}_{A_{i}}(\mathbf{0}) = \bar{f}_{B}(\mathbf{0})$.
  Notice that $\bar{f}_{A_{1}}(\mathbf{0})\cdot \bar{f}_{A_{2}}(\mathbf{0}) = \bar{f}_{B}(\mathbf{0})^{2} = \bar{f}_{B}(\mathbf{0})$, since $B$ is binary.
  These equalities still hold in the SCMs resulting from atomically intervening on $B$.
  Hence:
  \begin{equation}
    \bar{f}_{Y}^{\dop(B=b)}(\mathbf{0}) = 2b + \heavyside\left(\sum_{W \in \PA(Y)\setminus\{A_{1},A_{2}\}} \bar{f}_{W}(\mathbf{0}) \right) \ge 2b.
  \end{equation}
  Now, if $X=A \in \pi_{1}\setminus \{B\}$, then $A_{1}$ is a perfect copy of $A$ while $A_{2}$ is still a perfect copy of $B$.
  Hence:
  \begin{equation}
    \begin{split}
      \bar{f}_{Y}^{\dop(A=a)}(\mathbf{0}) &= 2\underbrace{\bar{f}_{A_{1}}^{\dop(A=a)}(\mathbf{0})}_{a} \cdot \underbrace{\bar{f}_{A_{2}}^{\dop(A=a)}(\mathbf{0})}_{0} + \heavyside\left(\sum_{W \in \PA(Y)\setminus \{A_{1}, A_{2}\}} \bar{f}_{W}(\mathbf{0})\right) \\
          &= \heavyside\left(\sum_{W \in \PA(Y)\setminus \{A_{1}, A_{2}\}} \bar{f}_{W}(\mathbf{0})\right) \le 1 < 2 \le \bar{f}_{Y}^{\dop(B=1)}(\mathbf{0}) \\
    \end{split}.
  \end{equation}
  The same argument holds if $X=A \in \pi_{2}\setminus \{B\}$.\\
  Finally, if instead $X \notin \pi_{1}\cup\pi_{2}\cup\{Y\}$, then $\bar{f}_{Y}^{\dop(X=x)}(\mathbf{0}) = 0 + \heavyside\left(\sum_{X \in \PA(Y)\setminus \{A_{1}, A_{2}\}} \bar{f}_{X}(\mathbf{0})\right) \le 1 < 2 \le \bar{f}_{Y}^{\dop(B=1)}(\mathbf{0})$, where the first equality holds because, for $\mathbf{n} = \mathbf{0}$, intervening on $X$ does not affect the elements of the $\pi_{i}$, including the $A_{i}$.

  \partialqed{(ii)}
\end{proof}

\section{C4 Proofs}
\label{sec:app-c4-proofs}

\begin{definition}\label{def:extra}
We define the following additional notation and terminology.
\begin{itemize}
\item For any set of nodes $\mathbf{B}$ and any node $V'$, a path $\pi_{V'}$ that ends in $V'$ is {\it uninterrupted} by $\mathbf{B}$ iff $(\pi_{V'} \cap \mathbf{B})\backslash \{V'\} = \emptyset$.
\item A $\Lambda$-structure which consists of a single node is called {\it degenerate}.
\item For any set of nodes $\mathbf{B}$, any $\Lambda$-structure over $(\mathbf{B},\mathbf{B})$ is referred to as a $\Lambda_\mathbf{B}$-structure.
\item $U \stackrel{\pi_{U}}{\dashleftarrow} V \stackrel{\pi_{W}}{\dashrightarrow} W$ denotes a $\Lambda$-structure $(V,\pi_{U},\pi_{W})$ with paths $\pi_{U}: V\dashrightarrow U$, $\pi_{W}: V\dashrightarrow W$.
If the paths' names are not relevant or clear from the context, we write simply $U \dashleftarrow V \dashrightarrow W$.
\end{itemize}
\end{definition}

\begin{lemma}\label{lem:closureofclosure}
  Let $G=(\mathbf{V},E)$ be a DAG, $\mathbf{U} \subseteq \mathbf{V}$.
  $V \in \Linfty(\mathbf{U})$ iff there exists a $\Lambda_{\Linfty(\mathbf{U})}$-structure $V' \dashleftarrow V \dashrightarrow V^*$ for some $V',V^* \in \Linfty(\mathbf{U})$.
\end{lemma}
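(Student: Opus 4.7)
The plan is to reduce the statement to an idempotence property of the LSCA closure operator, namely $\Linfty(\Linfty(U)) = \Linfty(U)$, and then invoke the graphical characterization from Theorem~\ref{prop:simple_graphical_charact}. Concretely, by Theorem~\ref{prop:simple_graphical_charact} applied to the set $\Linfty(U)$, the condition ``$v$ sits at the apex of a $\Lambda_{\Linfty(U)}$-structure'' is equivalent to ``$v \in \Linfty(\Linfty(U))$''. So the lemma is equivalent to the equality $\Linfty(U) = \Linfty(\Linfty(U))$.

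The inclusion $\Linfty(U) \subseteq \Linfty(\Linfty(U))$ is immediate: since $\Linfty(U) = \mathcal{L}^{0}(\Linfty(U))$ by definition, every element of $\Linfty(U)$ is already in the zeroth-order layer of the closure of $\Linfty(U)$. (Equivalently, one can use the trivial degenerate $\Lambda$-structure with $v' = v^{*} = v$ and paths $(v)$ for the forward direction.)

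For the reverse inclusion $\Linfty(\Linfty(U)) \subseteq \Linfty(U)$, I would argue that $\Linfty(U)$ is already closed under the $\LSCA$ operator. By definition of $\Linfty$ as $\mathcal{L}^{k^{*}}(U)$ with $\mathcal{L}^{k^{*}}(U) = \mathcal{L}^{k^{*}+1}(U)$, we have $\LSCA(\Linfty(U)) \subseteq \Linfty(U)$. Therefore $\mathcal{L}^{1}(\Linfty(U)) = \LSCA(\Linfty(U)) \cup \Linfty(U) = \Linfty(U)$, and by a trivial induction $\mathcal{L}^{i}(\Linfty(U)) = \Linfty(U)$ for every $i$. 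Hence $\Linfty(\Linfty(U)) = \Linfty(U)$.

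I do not expect any real obstacle here: both directions are essentially bookkeeping once one recognizes that the lemma is restating $\Linfty(\Linfty(U)) = \Linfty(U)$ in the $\Lambda$-structure language. The only subtlety worth flagging explicitly is that the degenerate $\Lambda$-structure (with $v' = v^{*} = v$) is allowed by Definition~\ref{def:lambda_struct} via trivial paths, so the forward direction really is immediate; all the geometric content of the lemma is absorbed into the already-proved Theorem~\ref{prop:simple_graphical_charact}.
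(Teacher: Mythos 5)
Your proposal is correct and follows exactly the paper's route: the paper's proof also observes that $\Linfty(\Linfty(U))=\Linfty(U)$ and then cites Theorem~\ref{prop:simple_graphical_charact} applied to the set $\Linfty(U)$. You merely spell out the idempotence argument (via the fixed-point definition of $\Linfty$) that the paper dismisses as ``easily seen,'' which is a harmless elaboration.
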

\begin{proof}
It is easily seen that $\Linfty (\Linfty (\mathbf{U}))=\Linfty(\mathbf{U})$; therefore, this lemma is a direct corollary of \Cref{thm:simple_graphical_charact}.
\end{proof}

\begin{lemma}[Existence of $\Lambda$-substructure]\label{lem:substructure}
Let $\mathbf{B}$ be a set of nodes, and let $B_1,B_2 \in \mathbf{B}$ s.t. $B_1 \neq B_2$. Let $V \notin \{B_1,B_2\}$ be a node and let $\pi_1:V \dashrightarrow B_1$, $\pi_2:V \dashrightarrow B_2$, s.t. $B_1 \notin \pi_2$ and $B_2 \notin \pi_1$ (note that we do not assume $\pi_1 \cap \pi_2=\{V\}$, meaning that other overlaps remain possible). Then, in the subgraph consisting of the two paths (as in, the graph that includes all the nodes and all the edges that are in at least one of the paths), there exists a $\Lambda_{\mathbf{B}}$-structure $B_1 \dashleftarrow V' \dashrightarrow B_2$ where $V' \in \pi_1 \cap \pi_2$.
\end{lemma}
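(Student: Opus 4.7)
The plan is to pick $v'$ to be the \emph{last} node along $\pi_1$ that also lies on $\pi_2$, and then take the respective tail segments of $\pi_1$ and $\pi_2$ starting from $v'$. Formally, order the nodes of $\pi_1$ by their position along the path (from $v$ to $b_1$); since $v \in \pi_1 \cap \pi_2$ the intersection is nonempty, so the maximum $v'$ of $\pi_1 \cap \pi_2$ under this order is well defined. Define $\pi_1' \ccolon= \pi_1\vert^{v'}$ and $\pi_2' \ccolon= \pi_2\vert^{v'}$. Both paths clearly live inside the subgraph spanned by $\pi_1 \cup \pi_2$.

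Next I verify that $(v',\pi_1',\pi_2')$ is a $\Lambda_B$-structure over $(b_1,b_2)$. First, $v' \notin \{b_1,b_2\}$: since $v' \in \pi_2$ and $b_1 \notin \pi_2$, we get $v' \ne b_1$, and symmetrically $v' \ne b_2$. Hence $\pi_1'$ and $\pi_2'$ are non-trivial paths ending at distinct nodes of $B$. Second, I must show $\pi_1' \cap \pi_2' = \{v'\}$. By maximality of $v'$ in $\pi_1 \cap \pi_2$ under the order along $\pi_1$, no node of $\pi_1$ strictly after $v'$ can lie in $\pi_2$; equivalently $(\pi_1' \setminus \{v'\}) \cap \pi_2 = \emptyset$. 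Because $\pi_2' \subseteq \pi_2$, this yields $\pi_1' \cap \pi_2' \subseteq \{v'\}$, while the reverse inclusion is immediate since $v'$ is the common starting node of both subpaths.

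There is no real obstacle here; the only subtlety worth being precise about is that, as a DAG path, $\pi_1$ is a finite sequence of distinct nodes, so its induced total order is unambiguous and the ``last'' element of $\pi_1 \cap \pi_2$ exists and is unique. With $v'$ chosen this way, the disjointness of the tails is forced by construction, which is precisely why the argument sidesteps having to reason about further intersections of the two paths.
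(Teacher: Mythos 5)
Your proof is correct and takes essentially the same route as the paper, which defines $v'$ as an $\anpo$-minimal element of $\pi_1 \cap \pi_2$ and takes the two tail segments $\pi_1\vert^{v'}$, $\pi_2\vert^{v'}$; your ``last node of $\pi_1 \cap \pi_2$ along $\pi_1$'' coincides with that choice by acyclicity. The only difference is that you spell out the disjointness-of-tails verification that the paper leaves implicit.
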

\begin{proof}
For $V'$ a minimal element of $\pi_1 \cap \pi_2$ with respect to $\anpo$, $B_1 \stackrel{\pi_1|^{V'}}{\dashleftarrow} V' \stackrel{\pi_2|^{V'}}{\dashrightarrow} B_2$ is a $\Lambda_{\mathbf{B}}$-structure.
\end{proof}

\begin{lemma}\label{lem:connectpath}
	Let $G=(\mathbf{V},E)$ be a DAG, $\mathbf{U} \subseteq \mathbf{V}$, $V \in \An(\mathbf{U})$. $\mathfrak{c}[V]$ is the unique node s.t. a path $\pi_{\mathfrak{c}[V]}\colon V \dasharrow \mathfrak{c}[V]$ exists where $\pi_{\mathfrak{c}[V]} \cap \Linfty (\mathbf{U})=\{\mathfrak{c}[V]\}$
	(if $V$ is its own connector, the path is trivial).
	This is equivalent to: for every node $X \in \Linfty(\mathbf{U})$ and path $\pi_X\colon V \dasharrow X$, $\mathfrak{c}[V]$ is the maximal element of $\pi_X \cap \Linfty(\mathbf{U})$ w.r.t. the ancestor partial order $\anpo$.
\end{lemma}
\begin{proof}
  We prove the claim by induction on a reverse topological order of $\An(\mathbf{U})$. As the base case, for $V \in \mathbf{U}$, definitionally $V \in \Linfty(\mathbf{U})$, so $\mathfrak{c}[V]=V$, and we can just take the trivial path.
  Next, let $V \in \An(\mathbf{U}) \backslash \mathbf{U}$, and note that the claim holds for all nodes in $\Ch(V) \cap \An(\mathbf{U})$ by the inductive hypothesis.
  Let $\mathbf{C}$ be as defined in \Cref{def:connector}.
  There are two cases:
\begin{enumerate}
  \item Assume that $|\mathbf{C}|=1$.
        We can write $\mathbf{C}=\{X\}$ for some $X \in \mathbf{V}$.
        Note that $\mathfrak{c}[V]=X$.
        By the inductive assumption, $X$ is the unique element from $\Linfty (\mathbf{U})$ reachable from $V$ via a \emph{non-trivial} path uninterrupted by $\Linfty(\mathbf{U})$, as any non-trivial path must go through a child, and we can apply the inductive assumption to each child.
        However, we still need to rule out the possibility of a trivial path to $\Linfty (\mathbf{U})$, namely to rule out the possibility that $V \in \Linfty(\mathbf{U})$.
        Since $V \notin \mathbf{U}$, then by \Cref{thm:simple_graphical_charact} it is sufficient to rule out the existence of a non-degenerate $\Lambda$-structure from $V$ to $\mathbf{U}$.
        However, as we noted, any non-trivial path from $V$ to $\mathbf{U}$ (and hence to $\Linfty (\mathbf{U})$) must go through $X$, and hence any two paths from $V$ to $\mathbf{U}$ must overlap at $X \neq V$, meaning that they do not form a $\Lambda$-structure.

  \item Assume $|\mathbf{C}| \neq 1$, so $\mathfrak{c}[V]=V$.
        Since $V \in \An(\mathbf{U})\backslash \mathbf{U}$, it must have at least one child in $\An(\mathbf{U})$, and that child has a connector, so $\mathbf{C} \neq \emptyset$ and thus $|\mathbf{C}| \geq 2$.
        We claim that $V \in \Linfty (\mathbf{U})$ (and hence we can simply take the trivial path to establish the result and complete the proof).
        By \Cref{thm:simple_graphical_charact}, we need to establish the existence of a $\Lambda$-structure from $V$ to $\mathbf{U}$.
        Since $|\mathbf{C}|>1$, then there exist $S_1,S_2 \in \mathbf{C}$ s.t.
        $S_1 \neq S_2$, and there exist children $T_1,T_2$ of $V$ s.t.
        $\mathfrak{c}[T_1]=S_1$ and $\mathfrak{c}[T_2]=S_2$; by the inductive assumption, $S_1$ and $S_2$ are in $\Linfty (\mathbf{U})$, and there exist paths $\pi_1\colon T_1 \dashrightarrow S_1$ and $\pi_2\colon T_2 \dashrightarrow S_2$ uninterrupted by $\Linfty (\mathbf{U})$.
        These paths do not overlap: had they overlapped, then by \Cref{lem:substructure} they would've contained a $\Lambda$-substructure $S_1 \dashleftarrow Z \dashrightarrow S_2$ s.t.
        $Z \in \pi_1 \cap \pi_2$, so by \Cref{lem:closureofclosure} $Z \in \Linfty(\mathbf{U})$, making neither $\pi_1$ nor $\pi_2$ uninterrupted by $\Linfty (\mathbf{U})$. Since $T_1$ and $T_2$ are children of $V$, we may prepend the edges $V \rightarrow T_1$ and $V \rightarrow T_2$ to $\pi_1$ and $\pi_2$ respectively and get paths $\pi_1'=V \rightarrow T_1 \dashrightarrow S_1$ and $\pi_2'=V \rightarrow T_2 \dashrightarrow S_2$; since $\pi_1$ and $\pi_2$ do not overlap, these two paths yield a $\Lambda$-structure from $V$ to $L^\infty(\mathbf{U})$, which by \Cref{lem:closureofclosure} implies $V \in L^\infty(\mathbf{U})$.
\end{enumerate}
\end{proof}

\cfourcorrect*
\begin{proof}
Correctness is immediate from \Cref{lem:connectpath}, as it implies $V \in \Linfty(\mathbf{U}) \Leftrightarrow \mathfrak{c}[V]=V$. As for the running time, assume that if the graph is not given in adjacency list representation, we convert it to this representation in $O(|\mathbf{V}|+|E|)$ time. Initialization in C4 is trivially $O(|\mathbf{V}|)$. Computing $\An(\mathbf{U})$ can be done in $O(|\mathbf{V}|+|E|)$ time using BFS or DFS, and reverse topological sorting can be done in $O(|\mathbf{V}|+|E|)$ using Kahn's algorithm. In the loop, for each $v \in \An(\mathbf{U}) \backslash \mathbf{U}$, we go over all outgoing edges from $v$ to compute $C$, which because of the adjacency list representation takes $O(|\Ch(v)|)$ time. In aggregate over the entire operation of the algorithm, computing $C$ takes $O(|E|)$ time overall, as each edge is inspected at most once. The loop runs $O(|\mathbf{V}|)$ times, and all operations in it except the computation of $C$ take $O(1)$ time, so all steps except computing $C$ take at most $O(|\mathbf{V}|)$ time overall. Thus, the running time of the algorithm is $O(|\mathbf{V}|+|E|)$.
\end{proof}

\section{Supplementary Material for Experimental Results}
\label{sec:app-exps}

The results of the experiments testing our search space reduction method are presented in \Cref{fig:fractions-hist} and \Cref{fig:realworld_fractions}, for randomly generated graphs and real-world datasets, respectively.

All real-world datasets come from the \texttt{bnlearn} repository, except for the \texttt{railway} dataset, which was provided by ProRail, the institution responsible for traffic control in the Dutch railway system.
The \texttt{bnlearn} repository can be found in \url{https://www.bnlearn.com/bnrepository/}.
It was created by Marco Scutari as part of the \texttt{bnlearn} R package, and it is licensed under the Creative Commons Attribution-ShareAlike 3.0 License (CC BY-SA 3.0).

The \texttt{railway} dataset consists of a graph whose nodes represent train delays in a segment of the Dutch railway system, measured at specific ``points of interest'' (such as train stations).
Each node is labeled with a code identifying the train, an acronym for the point of interest, a letter indicating the train's activity at that location—arriving (A), departing (V), or passing through (D)—and the planned time for that activity.
Arrows are drawn between delay nodes that are known to influence each other. For example, arrows connect nodes of the same train at consecutive times, as the delay of a train at time \( t \) will influence its delay at \( t + \Delta t \). Additionally, arrows may connect nodes corresponding to train activities sharing the same platform, since a train must wait for the preceding train to vacate the platform before using it.
This dataset can be found in the code repository.

The two search space reduction experiments -- on both random and real-world graphs -- were completed in a few minutes on a CPU-based laptop with 16 GB RAM and 512 GB SSD storage.
The experiments evaluating the impact of our method on conditional intervention bandits using the \texttt{bnlearn} models \texttt{asia}, \texttt{sachs} and \texttt{child} were also run on the same laptop.
However, the most complex model, \texttt{pathfinder}, exceeded the laptop’s memory capacity due to the large number of possible contexts generated from combinations of values assigned to ancestor nodes.
Thus, the \texttt{pathfinder} experiment was run on an internal server with 1 TB RAM (350 GB allocated for the job) and 23 TB SSD storage (our codebase used 1.5 GB).
All experiments used CPU-only compute workers. The \texttt{asia}, \texttt{sachs}, and \texttt{child} models completed in approximately 12 hours on the laptop using 2 CPU cores in parallel, while the \texttt{pathfinder} experiment took about 16 hours using 27 CPU cores in parallel on the server.

\begin{figure}[h]
  \centering
  \includegraphics[width=\textwidth]{./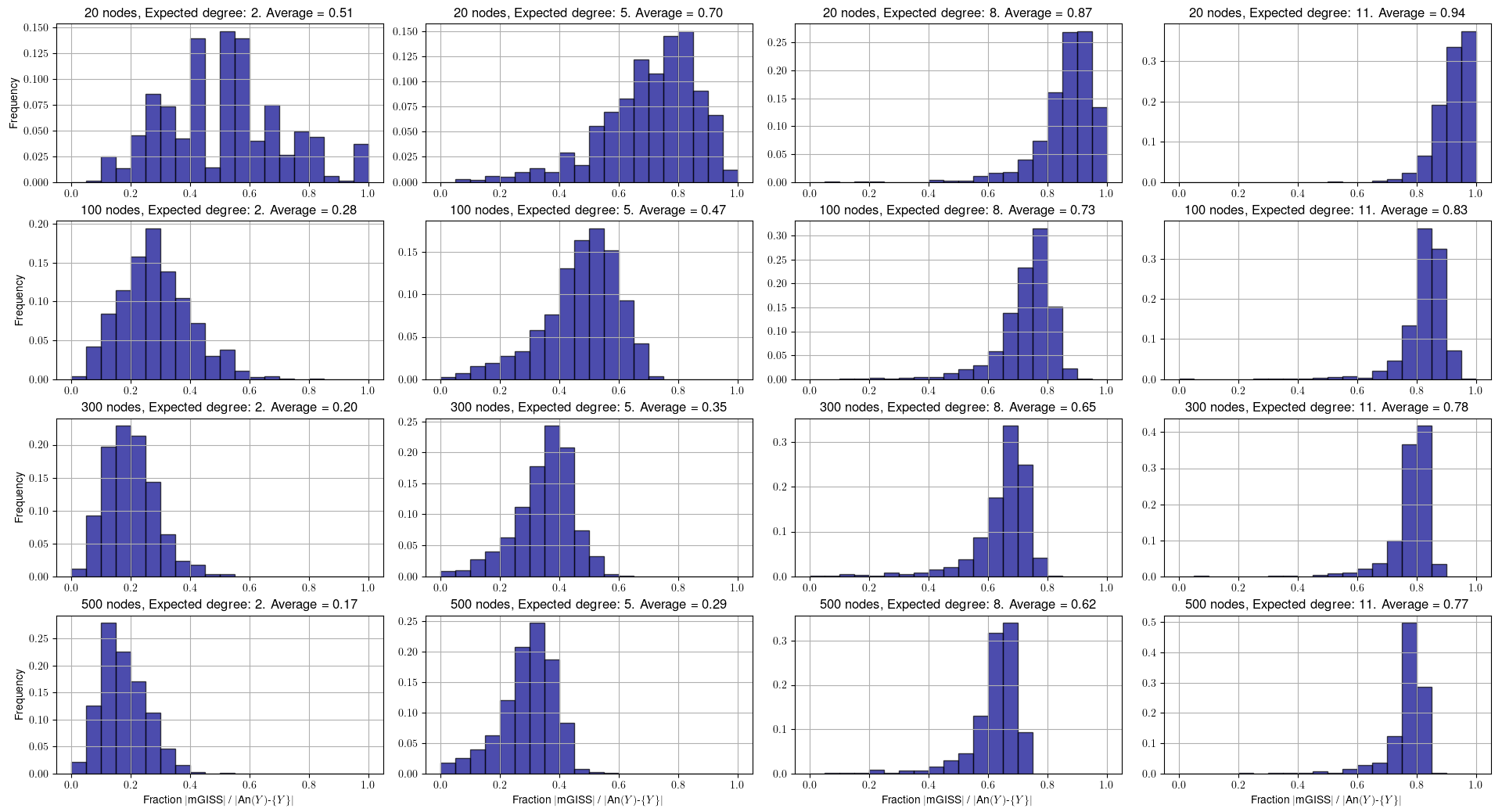}
  \caption{Fraction of nodes remaining after applying our search space filtering procedure, on random graphs. $1000$ graphs were generated for each pair $($number of nodes, expected degree$)$. The impact of our method decreases with the expected degree, and increases with the number of nodes.}
  \label{fig:fractions-hist}
\end{figure}

\begin{figure}[h]
  \centering
  \includegraphics[width=\textwidth]{./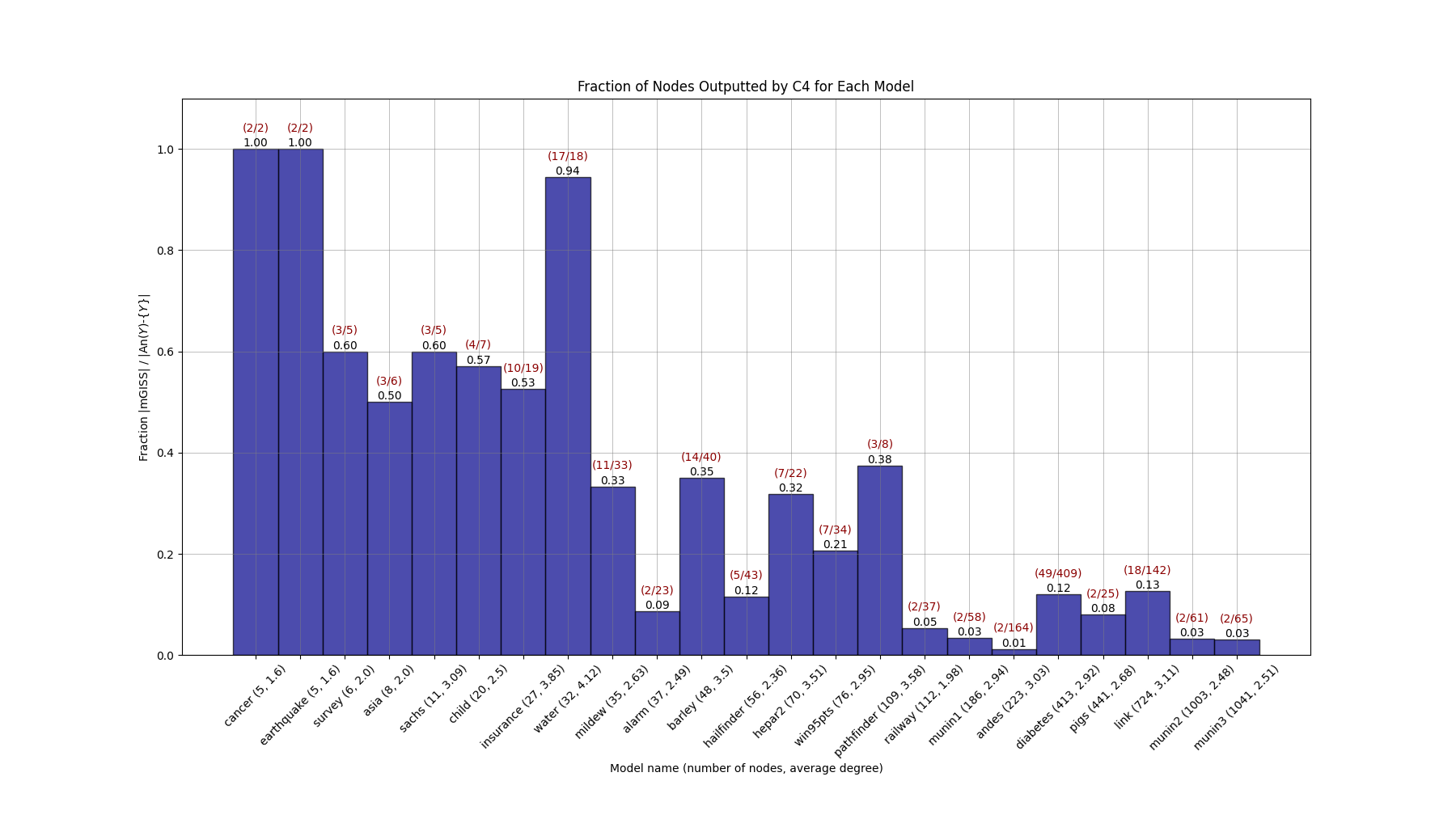}
  \caption{Fraction of nodes remaining after applying our search space filtering procedure, on real-world graphs. All models come from the \texttt{bnlearn} repository except for the \texttt{railway} model. The models are sorted by their \emph{total} number of nodes. On top of each bar one can read the fraction value (in black) and the exact numbers \texttt{(number of nodes in mGISS / number of proper ancestors of Y)} in red. Notice that models with larger numbers of nodes tend to benefit more from our method.}
  \label{fig:realworld_fractions}
\end{figure}

\end{document}